\documentclass[final,12pt]{alt2022} % Anonymized submission 
%\documentclass[12pt]{alt2022} % Include author names

% The following packages will be automatically loaded:
% amsmath, amssymb, natbib, graphicx, url, algorithm2e
%\newtheorem{lemma}{Lemma}
%\newtheorem{definition}{Definition}
%\newtheorem{theorem}{Theorem}
%\newtheorem{corollary}{Corollary}

%\newtheorem{proposition}{Proposition}

\newtheorem*{lemma*}{Lemma}
\newtheorem*{definition*}{Definition}
\newtheorem*{theorem*}{Theorem}
\newtheorem*{corollary*}{Corollary}
\newtheorem*{note*}{Note}
\newtheorem*{property*}{Property}
\newtheorem*{assumption*}{Assumption}
\newtheorem*{caution*}{Caution}
\newtheorem*{proposition*}{Proposition}

\newcommand{\w}{\mathbf{w}}
\newcommand{\g}{\mathbf{g}}
\newcommand{\bv}{\mathbf{v}}

\newcommand{\x}{\mathbf{x}}

\newcommand{\br}{\mathbf{r}}
\newcommand{\ba}{\mathbf{a}}

\newcommand{\0}{\mathbf{0}}
\newcommand{\z}{\mathbf{z}}

\newcommand{\R}{\mathbb{R}}

\newcommand{\C}{\mathcal C}
\renewcommand{\L}{\mathcal L}

\newcommand{\balpha}{\boldsymbol \alpha}

\newcommand{\bell}{\boldsymbol{\ell}}
\newcommand{\btheta}{\boldsymbol \theta}
\title[Inductive Bias of Weight Normalization]{Inductive Bias of Gradient Descent for Weight Normalized Smooth Homogeneous Neural Nets}
\usepackage{times}
\usepackage{enumitem}
% \usepackage{subcaption}
%\usepackage{amsthm}

% Use \Name{Author Name} to specify the name.
% If the surname contains spaces, enclose the surname
% in braces, e.g. \Name{John {Smith Jones}} similarly
% if the name has a "von" part, e.g \Name{Jane {de Winter}}.
% If the first letter in the forenames is a diacritic
% enclose the diacritic in braces, e.g. \Name{{\'E}louise Smith}

% Two authors with the same address
% \coltauthor{\Name{Author Name1} \Email{abc@sample.com}\and
%  \Name{Author Name2} \Email{xyz@sample.com}\\
%  \addr Address}

% Three or more authors with the same address:
% \coltauthor{\Name{Author Name1} \Email{an1@sample.com}\\
%  \Name{Author Name2} \Email{an2@sample.com}\\
%  \Name{Author Name3} \Email{an3@sample.com}\\
%  \addr Address}

% Authors with different addresses:
\altauthor{%
 \Name{Depen Morwani} \Email{depenmorwani@gmail.com}\\
 \Name{Harish G. Ramaswamy} \Email{hariguru@cse.iitm.ac.in}\\
 \addr {Department of Computer Science and Engineering, and RBCDSAI,  \\
 Indian Institute of Technology Madras, India.}
}

\begin{document}

\maketitle

\begin{abstract}%
  We analyze the inductive bias of gradient descent for weight normalized smooth homogeneous neural nets, when trained on exponential or cross-entropy loss. We analyse both standard weight normalization (SWN) and exponential weight normalization (EWN), and show that the gradient flow path with EWN is equivalent to gradient flow on standard networks with an adaptive learning rate. We extend these results to gradient descent, and establish asymptotic relations between weights and gradients  for both SWN and EWN. We also show that EWN causes weights to be updated in a way that prefers asymptotic relative sparsity. For EWN, we  provide a finite-time convergence rate of the loss with gradient flow and a tight asymptotic convergence rate with gradient descent. We demonstrate our results for SWN and EWN on synthetic data sets. Experimental results on simple datasets support our claim on sparse EWN solutions, even with SGD. This demonstrates its potential applications in learning neural networks amenable to pruning.
\end{abstract}

\begin{keywords}%
  Deep Learning Theory, Inductive Bias, Gradient Descent, Weight Normalization
\end{keywords}

\section{Introduction}

The prevailing hypothesis for explaining the generalization ability of deep neural nets, despite their ability to fit even random labels \citep{Zhang+17}, is that the optimisation/training algorithms such as gradient descent have a `bias' towards `simple' solutions. This property is often called inductive bias, and has been an active research area over the past few years.

% This has led to an interest in understanding the inductive bias\citep{Neyshabur+15} of the training algorithm that drives the network towards low complexity solutions. The inductive bias of gradient descent (GD) and stochastic gradient descent (SGD) for standard logistic regression have been explored in great detail \citep{Soudry+18, Gunasekar+18a, Nacson+19c}. These results have  been extended partially to deep linear nets \citep{JiTelgrasky19a} and deep homogeneous nets \citep{LyuLi20}. 

It has been shown that gradient descent does indeed seem to prefer `simpler' solutions over more `complex' solutions, where the notion of complexity is often problem/architecture specific. The predominant line of work typically shows that gradient descent prefers a least norm solution in some variant of the $L_2$-norm. This is satisfying, as gradient descent over the parameters abides by the rules of $L_2$ geometry, i.e.  the weight vector moves along direction of steepest descent, with length measured using the Euclidean norm. However, there is nothing special about the Euclidean norm in the parameter space, and hence several other notions of `length' and `steepness' are equally valid. In recent years, several alternative parameterizations of the weight vector, such as Batch normalization and Weight normalization, have seen immense success and these do not seem to respect $L_2$ geometry in the `weight space'. We pose the question of inductive bias of gradient descent for some of these parameterizations, and demonstrate interesting inductive biases. In particular, it can still be argued that gradient descent with these reparameterizations prefers simpler solutions, but the notion of complexity is different. 

\textbf{Our Contributions}.
The main contributions of the paper are as follows.
\begin{itemize}
    \item We establish that the gradient flow path with exponential weight normalization is equal to the gradient flow path of an unnormalized network using an adaptive neuron dependent learning rate.  This provides a crisp description of the difference between exponential weight normalized networks and unnormalized networks.
    
    \item While most of the previous works on the inductive bias of non-linear deep learning architectures work under the assumption of directional convergence of weights and gradients, we show that gradient convergence implies weight convergence, even for gradient descent, for both standard and exponentially weight normalized network (which is not homogeneous in its parameters)\footnote{The assumptions about weights and gradients converging in direction have been recently shown to hold for gradient flow on homogeneous neural nets without normalization under some regularity conditions related to o-minimality of the architecture\citep{JiTelgarsky20}}.
    
    \item We establish the asymptotic relations between weights and gradients for gradient descent on  standard weight normalized and exponentially weight normalized networks and show that exponential weight normalization is likely to lead to asymptotic sparsity in weights. We demonstrate the relative sparsity of exponential weight normalization on MNIST dataset, by showing that it leads to networks with better pruning efficacy.
    
    \item We establish finite-time convergence rates for gradient flow and tight asymptotic convergence rates for gradient descent on exponentially weight normalized networks.
    
    % \item We provide tight asymptotic convergence rates for exponentially weight normalized networks.
\end{itemize}

\section{Related Work}
The literature most closely related to this paper can be broadly classified into two categories - the inductive biases established for neural networks, and the theoretical studies of normalization methods in 
deep learning.
\subsection{Inductive Bias}
 \citet{Soudry+18} showed that gradient descent(GD) on the logistic loss with linearly separable data converges to the $L_2$ maximum margin solution for almost all datasets. These results were extended to loss functions with super-polynomial tails in  \citet{Nacson+19a}. \citet{Nacson+19c} extended these results to hold for stochastic gradient descent(SGD) and \citet{Gunasekar+18a} extended the results for other optimization geometries. \citet{JiTelgarsky19c} provided tight convergence bounds in terms of dataset size as well as training time. \citet{JiTelgarsky19b} provide similar results when the data is not linearly separable.
 
 \citet{JiTelgrasky19a} showed that for deep linear nets, under certain conditions on the initialization, for almost all linearly separable datasets, the network, in function space, converges to the maximum margin solution. \citet{Gunasekar+18b} established that for linear convolutional nets, under certain assumptions regarding convergence of gradients etc, the function converges to a KKT point of the maximum margin problem in fourier space. \citet{Nacson+19b} shows that for smooth homogeneous nets, the network converges to a KKT point of the maximum margin problem in parameter space. \citet{LyuLi20} established these results with weaker assumptions and also provide asymptotic convergence rates for the loss. \citet{ChizatBach20} explore the inductive bias for a 2-layer infinitely wide ReLU neural net in function space and show that the function learnt is a max-margin classifier for variation norm. \cite{Moroshko+20} established the inductive bias for linear diagonal networks and showed that the network transitions between maximum $L_2$-norm and $L_1$-norm margin, depending on the relation between the initialization scale and training accuracy.
 
% \begin{figure}[t]
% \floatconts{neighborhoods}{\caption{$L_2$ neighborhoods with $\epsilon = 0.5$ radius in parameter space for different parameterizations. For EWN in the left (resp. SWN in the middle) the parameter $[\gamma,\bv]$ (the parameter $[\alpha, \bv]$ resp.) is restricted to a 3-d ball of radius $\epsilon$ and the values that the 2-d weight vector $\w$ takes is illustrated for 6 different centers.}}
% {\subfigure[EWN]{
%   \centering
%   \includegraphics[width=0.3\textwidth]{plots/rich_richer/exp-WN_0.5.pdf}
%   }
%   \subfigure[SWN]{
%   \centering
%   \includegraphics[width=0.3\textwidth]{plots/rich_richer/gamma-WN_0.5.pdf}
%   }
%   \subfigure[Unnorm]{
%   \centering
%   \includegraphics[width=0.3\textwidth]{plots/rich_richer/no-WN_0.5.pdf}
%   }
%   }
% \end{figure}

 \subsection{Normalization} 
 \citet{SalimansKnigma16} introduced weight normalization and demonstrated that it replicates the convergence speedup of BatchNorm. Similarly, other normalization techniques have been proposed as well\citep{Ba+16}\citep{Qiao+20}\citep{Li+19}, but only a few have been theoretically explored. \citet{Santurkar+18} demonstrated that batch normalization makes the loss surface smoother and $L_2$ normalization in batchnorm can even be replaced by $L_1$ and $L_\infty$ normalizations. \citet{Kohler+19} showed that for GD, batchnorm speeds up convergence in the case of GLM by splitting the optimization problem into learning the direction and the norm. \citet{Cai+19} analyzed GD on BN for squared loss and showed that it converges for a wide range of lr. \citet{Bjorck+18} showed that the primary reason BN allows networks to achieve higher accuracy is by enabling higher learning rates. \citet{Arora+19} showed that in case of GD or SGD with batchnorm, lr for scale-invariant parameters does not affect the convergence rate towards stationary points. \citet{Du+18} showed that for GD over one-hidden-layer weight normalized CNN, with a constant probability over initialization, iterates converge to global minima. \citet{Qiao+19} compared different normalization techniques from the perspective of whether they lead to points, where neurons are consistently deactivated. \citet{Wu+20} established the inductive bias of gradient flow with weight normalization for overparameterized least squares and showed that for a wider range of initializations as compared to normal parameterization, it converges to the minimum $L_2$ norm solution. \citet{Dukler+20} analyzed weight normalization for multilayer ReLU net in the infinite width regime and showed that it may speedup convergence. Some other papers\citep{Luo+19, Roburin+20} also provide other perspectives to think about normalization techniques.  

\section{Problem Setup}
We use a standard view of neural networks as a collection of nodes/neurons grouped by layers. Each node $u$ is associated with a weight vector $\w_u$, that represents the incoming weight vector for that node. In case of CNNs, weights can be shared across different nodes. $\w$ represents all the parameters of the network arranged in form of a vector (In general, for any vector $\bv$ associated with the entire network, $\bv_u$ represent its components corresponding to the node $u$). The training dataset consists $(\x_i, y_i)$ pairs with a total of $m$ points in the dataset. The function represented by the neural network is denoted by $\Phi(\w, .)$. The loss for a single data point $\x_i$ is given by $\ell(y_i, \Phi(\w, \x_i))$ and the loss vector is represented by $\bell$. The overall loss is represented by $\L(\w)$ and is given by $\L(\w) = \sum_{i=1}^m \ell(y_i, \Phi(\w, \x_i))$. We sometimes abbreviate $\L(\w(t))$ as $\L$ when the context is clear.

In standard weight normalisation (SWN), each weight vector $\w_u$ is reparameterized as $\gamma_u \frac{\bv_u}{\| \bv_u \|}$. This was proposed by \citet{SalimansKnigma16}, as a substitute for Batch Normalization and has been practically used in multiple papers such as \citet{Sokolic+17}, \citet{Dauphin+17}, \citet{Jin-Hwa+18} and \citet{Hiebler+18}. The corresponding update equations for gradient descent are given by
\begin{align}
\gamma_u(t+1) &= \gamma_u(t) - \eta(t)\frac{\bv_u(t)^\top \nabla_{\w_u} \L}{\| \bv_u(t) \|}  \label{eq:gamma:SWN:gd}\\
\bv_u(t+1) &= \bv_u(t) - \eta(t)\frac{\gamma_u(t)}{\| \bv_u(t) \|}\left(I - \frac{\bv_u(t)\bv_u(t)^\top}{\| \bv_u(t) \|^2}\right)\nabla_{\w_u} \L \label{eq:v:SWN:gd}
\end{align}
In exponential weight normalisation (EWN), each weight vector $\w_u$ is reparameterized as $e^{\alpha_u}\frac{\bv_u}{\| \bv_u \|}$. This was mentioned in \citet{SalimansKnigma16}, but to the best of our knowledge, has not been widely used. The corresponding update equations for gradient descent with learning rate $\eta(t)$ are given by
\begin{align}
\alpha_u(t+1) &= \alpha_u(t) - \eta(t)e^{\alpha_u(t)} \frac{\bv_u(t)^\top \nabla_{\w_u} \L}{\| \bv_u(t) \|} \label{eq:alpha:EWN:gd}\\
\bv_u(t+1) &= \bv_u(t) - \eta(t)\frac{e^{\alpha_u(t)}}{\| \bv_u(t) \|}\left(I - \frac{\bv_u(t)\bv_u(t)^\top}{\| \bv_u(t) \|^2}\right)\nabla_{\w_u} \L \label{eq:v:EWN:gd}
\end{align}

The update equations for gradient flow are the continuous counterparts for the same. In gradient flow, for both SWN and EWN, we set $\| \bv_u(0) \| = 1$, to simplify the update equations.

  %\caption{EWN}
% \end{subfigure}%
% \begin{subfigure}{.33\textwidth}
%   \centering
%   \includegraphics[width=\textwidth]{plots/rich_richer/gamma-WN_0.5.pdf}
%   \caption{SWN}
%   %\label{fig:test2}
% \end{subfigure}
% \begin{subfigure}{.33\textwidth}
%   \centering
%   \includegraphics[width=\textwidth]{plots/rich_richer/no-WN_0.5.pdf}
%   \caption{Unnorm}
%   %\label{fig:test2}
% \end{subfigure}
% \caption{$L_2$ neighborhoods with $\epsilon = 0.5$ radius in parameter space for different parameterizations. For EWN in the left (resp. SWN in the middle) the parameter $[\gamma,\bv]$ (the parameter $[\alpha, \bv]$ resp.) is restricted to a 3-d ball of radius $\epsilon$ and the values that the 2-d weight vector $\w$ takes is illustrated for 6 different centers.}
% %As we can see, with increasing $\| \w \|$, although the neighborhood for NWN is unaffected, but the neighborhood of EWN expands radially. This leads to a 'rich gets richer' phenomenon, where weights having higher norm tend to grow faster, leading to asymptotic relative sparsity} 
% \label{neighborhoods}
% \end{figure}
\section{Inductive Bias of Weight Normalization}

In this section, we state our main results for weight normalized smooth homogeneous models on exponential loss($\ell(y_i, \Phi(\w, \x_i)) = e^{-y_i\Phi(\w, \x_i)}$). The results for cross-entropy loss and proofs have been deferred to the appendix due to space constraints. First, we state the main proposition that helps in establishing these results for EWN.

\begin{theorem}
\label{theorem:1}
 The gradient flow path with learning rate $\eta(t)$ for EWN and SWN are given as follows:
 \begin{align}
 \text{EWN: } \frac{d \w_u(t)}{dt} &= -\eta(t)\| \w_u(t) \|^2 \nabla_{\w_u} \L \label{eq:w:EWN:gf}\\
 \text{SWN: } \frac{d \w_u(t)}{dt}  &= -\eta(t)\left(\|\w_u(t)\|^2 \nabla_{\w_u} \L + \left(\frac{1 - \|\w_u(t)\|^2}{\|\w_u(t)\|^2}\right) (\w_u(t)^\top\nabla_{\w_u} \L) \w_u(t)\right) \label{eq:w:SWN:gf}
 \end{align}
\end{theorem}

Thus, the gradient flow path of EWN can be replicated by an adaptive (neuron dependent) learning rate given by $\eta(t) \| \w_u(t) \|^2$ on the unnormalized network(Unnorm). % These parameterizations also induce different neighborhoods in the parameter space, that have been shown in Figure~\ref{neighborhoods}. 

\subsection{Assumptions}\label{assumptions}
The assumptions in the paper can be broadly divided into loss function/architecture based assumptions and trajectory based assumptions. The loss functions/architecture based assumptions are shared across both gradient flow and gradient descent.

\textbf{Loss function/Architecture based assumptions}
\begin{enumerate}[label=\arabic*]
    \item $\ell(y_i, \Phi(\w, \x_i)) = e^{-y_i\Phi(\w, \x_i)}$
    \item $\Phi(., \x)$ is a $\C^1$ function (i.e. continuously differentiable), for any fixed $\x$
    \item $\Phi(\lambda \w, \x) = \lambda^L \Phi(\w, \x)$, for some $\lambda > 0$ and $L > 0$
\end{enumerate}

\textbf{Gradient flow}. For gradient flow, we make the following trajectory based assumptions
\begin{enumerate}[label=(A\arabic*)]
\item There exists a time $t_0$ such that $\L(\w(t_0)) < 1$.
\item $\lim_{t \to \infty} \frac{ -\nabla_\w \L(\w(t))}{\| \nabla_\w \L(\w(t)) \|} := \widetilde{\g}$.
    % \begin{minipage}{0.5\textwidth}
    % \item $\lim_{t \to \infty} \L(\w(t)) = 0$
    % \item $\lim_{t \to \infty} \frac{\w(t)}{\| \w(t) \|} := \widetilde \w$
    % \end{minipage}
    % \begin{minipage}{0.5\textwidth}
    % \item $\lim_{t \to \infty} \frac{ -\nabla_\w \L(\w(t))}{\| \nabla_\w \L(\w(t)) \|} := \widetilde{\g}$
    % % \item $\lim_{t \to \infty}\frac{\bell(\w(t))}{\| \bell(\w(t)) \|} := \widetilde{\bell}$
    % % \item Let $\rho = \min_i y_i\Phi(\widetilde \w, \x_i)$. Then $\rho > 0$. 
    % \end{minipage}
\end{enumerate}

The first trajectory assumption is simply a separability assumption and means that the network is able to correctly classify the dataset at some point during the training process. This is not a completely unreasonable assumption, given recent papers demonstrating neural networks with sufficient overparameterization can fit even random labels \citep{Zhang+17,Jacot+18}. The second assumption has been used in multiple previous works \citep{Gunasekar+18b,ChizatBach20,Nacson+19b}, and is standard in the literature related to the inductive bias of non-linear deep learning architectures. Moreover, we remove one of the assumptions related to directional convergence of weights used in these works, and instead show that it is implied by the directional convergence of gradients.

\textbf{Gradient Descent}. For gradient descent, we require the learning rate $\eta(t)$ to not grow too fast, and a slightly stronger assumption on loss.
\begin{enumerate}[label=(B\arabic*)]
    \begin{minipage}{0.5\textwidth}
    \item $\lim_{t \to \infty} \L(\w(t)) = 0$
    \end{minipage}
    \begin{minipage}{0.5\textwidth}
    \item $\lim_{t \to \infty} \frac{ -\nabla_\w \L(\w(t))}{\| \nabla_\w \L(\w(t)) \|} := \widetilde{\g}$
    % \item $\lim_{t \to \infty}\frac{\bell(\w(t))}{\| \bell(\w(t)) \|} := \widetilde{\bell}$
    % \item Let $\rho = \min_i y_i\Phi(\widetilde \w, \x_i)$. Then $\rho > 0$. 
    \end{minipage}
    \item $\lim_{t \to \infty} \eta(t) \| \w_u(t) \| \nabla_{\w_u} \L(\w(t)) \| = 0$ for all $u$ in the network.
\end{enumerate}

The assumption (B3) is mild, as the norm of the gradient of the exponential loss goes down exponentially fast as compared to norm of the weights. We demonstrate that these assumptions hold for multiple datasets including MNIST in Appendix \ref{assum:verif}.

\subsection{Asymptotic relations between weights and gradients}
This section contains the main theorems that establish asymptotic relations between weights and gradients for SWN and EWN. First, we will state a common proposition for both SWN and EWN.

\begin{proposition}
\label{prop:lzero:gf}
Under assumption (A1) for gradient flow, for both SWN and EWN, $\lim_{t \to \infty} \L(\w(t)) = 0$.
\end{proposition}
Although the above proposition was established for homogeneous nets by \citet{LyuLi20}\footnote{Homogeneous networks in the $\w$ space are also homogeneous in the $\gamma, \bv$ space. Therefore results  regarding convergence rates and monotonic margin hold from \cite{LyuLi20}. However, the results for convergence to a KKT point of the max margin problem do not hold. For details, refer Appendix \ref{swn:ll}}, we extend it for the non-homogeneous parameterization of EWN. Now, we provide one of our main theorem that establishes gradient convergence implies weight convergence.

\begin{theorem}
\label{thm:weightconv}
    Consider a node $u$ in the network with $\| \widetilde{\g}_u \| > 0$ and $\lim_{t \to \infty} \| \w_u(t) \| = \infty$. Under assumptions (A1), (A2) for gradient flow and (B1)-(B3) for gradient descent, for both SWN and EWN
    \begin{enumerate}[label=(\roman*)]
        \begin{minipage}{0.5\textwidth}
        \item $\lim_{t \to \infty} \frac{\w_u(t)}{\| \w_u(t) \|} := \widetilde{\w}_u$ exists.
        \end{minipage}
        \begin{minipage}{0.5\textwidth}
        \item $\widetilde{\w}_u = \lambda \widetilde \g_u$ for some $\lambda > 0$.
        \end{minipage}
    \end{enumerate}
    % Let $\widetilde \w_u=\lim_{t\to\infty} \frac{\w_u(t)}{\|\w(t)\|}$ and $\widetilde \g_u=\lim_{t\to\infty} \frac{-\nabla_{\w_u} \L(\w(t))}{\|\nabla_{\w} \L(\w(t))\|}$.
    % Under assumptions (A1)-(A3) for gradient flow and (A1)-(A4) for gradient descent, for both SWN and EWN, for all nodes $u$ in the network, $\| \widetilde \w_u \| > 0, \| \widetilde \g_u \| > 0 \implies \widetilde \w_u = \lambda \widetilde \g_u $ for some $\lambda > 0$. 
    % the following holds:
    % \begin{enumerate}[label=(\roman*)]
    %     \item $\lim_{t\to\infty} \frac{-\nabla_{\w} \L(\w(t))}{\|\nabla_{\w} \L(\w(t))\|} = \mu \sum_{i=1}^m \widetilde \ell_i y_i \nabla_\w \Phi(\widetilde \w, \x_i)
    % =  \widetilde \g  $, where $\mu > 0$. 
    
    % \item Let $\widetilde \w_u=\lim_{t\to\infty} \frac{\w_u(t)}{\|\w(t)\|}$ and $\widetilde \g_u=\lim_{t\to\infty} \frac{-\nabla_{\w_u} \L(\w(t))}{\|\nabla_{\w} \L(\w(t))\|}$. Then, $\widetilde \w_u = \lambda \widetilde \g_u $ for some $\lambda\geq 0$
    % \end{enumerate}
    % Additionally, for SWN, we can say 
    % \begin{enumerate}[label=3]
    %     \item $ \| \widetilde \w_u \| = 0, \| \widetilde \g_u \| >0 \implies \lim_{t \to \infty} \bv_u(t)^\top \widetilde \g_u = 0$
    % \end{enumerate}
\end{theorem}
The above theorem relaxes one of the assumptions regarding weight convergence used in many of the previous works, by showing that even for non-homogeneous parameterization under gradient descent, gradient convergence implies weight convergence. Moreover, it also shows, that weights and gradients eventually get aligned opposite to each other.
% The proposition basically states that if weights and gradients for a given node $u$ converge in direction, then eventually they get aligned opposite to each other. 
% We can prove a slightly stronger statement under a mild additional assumption.

% \begin{proposition}
% \label{prop:conv_dir:strong}
%     Assume that there exists at least one node $v$ in the network satisfying $\| \widetilde \w_v \| > 0$ and $\| \widetilde \g_v \| > 0$. Under assumptions (A1)-(A3) for gradient flow and (A1)-(A4) for gradient descent, for both SWN and EWN, $\widetilde \w_u = \lambda \widetilde \g_u$ for some $\lambda \geq 0$. 
%     % the following holds:
%     % \begin{enumerate}[label=(\roman*)]
%     %     \item $\lim_{t\to\infty} \frac{-\nabla_{\w} \L(\w(t))}{\|\nabla_{\w} \L(\w(t))\|} = \mu \sum_{i=1}^m \widetilde \ell_i y_i \nabla_\w \Phi(\widetilde \w, \x_i)
%     % =  \widetilde \g  $, where $\mu > 0$. 
    
%     % \item Let $\widetilde \w_u=\lim_{t\to\infty} \frac{\w_u(t)}{\|\w(t)\|}$ and $\widetilde \g_u=\lim_{t\to\infty} \frac{-\nabla_{\w_u} \L(\w(t))}{\|\nabla_{\w} \L(\w(t))\|}$. Then, $\widetilde \w_u = \lambda \widetilde \g_u $ for some $\lambda\geq 0$
%     % \end{enumerate}
%     % Additionally, for SWN, we can say 
%     % \begin{enumerate}[label=3]
%     %     \item $ \| \widetilde \w_u \| = 0, \| \widetilde \g_u \| >0 \implies \lim_{t \to \infty} \bv_u(t)^\top \widetilde \g_u = 0$
%     % \end{enumerate}
% \end{proposition}
\begin{figure}[t]
\floatconts{demons_lin_sep}{\caption{\textbf{Demonstration of Results for EWN in \texttt{Lin-Sep} experiment: } (a) Evolution of $\| \w_u \|$ - norm of the incoming weights for neuron $u$ (b) Cosine between weights and gradients for neurons 5, 7 and 8. (c) Weight and gradient norms for weights 5, 7 and 8.}}  
% The three graphs are plotted at loss values of $e^{-200}, e^{-250}$ and $e^{-300}$ respectively. At each loss value, for the 3 weights, $\log \| \w_u \| + \log \| \nabla_{\w_u} \L\|$ is approximately same.}}
{
% \subfigure{
%   \includegraphics[width=0.4\textwidth]{plots/lin_sep_dataset/Assumptions/dataset.pdf}
% }
\subfigure{
  \includegraphics[width=0.3\textwidth]{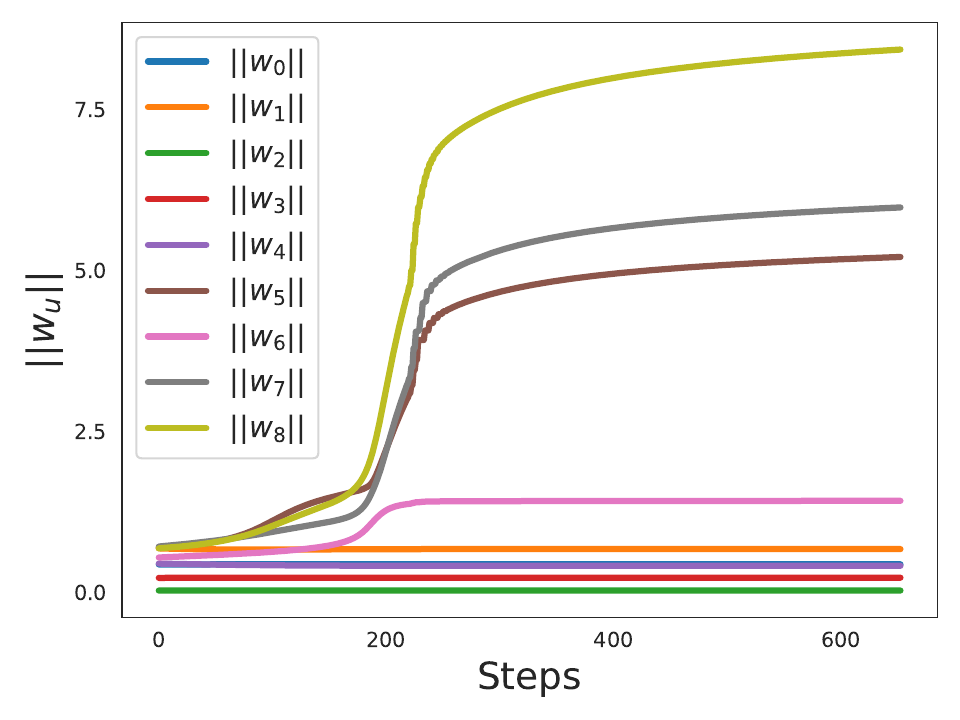}
}
% \subfigure{
%   \includegraphics[width=0.4\textwidth]{plots/lin_sep_dataset/Theorems/l_tilde_grad.pdf}
% }
\subfigure{
  \includegraphics[width=0.3\textwidth]{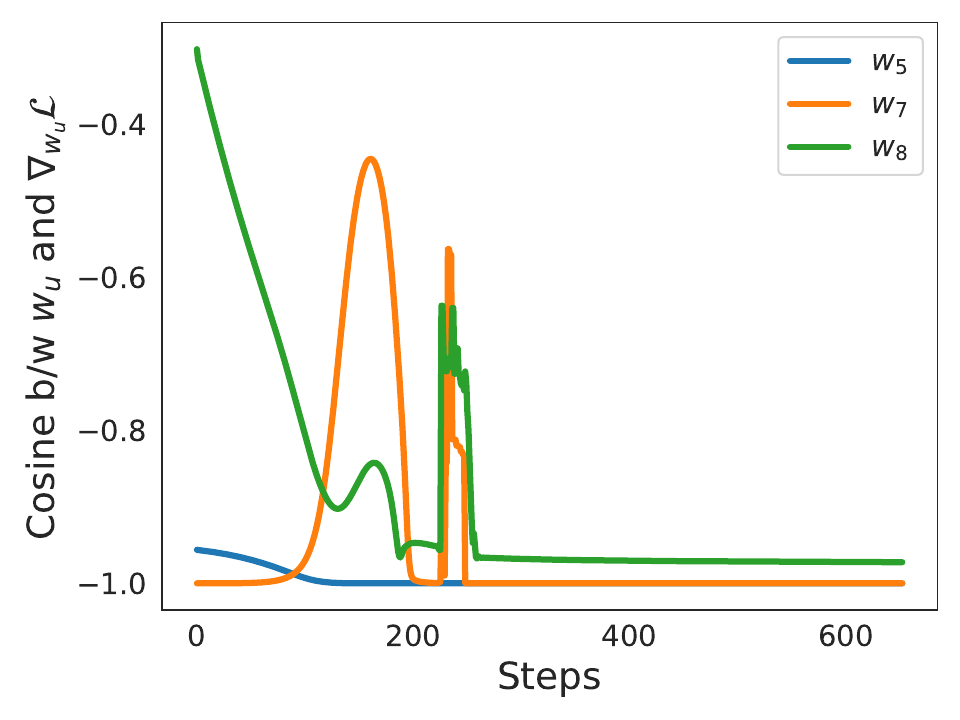}
}
% \subfigure{
%   \includegraphics[width=0.3\textwidth]{plots/lin_sep_dataset/Theorems/weight_grad_ratios_200.pdf}
% }
% \subfigure{
%   \includegraphics[width=0.3\textwidth]{plots/lin_sep_dataset/Theorems/weight_grad_ratios_250.pdf}
%}
\subfigure{
  \includegraphics[width=0.3\textwidth]{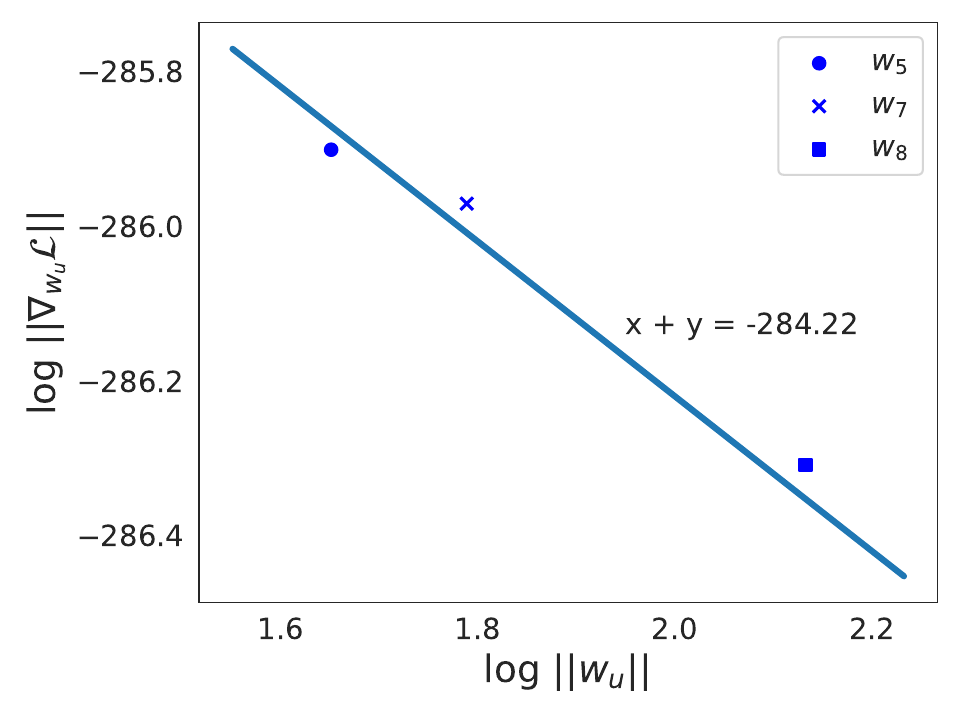}
}
}
\end{figure}

Now, we provide the main theorem that distinguishes SWN and EWN.

\begin{theorem}
\label{thm:weightnorm}
    Consider two nodes $u$ and $v$ in the network with $\| \widetilde \g_u \| \geq \|\widetilde \g_v \|>0, \lim_{t \to \infty} \| \w_u(t) \| = \infty$ and $\lim_{t \to \infty} \| \w_v(t) \| = \infty$. Let $\frac{\| \widetilde \g_u \|}{\| \widetilde \g_v \|}$ be denoted by $c$. Under assumptions (A1), (A2) for gradient flow and (B1)-(B3) for gradient descent,
    % Under assumptions (A1)-(A3) for gradient flow and (A1)-(A4) for gradient descent, for all nodes $u$ and $v$ in the network, the following hold
    \begin{enumerate}[label=(\roman*)]
        \item for SWN, $\lim_{t\to\infty} \frac{\|\w_u(t)\|}{\|\w_v(t)\|}=c $
        \item for EWN, $\lim_{t\to\infty} \frac{\|\w_u(t)\|}{\|\w_v(t)\|}$ is either $0, \infty$ or $\frac{1}{c}$
    \end{enumerate}
\end{theorem}

Thus, if $\frac{\|\w_u(t)\|}{\|\w_v(t)\|}$ converges to a finite non-zero value for EWN, then
$\| \w_u(t) \| \| \nabla_{\w_u} \L(\w(t)) \| = \| \w_v(t) \| \| \nabla_{\w_v} \L(\w(t)) \| = k_1(t)$ asymptotically. While, for SWN, $\frac{\| \w_u(t) \|}{\| \nabla_{\w_u} \L(\w(t)) \|} = \frac{\| \w_v(t) \|}{\| \nabla_{\w_v} \L(\w(t)) \|} = k_2(t)$ asymptotically, where $k_1(t)$ and $k_2(t)$ are independent of $u$ and $v$. The exact conditions under which $\frac{\|\w_u(t)\|}{\|\w_v(t)\|}$ tends to 0 or $\infty$ for EWN are provided in Proposition \ref{prop:3}.

We demonstrate Theorem \ref{thm:weightconv} and Theorem \ref{thm:weightnorm} for EWN on a linearly separable dataset (\texttt{Lin-Sep}) in Figure \ref{demons_lin_sep}. In this experiment, a 2-layered neural network, with 8 neurons in the hidden layer and a ReLU-squared activation function, is trained on a linearly separable dataset. The learning rate schedule used was $O\left(\frac{1}{\L^{0.97}}\right)$ and the network was trained till a loss of $e^{-300}$. As can be seen in Figure \ref{demons_lin_sep}, for weights 5, 7 and 8, whose norms keep on growing, weights and gradients eventually become oppositely aligned, and their norms are inversely proportional to each other. 
The results for SWN, along with results on other datasets including MNIST have been deferred to Appendix \ref{app:swn}.

\subsection{Sparsity Inductive Bias for Exponential Weight Normalisation} \label{subsec:sparsitybiasEWN}

The inverse relation between $\| \w_u(t) \|$ and $\| \nabla_{\w_u} \L(\w(t)) \|$ in the EWN trajectory results in an interesting inductive bias that favours movement along sparse directions. 
% We make this precise by making a a few extra stylistic assumptions in the proposition below.

% \begin{proposition}
% \label{prop:3}
%     Consider two nodes $u$ and $v$ in the network such that $\| \widetilde \g_u \| > 0, \| \widetilde \g_v \| > 0$, and $\| \w_u(t) \|, \| \w_v(t) \| \to \infty$. Let $\frac{\| \widetilde \g_u\|}{\| \widetilde \g_v \|}$ be denoted by $c$. Let $\epsilon, \delta > 0$, then, there always exists a time $t_1$, such that for all $t > t_1$ we have the following properties: Firstly, $\frac{\| \nabla_{\w_u} \L(\w(t))\|}{\| \nabla_{\w_v} \L(\w(t))\|} \in [c - \epsilon, c + \epsilon]$, secondly $\left( \frac{\w_u(t)}{\| \w_u(t) \|}\right)^\top \left(\frac{-\nabla_{\w_u} \L(\w(t))}{\| \nabla_{\w_u} \L(\w(t)) \|}\right) \geq \cos(\delta)$, and thirdly $\left( \frac{\w_v(t)}{\| \w_v(t) \|}\right)^\top \left(\frac{-\nabla_{\w_v} \L(\w(t))}{\| \nabla_{\w_v} \L(\w(t)) \|}\right) \geq \cos(\delta)$. Then the following holds:
%     \begin{itemize}
%         \item for SWN, $\lim_{t \to \infty} \frac{\| \w_u(t) \|}{\| \w_v(t) \|} = c$
%         \item for EWN, if at some time $t_2 > t_1$,
%         \begin{enumerate}[label=(\roman*)]
%             \item $\frac{\| \w_u(t_2) \|}{\| \w_v(t_2) \|} > \frac{1}{(c-\epsilon)\cos(\delta)} \implies \lim_{t \to \infty} \frac{\| \w_u(t) \|}{\| \w_v(t) \|} = \infty$
%             \item $\frac{\| \w_u(t_2) \|}{\| \w_v(t_2) \|} < \frac{\cos(\delta)}{c + \epsilon} \implies \lim_{t \to \infty} \frac{\| \w_u(t) \|}{\| \w_v(t) \|} = 0$
%         \end{enumerate}
%     \end{itemize}
% \end{proposition}

\begin{figure}[t]
\floatconts{1_dim_weight}{\caption{(a) Network architecture for the \texttt{Simple-Traj} experiment . (b) Trajectories of the two weights for EWN and Unnorm, starting from 5 different initialization points.}}  
% \centering
% \begin{subfigure}{0.4\textwidth}
%   \centering
%   \includegraphics[width=\textwidth]{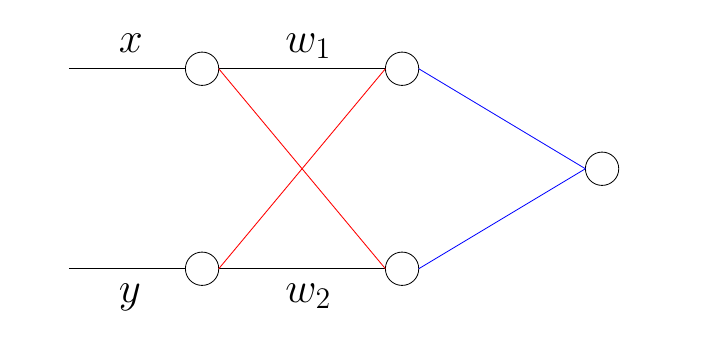}
%   \label{1_dim_weight_a}
%   \caption{Network Architecture}
% \end{subfigure}
% \begin{subfigure}{0.4\textwidth}
%   \centering
%   \includegraphics[width=\textwidth]{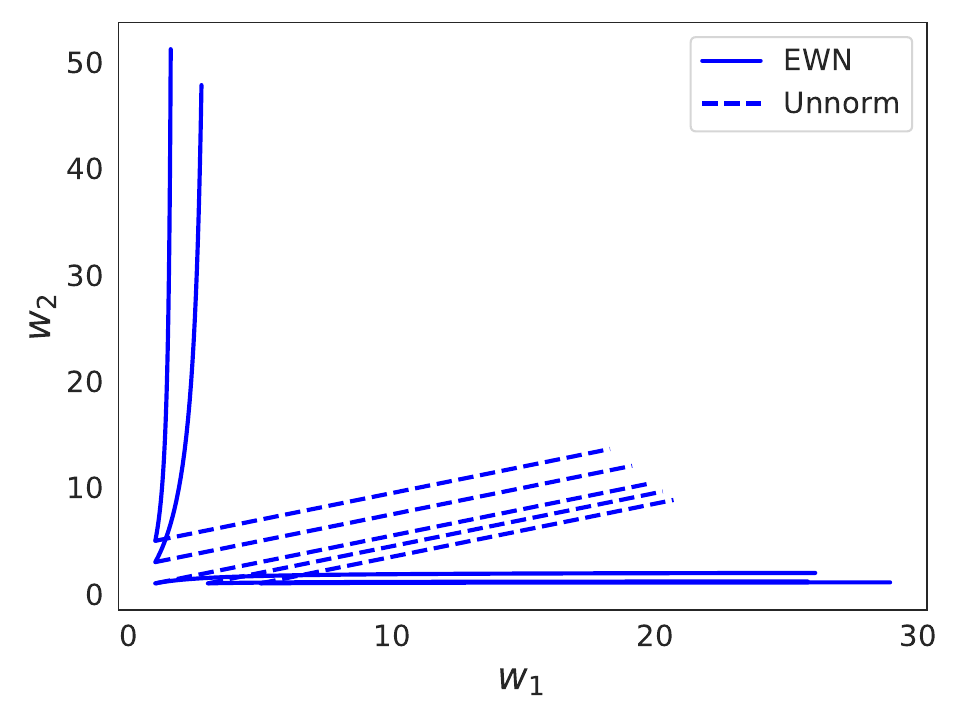}
%   \label{1_dim_weight_b}
%   \caption{Weight trajectories}
% \end{subfigure}
% \caption{(a) Network architecture for the \texttt{Simple-Traj} experiment . (b) Trajectories of the two weights for EWN and Unnorm, starting from 5 different initialization points.}
% \label{1_dim_weight}
% \end{figure}
% \floatconts{1_dim_weight}{}
{
\subfigure[Network Architecture]{
  \label{1_dim_weight_a}
  \includegraphics[width=0.4\textwidth]{plots/1_dim_weight/Fig1.pdf}
}
\subfigure[Weight trajectories]{
  \includegraphics[width=0.4\textwidth]{plots/1_dim_weight/trajectories.pdf}
  \label{1_dim_weight_b}
}
}
\end{figure}

\begin{proposition}
\label{prop:3}
    Consider two nodes $u$ and $v$ in the network such that $\| \widetilde \g_v \| \geq \| \widetilde \g_u \| > 0$ and $\| \w_u(t) \|, \| \w_v(t) \| \to \infty$. Let $\frac{\| \widetilde \g_u\|}{\| \widetilde \g_v \|}$ be denoted by $c$ (note that $\widetilde \g$ and $c$ will be different for SWN and EWN trajectory). Consider any $\epsilon, \delta$ such that $0< \epsilon < c$ and $0<\delta<\frac{\pi}{2}$.  Then, the following holds:
    \begin{enumerate}[label=(\roman*)]
        \item There exists a time $t_1$, such that for all $t > t_1$ both SWN and EWN trajectories have the following properties: 
        \begin{enumerate}
        \begin{minipage}{0.4\textwidth}
        \item $\frac{\| \nabla_{\w_u} \L(\w(t))\|}{\| \nabla_{\w_v} \L(\w(t))\|} \in [c - \epsilon, c + \epsilon]$
        \end{minipage}
        \begin{minipage}{0.5\textwidth}
        \item $\left( \frac{\w_u(t)}{\| \w_u(t) \|}\right)^\top \left(\frac{-\nabla_{\w_u} \L(\w(t))}{\| \nabla_{\w_u} \L(\w(t)) \|}\right) \geq \cos(\delta)$
        \end{minipage}
        \item $\left( \frac{\w_v(t)}{\| \w_v(t) \|}\right)^\top \left(\frac{-\nabla_{\w_v} \L(\w(t))}{\| \nabla_{\w_v} \L(\w(t)) \|}\right) \geq \cos(\delta)$.
        \end{enumerate}
        \item for SWN, $\lim_{t \to \infty} \frac{\| \w_u(t) \|}{\| \w_v(t) \|} = c$
        \item for EWN, if at some time $t_2 > t_1$,
        \begin{enumerate}
            \item $\frac{\| \w_u(t_2) \|}{\| \w_v(t_2) \|} > \frac{1}{(c-\epsilon)\cos(\delta)} \implies \lim_{t \to \infty} \frac{\| \w_u(t) \|}{\| \w_v(t) \|} = \infty$
            \item $\frac{\| \w_u(t_2) \|}{\| \w_v(t_2) \|} < \frac{\cos(\delta)}{c + \epsilon} \implies \lim_{t \to \infty} \frac{\| \w_u(t) \|}{\| \w_v(t) \|} = 0$
        \end{enumerate}
    \end{enumerate}
\end{proposition}
The above proposition shows that the limit property of the weights in Theorem \ref{thm:weightnorm}, makes non-sparse $\w$ an unstable convergent direction for EWN. But that is not the case for SWN. We demonstrate the relative sparsity between EWN, SWN and Unnorm through two toy experiments -- \texttt{Simple-Traj} and \texttt{XOR}. 

% \begin{figure}[t]
% % \begin{subfigure}{0.4\textwidth}
% %   \centering
% %   \includegraphics[width=\textwidth]{plots/XOR_dataset/dataset.pdf}
% %   \label{SWN:XOR_a}
% %   \caption{Dataset}
% % \end{subfigure}%
% \begin{subfigure}{0.3\textwidth}
%   \centering
%   \includegraphics[width=\textwidth]{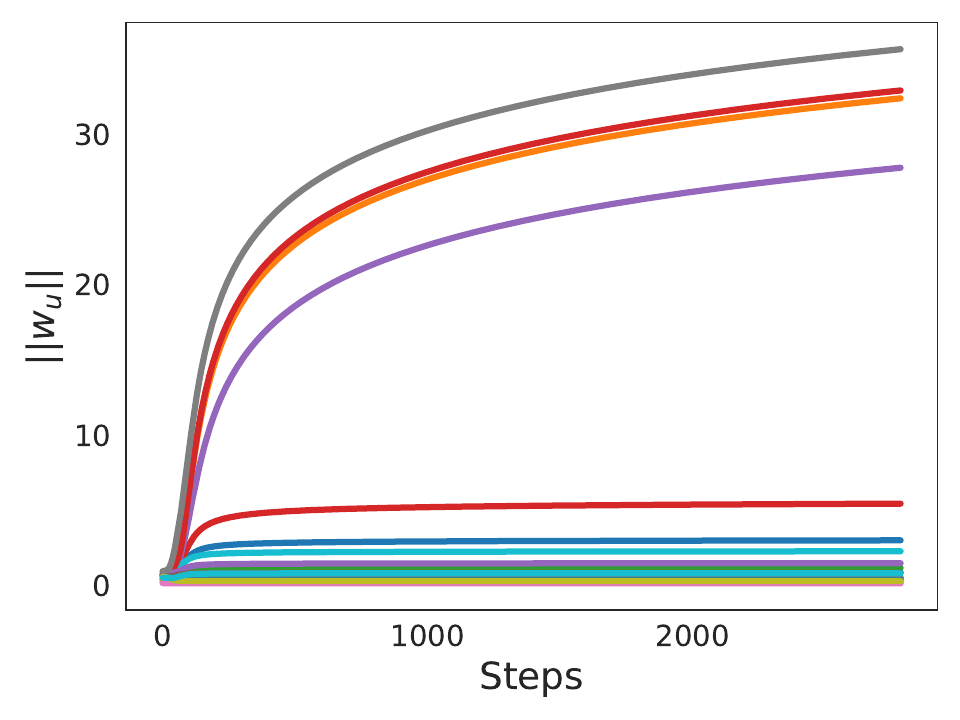}
%   \label{SWN:XOR_b}
%   \caption{EWN}
% \end{subfigure}
% \begin{subfigure}{0.3\textwidth}
%   \centering
%   \includegraphics[width=\textwidth]{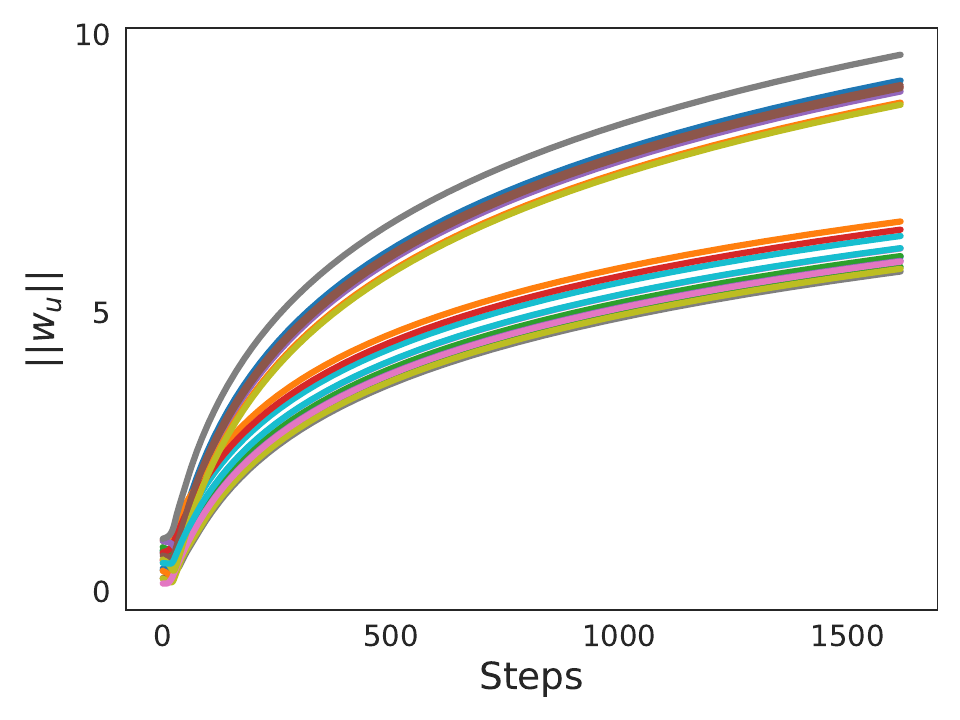}
%   \label{SWN:XOR_c}
%   \caption{Unnorm}
% \end{subfigure}%
% \begin{subfigure}{0.3\textwidth}
%   \centering
%   \includegraphics[width=\textwidth]{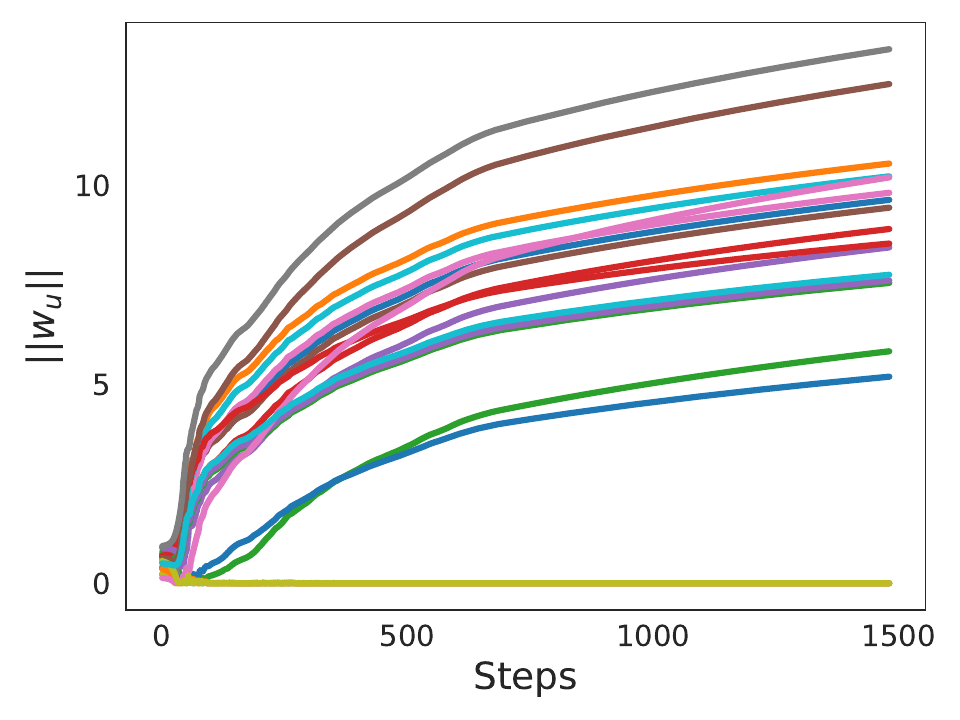}
%   \label{SWN:XOR_d}
%   \caption{SWN}
% \end{subfigure}%
% \caption{(a) shows the XOR dataset. (b), (c) and (d) demonstrate that EWN weights grow sparsely when compared to Unnorm and SWN}
% \label{SWN:XOR}
% \end{figure}

\begin{figure}[t]
\floatconts{SWN:XOR}{\caption{(a), (b) and (c) demonstrate the evolution of weight norms for each neuron in the \texttt{XOR} experiment. EWN weights grow sparsely when compared to Unnorm and SWN}}
{
% \subfigure[Dataset]{
%   \includegraphics[width=0.4\textwidth]{plots/XOR_dataset/dataset.pdf}
%   \label{SWN:XOR_a}
% }
\subfigure[EWN]{
  \includegraphics[width=0.3\textwidth]{plots/XOR_dataset/exp_weight_norm.pdf}
  \label{SWN:XOR_b}
}
\subfigure[SWN]{
  \includegraphics[width=0.3\textwidth]{plots/XOR_dataset/gamma_weight_norm.pdf}
  \label{SWN:XOR_c}
}
\subfigure[Unnorm]{
  \includegraphics[width=0.3\textwidth]{plots/XOR_dataset/no_weight_norm.pdf}
  \label{SWN:XOR_d}
}
}
\end{figure}

% \begin{figure}[t]
% \begin{subfigure}{0.3\textwidth}
% \centering
%   \includegraphics[width=\textwidth]{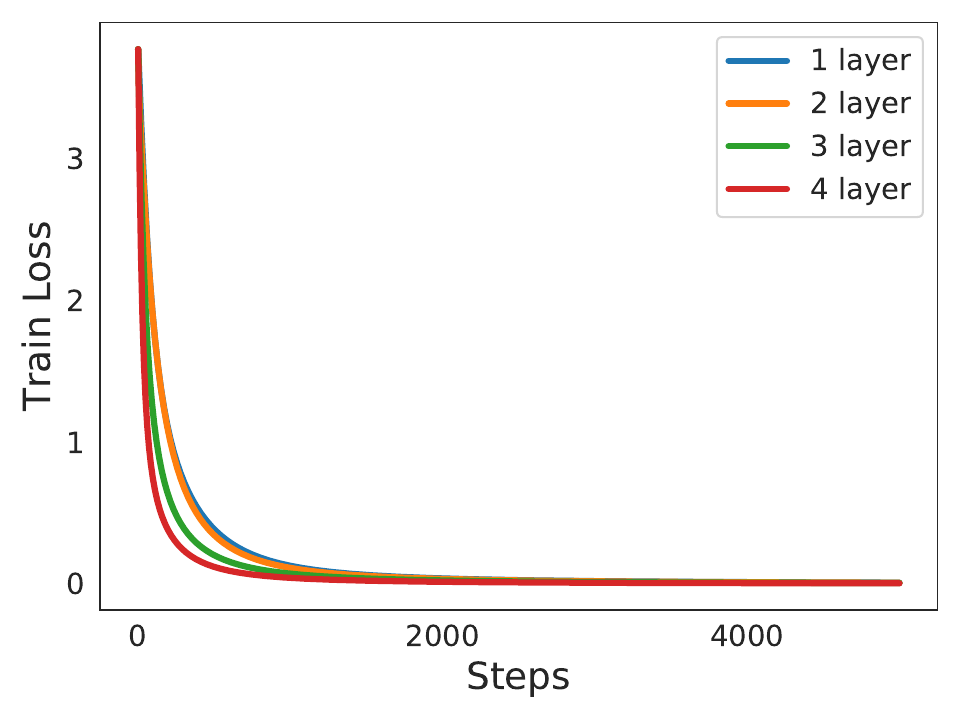}
%   \caption{EWN}
% \end{subfigure}
% \begin{subfigure}{0.3\textwidth}
% \centering
%   \includegraphics[width=\textwidth]{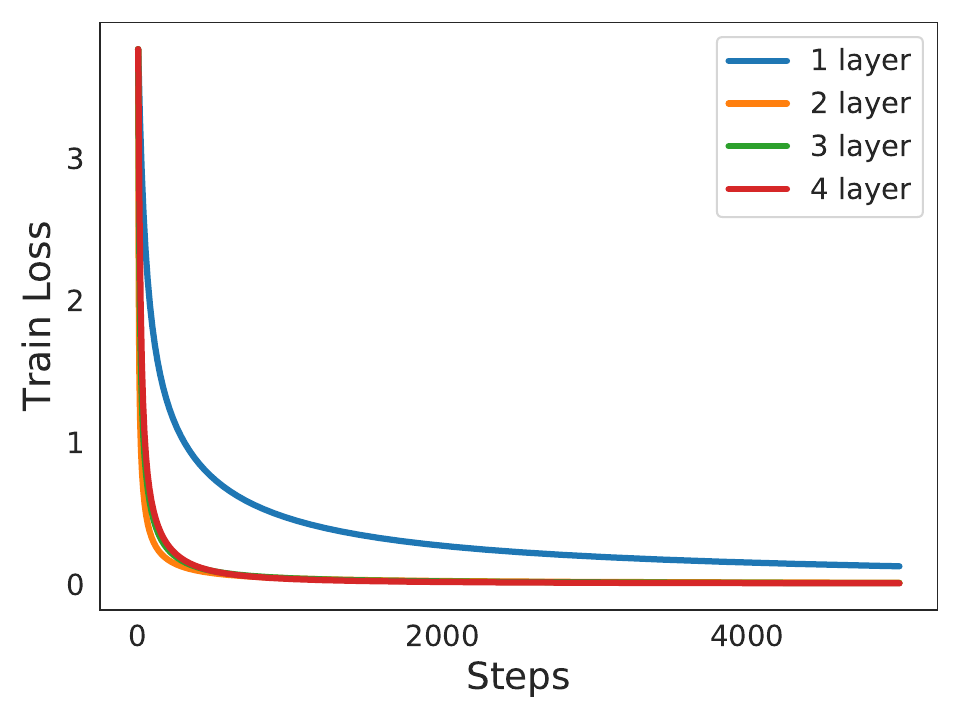}
%   \caption{SWN}
% \end{subfigure}
% \begin{subfigure}{0.3\textwidth}
% \centering
%   \includegraphics[width=\textwidth]{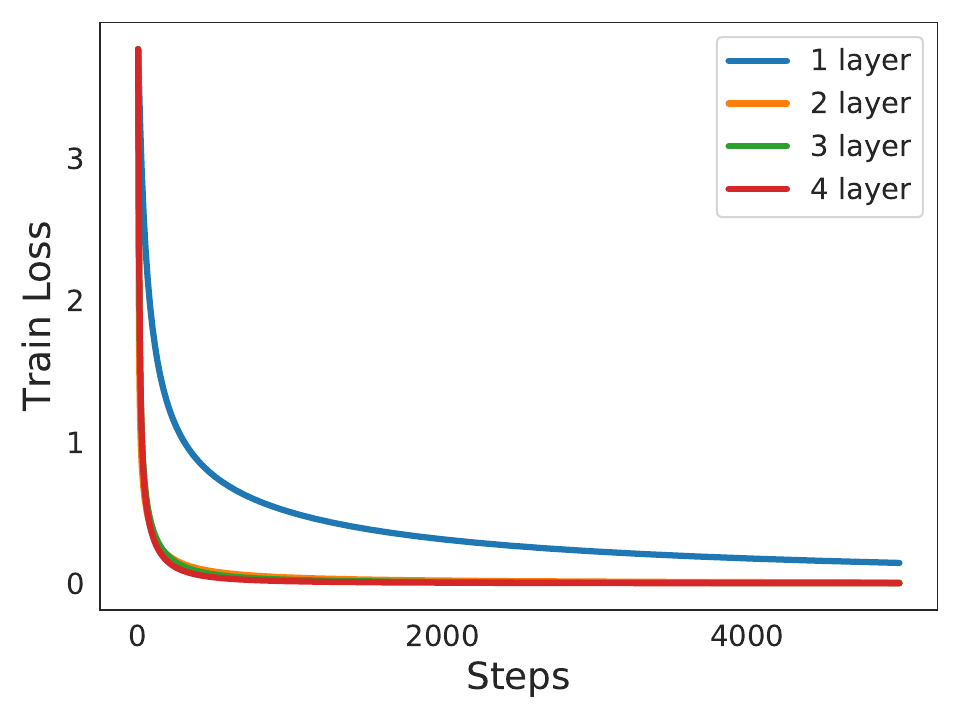}
%   \caption{Unnorm}
% \end{subfigure}
% \caption{Variation of convergence rate of train loss with number of layers for multilayer linear nets}
% \label{conv_rate_variation}
% \end{figure}
    
% % \end{figure}
\begin{figure}[t]
\floatconts{conv_rate_variation}{\caption{Variation of convergence rate of train loss with number of layers for multilayer linear nets on a linearly separable dataset}}
{
\subfigure[EWN]{
  \includegraphics[width=0.3\textwidth]{plots/conv_rate_linear/WN_train_loss.pdf}
}
\subfigure[SWN]{
  \includegraphics[width=0.3\textwidth]{plots/conv_rate_linear/gamma-WN_train_loss.pdf}
}
\subfigure[Unnorm]{
  \includegraphics[width=0.3\textwidth]{plots/conv_rate_linear/no-WN_train_loss.pdf}
}
}
\end{figure}

In the \texttt{Simple-Traj} experiment, we illustrate the notion of asymptotic relative sparsity occuring in the EWN parameterization. In this dataset, we have a  single data point at $(2,1)$, that is labelled positive and train a network with linear activations. The architecture is shown in Figure \ref{1_dim_weight}, where weights in blue and red are frozen to values $1$ and $0$ respectively. Thus, there are effectively only two scalar parameters- $w_1$ and $w_2$. The network is trained till a loss value of $e^{-50}$ starting from 5 different initialization points. The weight trajectories in Figure \ref{1_dim_weight} shows that EWN prefers to converge either along the x or y axis, and hence has an asymptotic relative sparsity property. We provide a theoretical proof for the same in the general d-dimensional case.

\begin{proposition}
\label{prop:quant_sparsity}
    Consider a linear model over $\R^d$ given by $f(\x) = \w^\top \x$, where each $w_i$ is further reparameterized as $e^{\alpha_i}$. Consider a dataset consisting of a single data point $\z \succ 0$, that is labelled as +1. According to the initialization of $\balpha$, define a relation $R$ on $\{1,\ldots,d\}$, given by $i \sim j$ if $w_i(0)z_i = w_j(0)z_j$. Then, R is an equivalence relation on $\{1,\ldots,d\}$.
    Let these equivalent sets be denoted by $I_1, I_2, ..., I_k$. Define a total order on these sets given by $I_a > I_b$ if $\exists i \in I_a, j \in I_b$ such that $w_i(0)z_i > w_j(0)z_j$. Let the maximum set according to this order be denoted by $I^*$. Then, for gradient flow on exponential loss, the following holds
    \begin{enumerate}[label=(\roman*)]
        \item For any $i \in I^*$, $\lim_{t \to \infty} w_i(t) = \infty$
        \item For $i, j \in I^*$, $\frac{w_i(t)}{w_j(t)} = \frac{x_j}{x_i}$.
        \item For any $i \notin I^*$, $\lim_{t \to \infty} w_i(t) = \left(\frac{1}{w_i(0)} - \frac{x_i}{w_j(0)x_j}\right)^{-1}$, where $j$ is any element in $I^*$.
    \end{enumerate}
\end{proposition}
%Note that the weights for the unnormalized network move along the vector (2,1) regardless of initialization, while the EWN convergent direction depends heavily on the initialization.
Thus, if $\w$ is initialized from a continuous distribution, then with probability 1, the cardinality of $I^*$ is 1 and hence $\w(t)/\|\w(t)\|$ will approach a sparse vector.

In the \texttt{XOR} experiment, we train a 2-layer ReLU network with 20 hidden neurons on XOR dataset, till a loss value of $e^{-50}$. The second layer is fixed to the values 1 or -1 randomly. For attaining 100\% accuracy on this dataset with this architecture, at least 4 hidden units are needed. As can be seen in Figure \ref{SWN:XOR}, EWN asymptotically uses exactly 4 neurons out of 20, while Unnorm and SWN use almost all the 20 neurons.

% \begin{figure}[t]
% \floatconts{XOR}{\caption{(a) Training data for the \texttt{XOR} experiment. (b, c) Norm of the incoming neuron weights for the EWN and unnormalized architectures.}}
% {
% \subfigure[Dataset]{
%   \includegraphics[width=0.3\textwidth]{plots/XOR_dataset/dataset.pdf}
%   \label{XOR_a}
% }
% \subfigure[EWN]{
%   \includegraphics[width=0.3\textwidth]{plots/XOR_dataset/exp_weight_norm.pdf}
%   \label{XOR_b}
% }
% \subfigure[Unnorm]{
%   \includegraphics[width=0.3\textwidth]{plots/XOR_dataset/no_weight_norm.pdf}
%   \label{XOR_c}
% }
% }
% \end{figure}
\section{Convergence Rates}
In this section, we provide convergence rate of loss for EWN.

\paragraph{Gradient Flow:} We provide a finite-time convergence rate of loss for gradient flow in case of EWN.
\begin{theorem}
\label{theorem:ewn:gf:rate}
For Exponential Weight Normalization, under assumption (A1), the following hold for $t > t_0$ in case of gradient flow
\begin{enumerate}[label=(\roman*)]
    \begin{minipage}{0.5\textwidth}
    \item $\| \w(t) \|$ grows with $t$ as $o((\log t)^{\frac{1}{L}})$
    \end{minipage}
    \begin{minipage}{0.5\textwidth}
    \item $\L(t)$ goes down with $t$ as $O\left(\frac{1}{t}\right)$
    \end{minipage}
\end{enumerate}
\end{theorem}

\paragraph{Gradient Descent:} For establishing convergence rates for gradient descent, we are going to make an additional assumption that the overall weight vector converges in direction, i.e, $\lim_{t \to \infty} \frac{\w(t)}{\| \w(t) \|}$ exists (\textbf{B4}). Although we have already shown this is indeed true for nodes with $\| \widetilde \g_u \| > 0$, we need this assumption to take into account the nodes with $\| \widetilde \g_u \| = 0$. Under this assumption, $\w$ can be represented as $\w = g(t)\widetilde \w + \br(t)$, where $\lim_{t \to \infty} \frac{\| \br(t) \|}{g(t)} = 0$. Let $d:\mathbb{N} \to \mathbb{R}$, given by $d(t) = \sum_{\tau = 0}^{t-1} \eta(\tau)$ denote total step size. Let $\rho = \min_i y_i\Phi(\widetilde \w, \x_i)$ be the normalized margin at convergence.

The asymptotic convergence rate of loss for SWN and Unnorm have already been established in \citet{LyuLi20} as $\Theta\left(\frac{1}{d(t)(\log d(t))^{2 - \frac{2}{L}}}\right)$. For EWN, the corresponding theorem is provided below
\begin{theorem}
\label{theorem:3}
For Exponential Weight Normalization, under Assumptions (B1)-(B4), $\rho > 0$, $\eta(t) = O\left(\left(\log\frac{1}{\L}\right)^c\right)$ for $c < 1$ and $\lim_{t \to \infty} \frac{\| \br(t+1) - \br(t) \|}{g(t+1) - g(t)} = 0$, the following hold
\begin{enumerate}[label=(\roman*)]
    \item $\| \w(t) \|$ asymptotically grows with $t$ as $\Theta\left((\log d(t))^{\frac{1}{L}}\right)$
    \item $\L(\w(t))$ asymptotically goes down with $t$ as $\Theta\left(\frac{1}{d(t)(\log d(t))^2}\right)$.
\end{enumerate}
\end{theorem}

Although the additional assumption $\lim_{t \to \infty} \frac{\| \br(t+1) - \br(t) \|}{g(t+1) - g(t)} = 0$ is not standard, we empirically demonstrate that, for EWN, the convergence rate is almost independent of the number of layers. Moreover, the learning rate assumption used still covers the constant $\eta(t)$ case, that is generally used in practice.

For multilayer linear nets, the variation of convergence rate with number of layers for a linearly separable dataset is illustrated in Figure \ref{conv_rate_variation}. All of these networks were explicitly initialized to represent the same point in function space. It can be seen that EWN, SWN and unnormalized networks all converge faster with more layers, but the effect is much less pronounced for EWN. 
\section{Pruning Experiments}
As EWN leads to asymptotically sparse solutions, it is likely that a sufficiently trained EWN network would be comparatively robust to pruning. In this section, we compare the pruning efficacy of EWN, SWN and Unnorm on MNIST \citep{Lecun+10} dataset. We use a 2-layer ReLU network with 1024 neurons in the hidden layer. In case of EWN and SWN, only the first layer is weight normalized as only this layer needs to be pruned.

\paragraph{Pruning Strategy:} 
The natural pruning strategy of removing neurons $u$ with least $\|\w_u\|$ gives inordinate importance to the initialisation and the initial optimization epochs. In order to minimize the effect of initialization and initial movement of weights, we prune according to the weight norm increase from a reference point. For example, when pruning at a loss value of $e^{-300}$ we consider 4 reference points - $\0$, weight at initialization, weight when $\L = e^{-10}$ and $\L = e^{-100}$. We then choose the pruning strategy that gives maximum testing accuracy for a given level of pruning. Similarly, for pruning at a loss value of $e^{-100}$, we consider three reference points - $\0$, weight at initialization and weight when $\L= e^{-10}$, and for pruning at a loss value of $e^{-10}$, we consider $\0$, weight at initialization and weight when $\L= e^{-5}$. More detailed description of the pruning strategy is provided in Appendix \ref{pruning:algo}.
% The variation of norm of the weight vectors with gradient descent steps for neurons in the first layer  has been deferred to Figure \ref{norm_dist_MNIST} in the appendix. 

The pruning graphs for MNIST at different loss values averaged across multiple seeds are shown in Figure \ref{pruning}. It can be seen that when the loss levels are sufficiently low, the EWN network becomes better adapted for pruning, significantly outperforming SWN and the unnormalized network in terms of test accuracy for a given level of pruning. Further details along with convergence rate plots are provided in Appendix \ref{pruning:conv}.

% \begin{figure}[t]
% \centering
% \begin{subfigure}{0.3\textwidth}
% \centering
%   \includegraphics[width=\textwidth]{plots/pruning/pruning_10_max.pdf}
%   \caption{$\L = e^{-10}$}
% \end{subfigure}
% \begin{subfigure}{0.3\textwidth}
% \centering
%   \includegraphics[width=\textwidth]{plots/pruning/pruning_100_max.pdf}
%   \caption{$\L = e^{-100}$}
% \end{subfigure}
% \begin{subfigure}{0.3\textwidth}
% \centering
%   \includegraphics[width=\textwidth]{plots/pruning/pruning_300_max.pdf}
%   \caption{$\L = e^{-300}$}
% \end{subfigure}
% \caption{Variation of test accuracy vs percentage of neurons pruned in first layer at different loss values for MNIST experiment}
% \label{pruning}
% \end{figure}

% \begin{figure}[t]
% \centering
% \begin{subfigure}{0.3\textwidth}
% \centering
%   \includegraphics[width=\textwidth]{plots/CIFAR_pruning/pruning_10_max.pdf}
%   \caption{$\L = e^{-10}$}
% \end{subfigure}
% \begin{subfigure}{0.3\textwidth}
% \centering
%   \includegraphics[width=\textwidth]{plots/CIFAR_pruning/pruning_100_max.pdf}
%   \caption{$\L = e^{-100}$}
% \end{subfigure}
% \begin{subfigure}{0.3\textwidth}
% \centering
%   \includegraphics[width=\textwidth]{plots/CIFAR_pruning/pruning_300_max.pdf}
%   \caption{$\L = e^{-300}$}
% \end{subfigure}
% \caption{Variation of test accuracy vs percentage of neurons pruned at different loss values for VGG-13 on CIFAR-10 dataset}
% \label{pruning:CIFAR}
% \end{figure}

\begin{figure}[t]
\floatconts{pruning}{\caption{Variation of test accuracy vs percentage of neurons pruned in first layer at different loss values for MNIST experiment}}
{
\subfigure[$\L = e^{-10}$]{
  \includegraphics[width=0.3\textwidth]{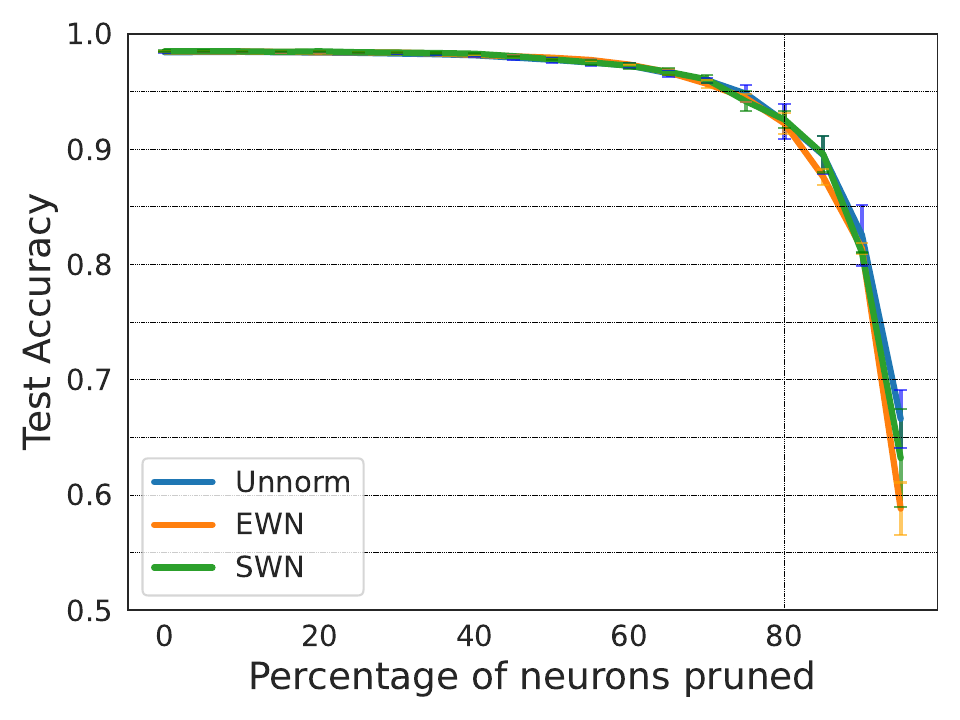}
}
\subfigure[$\L = e^{-100}$]{
  \includegraphics[width=0.3\textwidth]{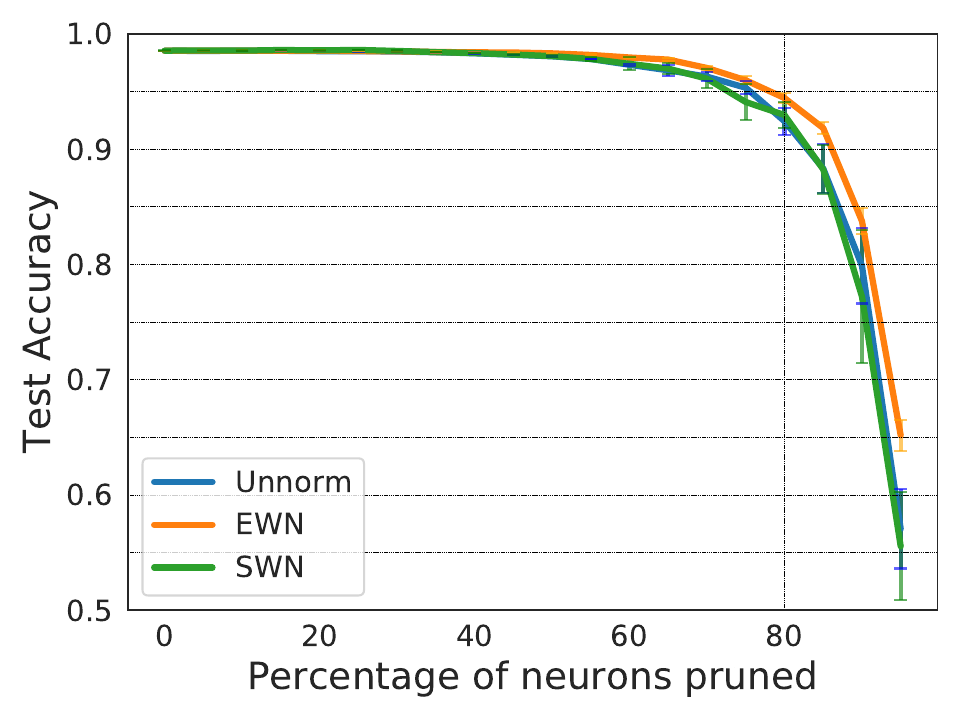}
}
\subfigure[$\L = e^{-300}$]{
  \includegraphics[width=0.3\textwidth]{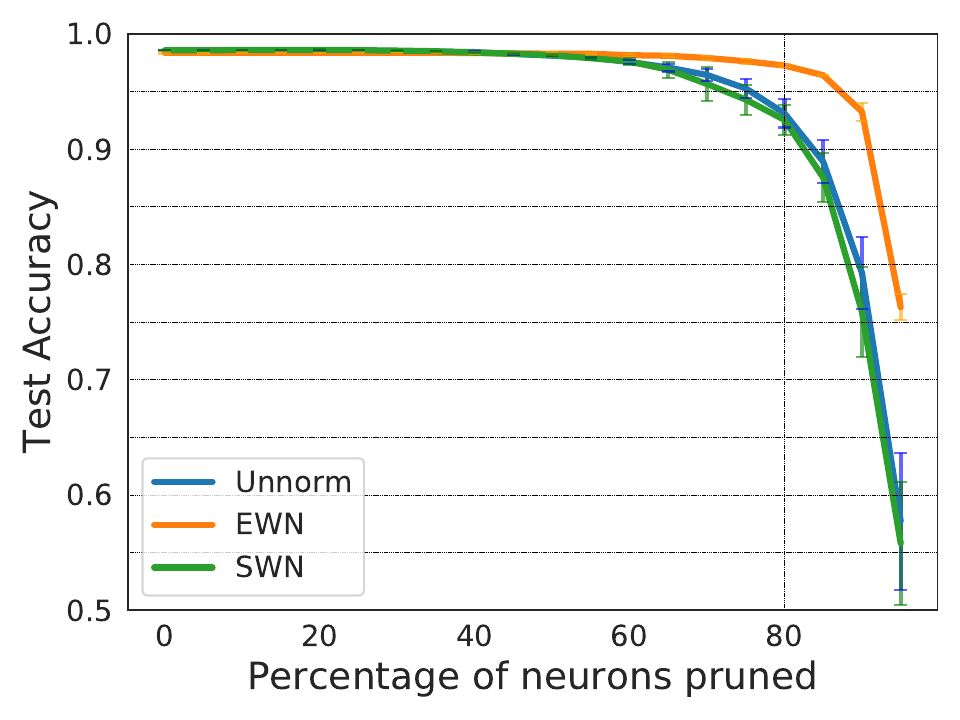}
}
}
\end{figure}

% \begin{figure}[t]
% \floatconts{pruning:CIFAR}{\caption{Variation of test accuracy vs percentage of neurons pruned at different loss values for VGG-13 on CIFAR-10 dataset}}
% {
% \subfigure[$\L = e^{-10}$]{
%   \includegraphics[width=0.3\textwidth]{plots/CIFAR_pruning/pruning_new_2_-10_-5_max.pdf}
% }
% \subfigure[$\L = e^{-100}$]{
%   \includegraphics[width=0.3\textwidth]{plots/CIFAR_pruning/pruning_new_2_-100_max.pdf}
% }
% \subfigure[$\L = e^{-300}$]{
%   \includegraphics[width=0.3\textwidth]{plots/CIFAR_pruning/pruning_new_2_-300_max.pdf}
% }
% }
% \end{figure}

\begin{figure}[t]
\floatconts{EWN:sparse:pruning}{\caption{(a) Variation of test accuracy vs percentage of neurons pruned on MNIST dataset after training the network initially with $\ell_{2,1}$ regularizer and later on continuing training with either EWN, SWN or unnormalized net till $\L = e^{-30}$ (b) Zoomed in view on EWN}}
{
\subfigure{
  \includegraphics[width=0.4\textwidth]{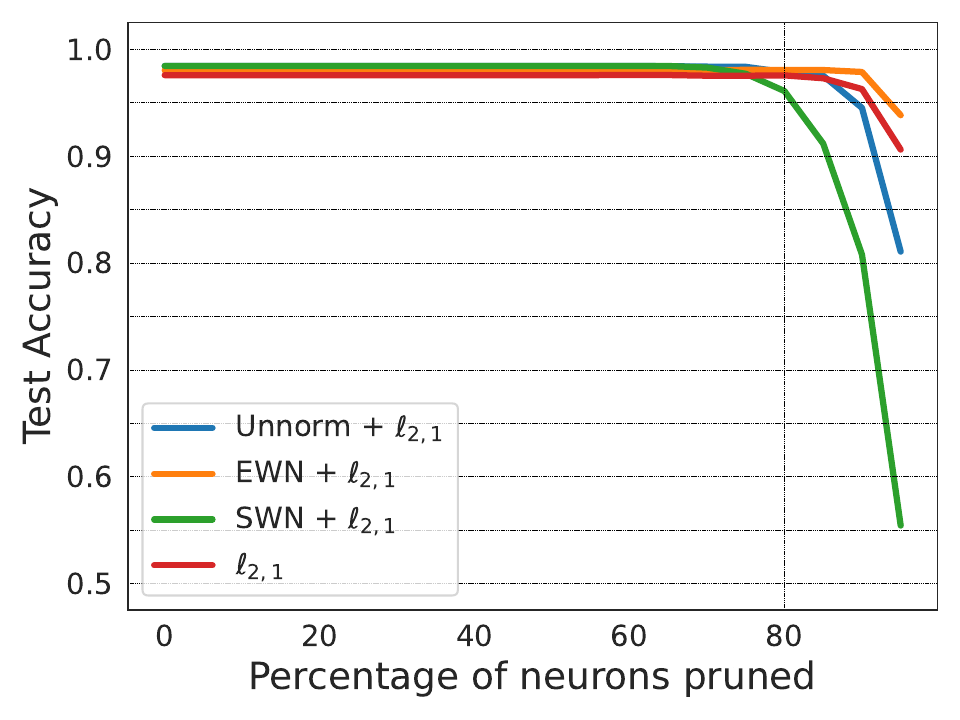}
}
\subfigure{
  \includegraphics[width=0.4\textwidth]{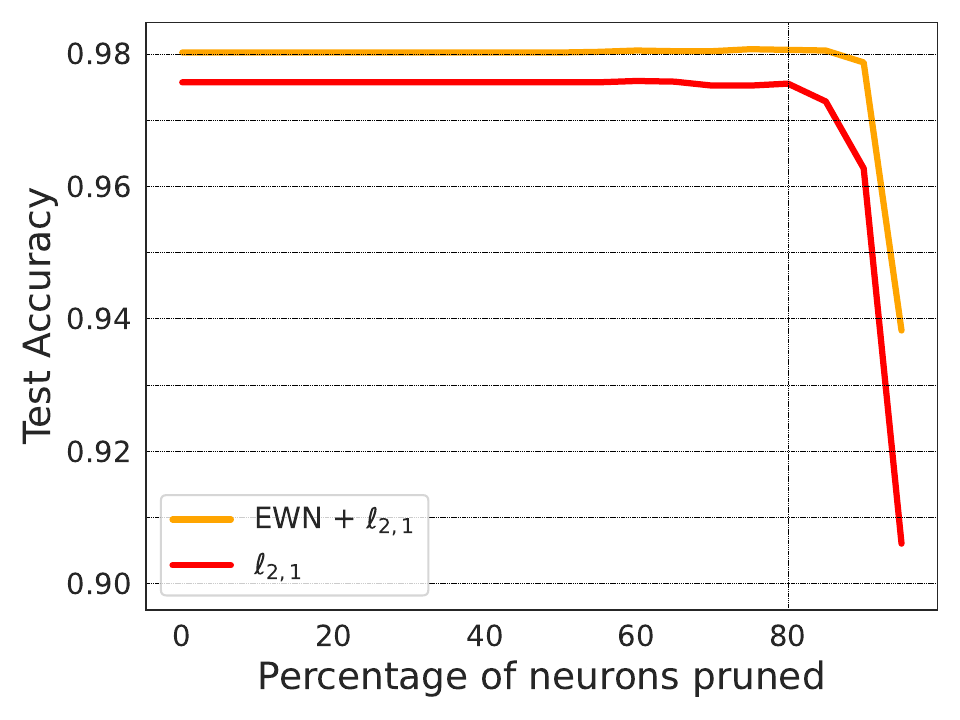}
}
}
\end{figure}

\subsection{Combining EWN with other sparsity regularizers}
EWN by itself has a sparsifying effect, and in addition it can be combined with other sparsity regularizers. e.g. the neural net can be trained using any existing sparsity regularizer with standard parameterization for the initial few epochs, and use EWN (without the regularizer) for the later phase. 

We conducted experiment on MNIST dataset, by initially training the network till convergence with $\ell_{2,1}$ group sparsity regularizer ($\sum_{u \in \text{nodes of network}} \| \w_u \|$), and later on using EWN till a loss value of $e^{-30}$. In this case, the pruning strategy is based on the final norms of the weights as the sparsity regularizer already induces an asymmetry among different neurons. The pruning results are shown in Figure \ref{EWN:sparse:pruning}. As can be seen, training further with EWN improves the pruning efficacy of the network.
\section{Proof Sketch}
In this section, we provide the proof for part (iii)a of Proposition \ref{prop:3} for gradient flow, i.e, for EWN, $\frac{\| \w_u(t_2) \|}{\| \w_v(t_2) \|} > \frac{1}{(c-\epsilon)\cos(\delta)} \implies \lim_{t \to \infty} \frac{\| \w_u(t) \|}{\| \w_v(t) \|} = \infty$. Remaining proofs are given in appendix.

The update equations for $\alpha_u$ and $\bv_u$ in case of gradient flow for EWN are given by
\begin{align}
\label{EWN:alpha:gf:main}
\frac{d\alpha_u(t)}{dt} &= -\eta(t)e^{\alpha_u(t)} \frac{\bv_u(t)^\top \nabla_{\w_u} \L}{\| \bv_u(t) \|} \\
\label{EWN:v:gf:main}
\frac{d \bv_u(t)}{dt} &= -\eta(t)\frac{e^{\alpha_u(t)}}{\| \bv_u(t) \|}\left(I - \frac{\bv_u(t)\bv_u(t)^\top}{\| \bv_u(t) \|^2}\right)\nabla_{\w_u} \L
\end{align}

Using Equation (\ref{EWN:alpha:gf:main}) (along with the fact that $\| \bv_u(t) \|$ does not change with time and $\| \bv_u(0) \| = 1$),
\begin{equation}
\label{eq:main:wnorm:EWN:gf}
\frac{d \| \w_u(t) \| }{dt} = \frac{d e^{\alpha_u(t)}}{dt} = -\eta(t) \| \w_u(t) \|^2 (\bv_u(t)^\top \nabla_{\w_u} \L(\w(t))
\end{equation}

Using Equation (\ref{eq:main:wnorm:EWN:gf}) and part 1 of Proposition \ref{prop:3}, we can say for $t > t_1$,
\begin{align}
    \label{eq:main:wratio:lowbound:EWN:gf}
    \frac{d \frac{\| \w_u(t) \|}{\| \w_v(t) \|}}{dt} &= \frac{\| \w_v(t) \|\frac{d \| \w_u(t) \|}{dt} -   \| \w_u(t) \|\frac{d \| \w_v(t) \|}{dt}}{\| \w_v(t) \|^2} \nonumber \\
    &\geq \eta(t)\frac{\| \w_u(t) \|}{\| \w_v(t) \|}(\|\w_u(t)\|\|\nabla_{\w_u} \L(\w(t)) \| \cos(\delta) - \|\w_v(t)\|\|\nabla_{\w_v} \L(\w(t)) \|) \nonumber\\
    &\geq \eta(t)\| \w_u(t) \|\|\nabla_{\w_u} \L(\w(t)) \|\left(\frac{\| \w_u(t) \|}{\| \w_v(t) \|} \cos(\delta) - \frac{1}{c - \epsilon}\right)
\end{align}

In this case, using Equation (\ref{eq:main:wratio:lowbound:EWN:gf}), we can see $\frac{d \frac{\| \w_u(t) \|}{\| \w_v(t) \|}}{dt} > 0$ at $t_2$. Thus, $\frac{\| \w_u(t) \|}{\| \w_v(t) \|}$ always remains greater than $\frac{1}{(c - \epsilon) \cos(\delta)}$ and keeps on increasing. Let's denote $\frac{\|\w_u(t_2)\|}{\|\w_v(t_2)\|}$ by $\Delta$. Then, for $t > t_2$,
\[ \frac{d \frac{\| \w_u(t) \|}{\| \w_v(t) \|}}{dt} \geq \left(\Delta\cos(\delta) - \frac{1}{c - \epsilon}\right) \eta(t) \|\w_u(t)\|\|\nabla_{\w_u} \L(\w(t)) \| \]
As $\alpha_u \to \infty$, $\int_{t_2}^\infty \eta(t) \|\w_u(t)\|\|\nabla_{\w_u} \L(\w(t)) \| dt = \infty$ using Equation (\ref{EWN:alpha:gf:main}). Thus, integrating both the sides of the equation above from $t_2$ to $\infty$, we get
\[ \int_{t_2}^\infty \frac{d \frac{\| \w_u(t) \|}{\| \w_v(t) \|}}{dt} dt \geq \infty \]
Thus $\lim_{t \to \infty} \frac{\| \w_u(t) \|}{\| \w_v(t) \|} = \infty$. A similar proof works for the part (iii)b of Proposition \ref{prop:3} as well.
% \newpage
\section{Conclusion}
In this paper, we analyze the inductive bias of weight normalization for smooth homogeneous neural nets and show that exponential weight normalization is likely to lead to asymptotically sparse solutions and has a faster convergence rate than unnormalized or standard weight normalized networks.

The smooth homogeneity assumptions made in the paper are satisfied by any positive power of ReLU greater than 1. The primary issue with ReLU is that the assumption of gradients converging in direction does not make sense for the non-smooth case. However, as our experiments demonstrate, the implication of relative sparsity holds for ReLU activation as well. Therefore, extending the results in the paper for the non-smooth case is a promising research direction.

% The sparsity theorem established in the paper is a qualitative result, not a quantitative one. Understanding how the sparsity in weights is affected by network architecture and dataset is an intriguing research direction.

% While experimenting with XOR dataset, we found that increasing the powers of ReLU promotes sparsity even for SWN and Unnorm. However, this is a slightly different kind of sparsity than implied by Proposition \ref{prop:3}. Proposition \ref{prop:3} states that even when the gradients have a non-sparse direction of convergence, EWN is likely to have a sparse convergent direction for weights. However, with increasing powers of ReLU, we observed that the gradients itself start converging to a sparse direction. These experiments are provided in the Appendix. Understanding this phenomenon is also an interesting research direction.
Although the trajectory based assumptions have been shown to hold for gradient flow on unnormalized nets under certain regularity conditions, establishing similar conditions for weight normalized networks remains an open question. Moreover, extending the directional convergence results from gradient flow to gradient descent is also an interesting research direction.

\bibliography{alt2022-sample}

\appendix

\section{Proof of Theorem \ref{theorem:1}}
\begin{theorem*}
The gradient flow path with learning rate $\eta(t)$ for EWN and SWN are given as follows:
 \begin{align*}
 \text{EWN: } \frac{d \w_u(t)}{dt} &= -\eta(t)\| \w_u(t) \|^2 \nabla_{\w_u} \L \\
 \text{SWN: } \frac{d \w_u(t)}{dt}  &= -\eta(t)\left(\|\w_u(t)\|^2 \nabla_{\w_u} \L + \left(\frac{1 - \|\w_u(t)\|^2}{\|\w_u(t)\|^2}\right) (\w_u(t)^\top\nabla_{\w_u} \L) \w_u(t)\right)
 \end{align*}
\end{theorem*}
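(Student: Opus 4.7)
The plan is to apply the chain rule to the reparameterizations $\w_u = e^{\alpha_u}\bv_u/\|\bv_u\|$ and $\w_u = \gamma_u\bv_u/\|\bv_u\|$, substitute the gradient flow updates already given in the setup for $\alpha_u$, $\gamma_u$, and $\bv_u$, and simplify using the initial condition $\|\bv_u(0)\|=1$.

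The crucial preliminary observation is that $\|\bv_u(t)\|$ is conserved along the flow in both reparameterizations, and hence equals $1$ for all $t$. Indeed, in both cases $\frac{d\bv_u}{dt}$ is a multiple of $(I - \bv_u\bv_u^\top/\|\bv_u\|^2)\nabla_{\w_u}\L$, which annihilates $\bv_u^\top$ from the left, so $\frac{d}{dt}\|\bv_u\|^2 = 2\bv_u^\top \frac{d\bv_u}{dt} = 0$. This single fact is what makes the subsequent algebra collapse cleanly.

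For EWN, using $\|\bv_u\|\equiv 1$, we have $\w_u = e^{\alpha_u}\bv_u$ with $\|\w_u\|=e^{\alpha_u}$. Differentiating gives $\frac{d\w_u}{dt} = e^{\alpha_u}\bigl(\frac{d\alpha_u}{dt}\bv_u + \frac{d\bv_u}{dt}\bigr)$. Substituting the flow equations, the $\dot\alpha_u$ term contributes a radial piece $-\eta(t)\,e^{2\alpha_u}(\bv_u\bv_u^\top)\nabla_{\w_u}\L$ and the $\dot\bv_u$ term contributes a tangential piece $-\eta(t)\,e^{2\alpha_u}(I-\bv_u\bv_u^\top)\nabla_{\w_u}\L$. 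These add to $-\eta(t)\,e^{2\alpha_u}\nabla_{\w_u}\L = -\eta(t)\|\w_u\|^2\nabla_{\w_u}\L$, which is the stated EWN formula. For SWN the analogous computation with $\w_u=\gamma_u\bv_u$ produces $-\eta(t)\bigl[\bv_u\bv_u^\top + \gamma_u^2(I-\bv_u\bv_u^\top)\bigr]\nabla_{\w_u}\L$, because the radial and tangential pieces now carry different scalar prefactors ($1$ vs.\ $\gamma_u^2$, after the product rule). Regrouping as $\gamma_u^2 I + (1-\gamma_u^2)\bv_u\bv_u^\top$ and substituting $\gamma_u^2=\|\w_u\|^2$ and $\bv_u\bv_u^\top = \w_u\w_u^\top/\|\w_u\|^2$ yields the SWN expression in the statement.

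I expect no real obstacle; the derivation is essentially a chain-rule bookkeeping exercise. The only mildly delicate point is justifying the substitution $\|\bv_u\|=1$ throughout, which reduces to the conservation argument above. A minor technicality for SWN is that $\gamma_u$ could in principle change sign along the flow, but since only $\gamma_u^2$ and $\bv_u\bv_u^\top$ enter the final expression, the formula is insensitive to this sign ambiguity.
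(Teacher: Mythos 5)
Your proposal is correct and follows essentially the same route as the paper's proof: establish that $\|\bv_u(t)\|$ is conserved (hence equals $1$), then apply the product rule to $\w_u = e^{\alpha_u}\bv_u$ (resp.\ $\gamma_u\bv_u$) and combine the radial and tangential contributions, regrouping for SWN exactly as the paper does. The remark about the sign of $\gamma_u$ is a small extra care the paper glosses over, but it does not change the argument.
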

The proof for the two parts will be provided in different subsections, where the corresponding part will be restated for ease of the reader.
\subsection{Exponential Weight Normalization}
\begin{theorem*}
The gradient flow path with learning rate $\eta(t)$ for EWN is given by:
 \begin{align*}
 \frac{d \w_u(t)}{dt} &= -\eta(t)\| \w_u(t) \|^2 \nabla_{\w_u} \L \\
 \end{align*}
\end{theorem*}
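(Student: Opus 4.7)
The plan is a direct chain-rule computation: differentiate $\w_u = e^{\alpha_u}\bv_u/\|\bv_u\|$ along the flow and plug in the given updates for $\dot\alpha_u$ and $\dot\bv_u$. The one structural observation needed to make the computation clean is that $\|\bv_u(t)\|$ is preserved exactly by the flow.

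First I would verify the conservation of $\|\bv_u\|$. Differentiating $\|\bv_u\|^2$ gives $2\bv_u^\top \dot\bv_u$, and since the projector $I - \bv_u\bv_u^\top/\|\bv_u\|^2$ appearing in the $\bv_u$ update annihilates any component along $\bv_u$, we have $\bv_u \perp \dot\bv_u$. Combined with the standing initial condition $\|\bv_u(0)\|=1$, this yields $\|\bv_u(t)\|=1$ for all $t$. Consequently the parameterization simplifies to $\w_u = e^{\alpha_u}\bv_u$, so $\|\w_u\| = e^{\alpha_u}$, and the gradient flow equations become
\begin{align*}
\dot\alpha_u &= -\eta(t)\, e^{\alpha_u}\, \bv_u^\top \nabla_{\w_u}\L, \\
\dot\bv_u &= -\eta(t)\, e^{\alpha_u}\, (I - \bv_u\bv_u^\top)\nabla_{\w_u}\L.
\end{align*}

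Next I would differentiate $\w_u = e^{\alpha_u}\bv_u$ to get $\dot\w_u = e^{\alpha_u}\dot\alpha_u \bv_u + e^{\alpha_u}\dot\bv_u$. Substituting the above expressions and factoring out $e^{2\alpha_u}$ gives
\begin{equation*}
\dot\w_u = -\eta(t)\, e^{2\alpha_u}\bigl[\bv_u\bv_u^\top + (I - \bv_u\bv_u^\top)\bigr]\nabla_{\w_u}\L = -\eta(t)\, e^{2\alpha_u}\nabla_{\w_u}\L,
\end{equation*}
and since $e^{2\alpha_u} = \|\w_u\|^2$, this is exactly the claimed identity.

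There is really no obstacle here beyond bookkeeping; the content of the statement is the cancellation $\bv_u\bv_u^\top + (I - \bv_u\bv_u^\top) = I$, which collapses the radial and tangential pieces of the update into a plain Euclidean gradient step scaled by $\|\w_u\|^2$. The only subtlety worth emphasizing is that this cleanliness relies on $\|\bv_u\|$ being conserved exactly by the continuous flow (which would fail in discrete time), and on the initial normalization $\|\bv_u(0)\|=1$ assumed in the problem setup.
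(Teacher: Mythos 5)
Your proof is correct and follows essentially the same route as the paper: establish that $\|\bv_u(t)\|$ is conserved (hence equal to $1$) because the projector makes $\dot\bv_u \perp \bv_u$, then differentiate $\w_u = e^{\alpha_u}\bv_u$ and use $\bv_u\bv_u^\top + (I - \bv_u\bv_u^\top) = I$ to collapse the radial and tangential terms into $-\eta(t)e^{2\alpha_u}\nabla_{\w_u}\L = -\eta(t)\|\w_u\|^2\nabla_{\w_u}\L$. No gaps.
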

\begin{proof}
In case of EWN, weights are reparameterized as $\w_u = e^{\alpha_u} \frac{\bv_u}{\| \bv_u\|}$. Then
\[ \nabla_{\alpha_u} \L = e^{\alpha_u} \frac{\bv_u^\top \nabla_{\w_u} \L}{\| \bv_u \|} \]
\[ \nabla_{\bv_u} \L = \frac{e^{\alpha_u}}{\| \bv_u \|}\left(I - \frac{\bv_u\bv_u^\top}{\| \bv_u \|^2}\right)\nabla_{\w_u} \L \]
Now, in case of gradient flow with learning rate $\eta(t)$, we can say
\[ \frac{d\alpha_u(t)}{dt} = -\eta(t)\nabla_{\alpha_u} \L = -\eta(t)e^{\alpha_u(t)} \frac{\bv_u(t)^\top \nabla_{\w_u} \L}{\| \bv_u(t) \|} \]
\[ \frac{d \bv_u(t)}{dt} = -\eta(t)\nabla_{\bv_u} \L = -\eta(t)\frac{e^{\alpha_u(t)}}{\| \bv_u(t) \|}\left(I - \frac{\bv_u(t)\bv_u(t)^\top}{\| \bv_u(t) \|^2}\right)\nabla_{\w_u} \L \]
Now, using these equations, we can say
\[ \frac{d \| \bv_u(t) \|^2}{dt} = 2\bv_u(t)^\top\left(\frac{d \bv_u(t)}{dt}\right) = 0 \]
Thus, $\| \bv_u(t) \|$ does not change with time. As we assumed $\| \bv_u(0) \|$ to be 1, therefore for any t, $\| \bv_u(t) \| = 1$. Using this simplification, we can write
\begin{align*}
    \frac{d \w_u(t)}{dt} &= \frac{d(e^{\alpha_u(t)}\bv_u(t))}{dt} \\
    &= e^{\alpha_u(t)}(-\eta(t)e^{\alpha_u(t)}(I - \bv_u(t)\bv_u(t)^\top)\nabla_{\w_u} \L) - \eta(t)e^{2\alpha_u(t)}(\bv_u(t)^\top \nabla_{\w_u} \L) \bv_u(t)\\
    &= -\eta(t) e^{2\alpha_u(t)} \nabla_{\w_u} \L
\end{align*}
Thus, the gradient flow path with exponential weight normalization can be replicated by an adaptive learning rate given by $\eta(t) \| \w_u(t) \|^2$.
\end{proof}
\subsection{Standard Weight Normalization}
\begin{theorem*}
The gradient flow path with learning rate $\eta(t)$ for SWN is given by:
 \begin{align*}
 \frac{d \w_u(t)}{dt}  &= -\eta(t)\left(\|\w_u(t)\|^2 \nabla_{\w_u} \L + \left(\frac{1 - \|\w_u(t)\|^2}{\|\w_u(t)\|^2}\right) (\w_u(t)^\top\nabla_{\w_u} \L) \w_u(t)\right)
 \end{align*}
\end{theorem*}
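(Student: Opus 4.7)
The argument parallels the EWN case from the previous subsection. First I would record the chain-rule gradients with respect to the reparameterization variables:
\begin{align*}
\nabla_{\gamma_u}\L &= \frac{\bv_u^\top \nabla_{\w_u}\L}{\|\bv_u\|}, &
\nabla_{\bv_u}\L &= \frac{\gamma_u}{\|\bv_u\|}\left(I - \frac{\bv_u\bv_u^\top}{\|\bv_u\|^2}\right)\nabla_{\w_u}\L ,
\end{align*}
which are precisely the quantities appearing in the SWN update equations given in the Problem Setup. The corresponding gradient flow ODEs are $\dot\gamma_u = -\eta(t)\nabla_{\gamma_u}\L$ and $\dot\bv_u = -\eta(t)\nabla_{\bv_u}\L$.

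The next step is the key invariance: $\|\bv_u(t)\|$ is conserved along the flow. Indeed, the factor $I - \bv_u\bv_u^\top/\|\bv_u\|^2$ is the orthogonal projector onto the subspace perpendicular to $\bv_u$, so $\bv_u^\top \dot\bv_u = 0$, and hence $\tfrac{d}{dt}\|\bv_u\|^2 = 2\bv_u^\top \dot\bv_u = 0$. Using the initialization $\|\bv_u(0)\| = 1$, we therefore have $\|\bv_u(t)\| \equiv 1$ for all $t$. This lets us identify $\gamma_u = \|\w_u\|$ (assuming $\gamma_u > 0$) and $\bv_u = \w_u/\|\w_u\|$ throughout the trajectory.

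Finally, I would differentiate $\w_u = \gamma_u \bv_u$ by the product rule and plug in the ODEs:
\begin{align*}
\frac{d\w_u}{dt}
&= \dot\gamma_u\, \bv_u + \gamma_u \dot\bv_u \\
&= -\eta(t)(\bv_u^\top \nabla_{\w_u}\L)\bv_u - \eta(t)\gamma_u^2\left(I - \bv_u\bv_u^\top\right)\nabla_{\w_u}\L \\
&= -\eta(t)\left[\gamma_u^2 \nabla_{\w_u}\L + (1 - \gamma_u^2)(\bv_u^\top \nabla_{\w_u}\L)\bv_u\right].
\end{align*}
Substituting $\gamma_u^2 = \|\w_u\|^2$ and $(\bv_u^\top \nabla_{\w_u}\L)\bv_u = (\w_u^\top \nabla_{\w_u}\L)\w_u/\|\w_u\|^2$ reassembles the right-hand side into the form claimed in the theorem.

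There is no substantive obstacle here: the entire proof is a chain-rule computation, with the only nontrivial step being the observation that the projector in $\dot\bv_u$ preserves $\|\bv_u\|$ and hence lets us substitute $\w_u$-quantities back in cleanly. The factor $(1-\|\w_u\|^2)/\|\w_u\|^2$ that distinguishes SWN from EWN comes out directly as $(1-\gamma_u^2)/\|\w_u\|^2$ when the projected term is rewritten in the $\w_u$ basis, which also makes transparent why SWN reduces to standard gradient flow exactly on the sphere $\|\w_u\|=1$.
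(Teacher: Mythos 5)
Your proposal is correct and follows essentially the same route as the paper: compute the chain-rule gradients in $(\gamma_u,\bv_u)$, observe that the projector makes $\|\bv_u(t)\|$ conserved (hence $\equiv 1$), and differentiate $\w_u=\gamma_u\bv_u$ by the product rule to reassemble the claimed expression. The only cosmetic difference is that you flag the sign assumption $\gamma_u>0$ when identifying $\gamma_u=\|\w_u\|$, which is harmless here since only $\gamma_u^2$ appears in the final formula.
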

\begin{proof}
In case of SWN, weights are reparameterized as $\w_u = \gamma_u \frac{\bv_u}{\| \bv_u\|}$. Then
\[ \nabla_{\gamma_u} \L = \frac{\bv_u^\top \nabla_{\w_u} \L}{\| \bv_u \|} \]
\[ \nabla_{\bv_u} \L = \frac{\gamma_u}{\| \bv_u \|}\left(I - \frac{\bv_u\bv_u^\top}{\| \bv_u \|^2}\right)\nabla_{\w_u} \L \]
Now, in case of gradient flow with learning rate $\eta(t)$, we can say
\[ \frac{d\gamma_u(t)}{dt} = -\eta(t)\nabla_{\alpha_u} \L = -\eta(t) \frac{\bv_u(t)^\top \nabla_{\w_u} \L}{\| \bv_u(t) \|} \]
\[ \frac{d \bv_u(t)}{dt} = -\eta(t)\nabla_{\bv_u} \L = -\eta(t)\frac{\gamma_u(t)}{\| \bv_u(t) \|}\left(I - \frac{\bv_u(t)\bv_u(t)^\top}{\| \bv_u(t) \|^2}\right)\nabla_{\w_u} \L \]
Now, similar to EWN, $\| \bv_u(t) \|$ does not change with time. Using the fact that $\| \bv_u(t) \| =1$ for all $t$, we can say
\begin{align*}
    \frac{d \w_u(t)}{dt} &= \frac{d(\gamma_u(t) \bv_u(t))}{dt} \\
    &= \gamma_u(t)(-\eta(t)\gamma_u(t)(I - \bv_u(t)\bv_u(t)^\top)\nabla_{\w_u} \L) - \eta(t)(\bv_u(t)^\top \nabla_{\w_u} \L) \bv_u(t)\\
    &= -\eta(t)(\gamma_u(t)^2 \nabla_{\w_u} \L + (1 - \gamma_u(t)^2)(\bv_u(t)^\top\nabla_{\w_u} \L) \bv_u(t))\\
    &= -\eta(t)\left(\gamma_u(t)^2 \nabla_{\w_u} \L + \left(\frac{1 - \gamma_u(t)^2}{\gamma_u(t)^2}\right)(\w_u(t)^\top\nabla_{\w_u} \L) \w_u(t)\right)
\end{align*}
Replacing $\gamma_u(t)$ by $\| \w_u(t) \|$ gives the required expression.
\end{proof}

\section{Proof of Proposition \ref{prop:lzero:gf}} \label{proof:prop:lzero:gf}
\begin{proposition*}
Under assumption (A1) for gradient flow, for both SWN and EWN, $\lim_{t \to \infty} \L(\w(t)) = 0$.
\end{proposition*}

The proof for SWN, as it is homogeneous in its parameters, was provided by \citet{LyuLi20}. We provide the proof for EWN.

First of all, for exponential loss
\begin{equation}
\label{eq:exp:loss:grad}
\frac{d\L(t)}{d\w} = -\sum_i e^{-y_i\Phi(\w(t), \x_i)} y_i\nabla_\w \Phi(\w(t), \x_i)
\end{equation}
Now, using Theorem \ref{theorem:1},
\[ \frac{d\L(t)}{dt} = \left(\frac{d\L(t)}{d\w}\right)^\top \frac{d\w(t)}{dt} = -\sum_u \| \w_u(t) \|^2 \left\| \frac{d\L(t)}{d\w_u} \right\|^2 \]
Let $k$ be the total number of neurons in the network. Then using the elementary inequality, $(\sum_{i=1}^n a_i)^2 \leq n\sum_{i=1}^n a_i^2$, we get
\[ \frac{d\L(t)}{dt} \leq -\frac{1}{k} \left(\sum_u \| \w_u(t) \| \left\| \frac{d\L(t)}{d\w_u} \right\| \right)^2 \]
Again using the fact that $\left|\w(t)^\top \frac{d\L(t)}{d\w}\right| \leq \sum_u \| \w_u(t) \| \left\| \frac{d\L(t)}{d\w_u} \right\|$, we get
\begin{equation}
\label{eq:dL:dt:upp:bound}
\frac{d\L(t)}{dt} \leq -\frac{1}{k} \left(\w(t)^\top \frac{d\L(t)}{d\w}\right)^2    
\end{equation}
Taking the dot product with $\w$ on both sides of Equation (\ref{eq:exp:loss:grad}) and using $\w^\top \nabla_\w \Phi(\w, \x_i) = L \Phi(\w, \x_i)$ (Euler's homogeneity theorem), we get
\[ \w(t)^\top \frac{d\L(t)}{d\w} = -L\sum_i e^{-y_i\Phi(\w(t), \x_i)} y_i\Phi(\w(t), \x_i) \]
Now, using the fact, that at time $t_0$, $\L(t_0) < 1$, which means $\min_i y_i\Phi(\w(t_0), \x_i) = \epsilon > 0$. Also, as we know, for gradient flow, the loss cannot go up, therefore, for any time $t > t_0$, $\min_i y_i\Phi(\w(t), \x_i) > \epsilon > 0$. Using this, we can say, for any $t > t_0$,
\[ \w(t)^\top \frac{d\L(t)}{d\w} \leq -L\epsilon \L(t)) \]
Substituting this in Equation (\ref{eq:dL:dt:upp:bound}), we get
\[ \frac{d\L(t)}{dt} \leq - \frac{L^2\epsilon^2}{k} \L(t)^2 \]
Integrating this equation from $t_0$ to $t$, we get
\begin{equation}
\label{eq:finiteconvrate:gf}
\frac{1}{\L(t)} \geq \frac{1}{\L(t_0)} + \frac{L^2\epsilon^2}{k} (t - t_0)
\end{equation}
Clearly as t tends to $\infty$, RHS tends to $\infty$ and thus $\L$ tends to 0.
\section{Proof of Theorem \ref{thm:weightconv}} \label{proof:thm:weightconv}
\begin{theorem*}
    Consider a node $u$ in the network with $\| \widetilde{\g}_u \| > 0$ and $\lim_{t \to \infty} \| \w_u(t) \| = \infty$. Under assumptions (A1), (A2) for gradient flow and (B1)-(B3) for gradient descent, for both SWN and EWN
    \begin{enumerate}[label=(\roman*)]
        \begin{minipage}{0.5\textwidth}
        \item $\lim_{t \to \infty} \frac{\w_u(t)}{\| \w_u(t) \|} := \widetilde{\w}_u$ exists.
        \end{minipage}
        \begin{minipage}{0.5\textwidth}
        \item $\widetilde{\w}_u = \lambda \widetilde \g_u$ for some $\lambda > 0$.
        \end{minipage}
    \end{enumerate}
\end{theorem*}
The proof for different cases will be split into different subsections and corresponding theorem will be stated there for ease of the reader. The proof will depend on the Stolz Cesaro theorem(stated in Appendix \ref{intstolz}),Integral Form of Stolz-Cesaro Theorem(stated and proved in Appendix \ref{intstolz}) and following lemmas that have been proved in Appendix \ref{pflemmas}. 

% \begin{lemma}
% \label{lemma:1}
% Under assumptions (A1)-(A2) for gradient flow and (A1)-(A4) for gradient descent, for both SWN and EWN, $\widetilde \w_u^\top \widetilde \g_u \geq 0$ for all nodes $u$ in the network.
% \end{lemma}

% \begin{lemma}
% \label{lemma:3}
%     Consider two unit vectors $\ba$ and $\bb$ satisfying $\ba^{\top}\bb \geq 0$ and $\ba^{\top}\bb < 1$. Then, there exists a small enough $\epsilon > 0$, such that for any unit vector $\bc$ satisfying $\bc^\top \ba \geq \cos(\epsilon)$ and any unit vector $\bd$ satisfying $\bd^\top \bb \geq \cos(\epsilon)$, $\bb^\top(I - \bc\bc^\top)\bd \geq \epsilon$.
% \end{lemma}

\begin{lemma}
\label{lemma:4}
Consider sequence a satisfying the following properties
\begin{enumerate}
    \item $a_k > 0$
    \item $\sum_{k=0}^\infty a_k = \infty$
    \item $\lim_{k \to \infty} a_k = 0$
\end{enumerate}
Then $\sum_{k=0}^{\infty} \frac{a_k}{\sqrt{\sum_{j=0}^{k} a_j^2}} = \infty$
\end{lemma}

\begin{lemma}
\label{lemma:5}
Consider two sequences a and b satisfying the following properties
\begin{enumerate}
    \item $a_k > 0, \sum_{k=0}^\infty a_k = \infty$ and $\lim_{k \to \infty} a_k = 0$
    \item $b_0 > 0$, $b$ is increasing and $b_{k+1}^2 \leq b_k^2 + \left(\frac{a_k}{b_k}\right)^2$
\end{enumerate}
Then $\sum_{k=0}^\infty \frac{a_k}{b_k} = \infty$.
\end{lemma}

% \begin{lemma}
% \label{lemma:6}
%     Consider two sequences a and b satisfying the following properties
% \begin{enumerate}
%     \item $a_k > 0$ and $\sum_{k=0}^\infty a_k = \infty$
%     \item $b_k > 0$ and $\sum_{k=0}^\infty b_k = \infty$
%     \item $\sum_{k=0}^\infty (a_k - b_k)$ converges to a finite value
%     \item $lim_{k \to \infty} \frac{a_k}{b_k}$ exists
% \end{enumerate}
% Then $\lim_{k \to \infty} \frac{a_k}{b_k} = 1$.
%\end{lemma}

\subsection{Exponential Weight Normalization} \label{proof:ewn:thm:weightconv}
In this section, we will use $e^{\alpha_u(t)}$ and $\| \w_u(t) \|$ interchangeably.
\subsubsection{Gradient Flow} \label{proof:ewn:gf:thm:weightconv}
\begin{theorem*}
    Consider a node $u$ in the network with $\| \widetilde{\g}_u \| > 0$ and $\lim_{t \to \infty} \| \w_u(t) \| = \infty$. Under assumptions (A1), (A2) for gradient flow, for EWN
    \begin{enumerate}[label=(\roman*)]
        \begin{minipage}{0.5\textwidth}
        \item $\lim_{t \to \infty} \frac{\w_u(t)}{\| \w_u(t) \|} := \widetilde{\w}_u$ exists.
        \end{minipage}
        \begin{minipage}{0.5\textwidth}
        \item $\widetilde{\w}_u = \lambda \widetilde \g_u$ for some $\lambda > 0$.
        \end{minipage}
    \end{enumerate}
\end{theorem*}
\textbf{Update Equations:}
\begin{equation}
    \label{eq:alpha:EWN:gf}
    \frac{d\alpha_u(t)}{dt} = -\eta(t)e^{\alpha_u(t)} (\bv_u(t)^\top \nabla_{\w_u} \L(\w(t)))
\end{equation}

\begin{equation}
    \label{eq:v:EWN:gf}
    \frac{d \bv_u(t)}{dt} = -\eta(t)e^{\alpha_u(t)}(I - \bv_u(t)\bv_u(t)^\top)\nabla_{\w_u} \L(\w(t))
\end{equation}

\begin{proof}
As $\| \widetilde \g_u \| > 0$, therefore $\nabla_{\w_u} \L(\w(t))$ converges in direction. Therefore, for every $\tau$ satisfying $0 < \tau < 2\pi$, there exists a time $t_1(\tau)$, such that for $t > t_1$,  $\left( \frac{-\nabla_{\w_u} \L(\w(t))}{\| \nabla_{\w_u} \L(\w(t))\|} \right)^\top \left(\frac{\widetilde \g_u}{\| \widetilde \g_u \|}\right) \geq \cos(\tau)$. Now, Let's assume that $\w_u(t)$ does not converge in the direction of $\widetilde \g_u$. Then, there must exist a $\tau$ satisfying $0 < \tau < 2\pi$, such that for this $\tau$, there exists a time $t_2 > t_1(\tau)$ satisfying $\bv_u(t_2)^\top \left(\frac{\widetilde \g_u}{\| \widetilde \g_u \|}\right) = \cos(\Delta)$, where $\Delta > \tau$.

Now, we are going to show that for any $\kappa$ satisfying $\tau < \kappa < \Delta$, there exists a time $t_3 > t_2$ such that $\bv_u(t_3)^\top \left(\frac{\widetilde \g_u}{\| \widetilde \g_u \|}\right) > \cos(\kappa)$. Let's say for a given $\kappa$, no such $t_3$ exists. Then, taking dot product with $\frac{\widetilde \g_u}{\| \widetilde \g_u \|}$ on both sides of Equation (\ref{eq:v:EWN:gf}), we can say
\[ \left( \frac{\widetilde \g_u}{\| \widetilde \g_u \|} \right)^\top \frac{d \bv_u(t)}{dt} = \eta(t) e^{\alpha_u(t)} \| \nabla_{\w_u} \L(\w(t)) \| \left( \frac{\widetilde \g_u}{\| \widetilde \g_u \|} \right)^\top (I - \bv_u(t)\bv_u(t)^\top) \left( \frac{-\nabla_{\w_u} \L(\w(t))}{\| \nabla_{\w_u} \L(\w(t)) \|} \right) \]
Now, as $\left( \frac{\widetilde \g_u}{\| \widetilde \g_u \|} \right)^\top \left( \frac{-\nabla_{\w_u} \L(\w(t))}{\| \nabla_{\w_u} \L(\w(t)) \|} \right) \geq \cos(\tau)$ and $\left( \frac{\widetilde \g_u}{\| \widetilde \g_u \|} \right)^\top \bv_u \leq \cos(\kappa)$, we can say
\begin{equation}
\label{eq:ewn:gf:w:conv}
 \left( \frac{\widetilde \g_u}{\| \widetilde \g_u \|} \right)^\top \frac{d \bv_u(t)}{dt} \geq \eta(t) e^{\alpha_u(t)} \| \nabla_{\w_u} \L(\w(t)) \| (\cos(\tau) - \cos(\kappa))   
\end{equation}
Now, using the fact that $\alpha_u \to \infty$ and using Equation (\ref{eq:alpha:EWN:gf}), we can say
\[ \int_{t = t_2}^\infty \eta(t) e^{\alpha_u(t)} \| \nabla_{\w_u} \L(\w(t)) \| dt = \infty \]
Using this fact and integrating the Equation (\ref{eq:ewn:gf:w:conv}) on both the sides from $t_2$ to $\infty$, we get a contradiction as vectors on LHS have a finite norm while RHS tends to $\infty$. Thus, for every $\kappa$ between $\tau$ and $\Delta$, there must exist a $t_3$, such that $\bv_u(t_3)^\top \left(\frac{\widetilde \g_u}{\| \widetilde \g_u \|}\right) > \cos(\kappa)$.

Now, we are going to show for all $t \geq t_3$, $\bv_u(t)^\top \left(\frac{\widetilde \g_u}{\| \widetilde \g_u \|}\right) > \cos(\kappa)$. Now, consider any $\beta$ such that $\tau < \beta < \kappa$. Using similar argument as in Equation (\ref{eq:ewn:gf:w:conv}), we can say, if for any $t_4 > t_3$, $\bv_u(t_4)^\top \left(\frac{\widetilde \g_u}{\| \widetilde \g_u \|}\right) < \cos(\beta)$, then 
\begin{equation}
\label{eq:ewn:gf:w:conv:2}
 \left( \frac{\widetilde \g_u}{\| \widetilde \g_u \|} \right)^\top \frac{d \bv_u(t_4)}{dt} \geq \eta(t_4) e^{\alpha_u(t_4)} \| \nabla_{\w_u} \L(\w(t_4)) \| (\cos(\tau) - \cos(\beta))   
\end{equation}
This means that the dot product between $\left( \frac{\widetilde \g_u}{\| \widetilde \g_u \|} \right)$ and $\bv_u(t)$ goes up, whenever $\left( \frac{\widetilde \g_u}{\| \widetilde \g_u \|} \right)^\top \bv_u(t) < \cos(\tau)$. Therefore, its not possible that $\bv_u(t)^\top \left(\frac{\widetilde \g_u}{\| \widetilde \g_u \|}\right) \leq \cos(\kappa)$ for any $t > t_3$. As $\kappa$ can be arbitrarily chosen between $\tau$ and $\Delta$, $\w_u(t)$ converges in the direction of $\widetilde \g_u$.
\end{proof}
\subsubsection{Gradient Descent} \label{proof:ewn:gd:thm:weightconv}
\begin{theorem*}
    Consider a node $u$ in the network with $\| \widetilde{\g}_u \| > 0$ and $\lim_{t \to \infty} \| \w_u(t) \| = \infty$. Under assumptions (B1)-(B3) for gradient descent, for EWN
    \begin{enumerate}[label=(\roman*)]
        \begin{minipage}{0.5\textwidth}
        \item $\lim_{t \to \infty} \frac{\w_u(t)}{\| \w_u(t) \|} := \widetilde{\w}_u$ exists.
        \end{minipage}
        \begin{minipage}{0.5\textwidth}
        \item $\widetilde{\w}_u = \lambda \widetilde \g_u$ for some $\lambda > 0$.
        \end{minipage}
    \end{enumerate}
\end{theorem*}
\textbf{Update Equations:}
\begin{equation}
    \label{eq:alpha:EWN:gd:2}
    \alpha_u(t+1) = \alpha_u(t) - \eta(t)e^{\alpha_u(t)} \frac{\bv_u(t)^\top \nabla_{\w_u} \L(\w(t))}{\| \bv_u(t) \|}
\end{equation}
\begin{equation}
    \label{eq:v:EWN:gd:2}
    \bv_u(t+1) = \bv_u(t) - \eta(t)\frac{e^{\alpha_u(t)}}{\| \bv_u(t) \|}\left(I - \frac{\bv_u(t)\bv_u(t)^\top}{\| \bv_u(t) \|^2}\right)\nabla_{\w_u} \L(\w(t))
\end{equation}

\begin{proof}

As $\| \widetilde \g_u \| > 0$, therefore $\nabla_{\w_u} \L(\w(t))$ converges in direction. Therefore, for every $\tau$ satisfying $0 < \tau < 2\pi$, there exists a time $t_1(\tau)$, such that for $t > t_1(\tau)$,  $\left( \frac{-\nabla_{\w_u} \L(\w(t))}{\| \nabla_{\w_u} \L(\w(t))\|} \right)^\top \left(\frac{\widetilde \g_u}{\| \widetilde \g_u \|}\right) \geq \cos(\tau)$. Now, Let's assume that $\w_u(t)$ does not converge in the direction of $\widetilde \g_u$. Then, there must exist a $\tau$ satisfying $0 < \tau < 2\pi$, such that for this $\tau$, there exists a time $t_2 > t_1(\tau)$ satisfying $\bv_u(t_2)^\top \left(\frac{\widetilde \g_u}{\| \widetilde \g_u \|}\right) = \cos(\Delta)$, where $\Delta > \tau$.

Now, we are going to show that for any $\kappa$ satisfying $\tau < \kappa < \Delta$, there exists a time $t_3 > t_2$ such that $\left( \frac{\bv_u(t_3)}{\| \bv_u(t_3) \|} \right)^\top \left(\frac{\widetilde \g_u}{\| \widetilde \g_u \|}\right) > \cos(\kappa)$. Let's say for a given $\kappa$, no such $t_3$ exists. Then, taking dot product with $\frac{\widetilde \g_u}{\| \widetilde \g_u \|}$ on both sides of Equation (\ref{eq:v:EWN:gd:2}), we can say
\begin{align*}
\frac{\bv_u(t+1)^\top\widetilde \g_u}{\| \widetilde \g_u \|} = &\frac{\bv_u(t)^\top\widetilde \g_u}{\| \widetilde \g_u \|} + \\
&\eta(t)\frac{e^{\alpha_u(t)}}{\| \bv_u(t) \|} \| \nabla_{\w_u} \L(\w(t)) \|\left(\frac{\widetilde \g_u}{\| \widetilde \g_u \|}\right)^\top \left(I - \frac{\bv_u(t)\bv_u(t)^\top}{\| \bv_u(t) \|^2}\right) \left( \frac{-\nabla_{\w_u} \L(\w(t))}{\| \nabla_{\w_u} \L(\w(t)) \|} \right)    
\end{align*}

Now, as $\left( \frac{\widetilde \g_u}{\| \widetilde \g_u \|} \right)^\top \left( \frac{-\nabla_{\w_u} \L(\w(t))}{\| \nabla_{\w_u} \L(\w(t)) \|} \right) \geq \cos(\tau)$ and $\left( \frac{\widetilde \g_u}{\| \widetilde \g_u \|} \right)^\top \left( \frac{\bv_u(t)}{\| \bv_u(t) \|} \right) \leq \cos(\kappa)$, we can say

\begin{equation}
\label{eq:ewn:gd:w:conv}
 \frac{\bv_u(t+1)^\top\widetilde \g_u}{\| \widetilde \g_u \|} \geq \frac{\bv_u(t)^\top\widetilde \g_u}{\| \widetilde \g_u \|} + (\cos(\tau) - \cos(\kappa))\left(\eta(t)\frac{e^{\alpha_u(t)}}{\| \bv_u(t) \|} \| \nabla_{\w_u} \L(\w(t)) \|\right)
\end{equation}

However, in this case, $\| \bv_u(t) \|$ doesn't stay constant and thus increase in dot product doesn't directly correspond to an increase in angle. Now, using Equation (\ref{eq:v:EWN:gd:2}), we can say
\begin{equation}
\label{eq:v:bound:EWN}
    \| \bv_u(t+1) \|^2 \leq \| \bv_u(t) \|^2 + \left(\eta(t)\frac{e^{\alpha_u(t)}}{\| \bv_u(t) \|} \| \nabla_{\w_u} \L(\w(t)) \|\right)^2
\end{equation}
Using the above two equations, we can say, for time $t>t_2$,
\begin{equation*}
    \frac{\bv_u(t+1)^\top\widetilde \g_u}{\| \bv_u(t+1) \|\| \widetilde \g_u \|} \geq \frac{\frac{\bv_u(t)^\top\widetilde \g_u}{\|\widetilde \g_u\|} + (\cos(\tau) - \cos(\kappa))\left(\eta(t)\frac{e^{\alpha_u(t)}}{\| \bv_u(t) \|} \| \nabla_{\w_u} \L(\w(t)) \|\right)}{\sqrt{\| \bv_u(t) \|^2 + \left(\eta(t)\frac{e^{\alpha_u(t)}}{\| \bv_u(t) \|} \| \nabla_{\w_u} \L(\w(t)) \|\right)^2}}
\end{equation*}
Unrolling the equation above, we get
\begin{equation}
    \label{eq:unitv:unitg:dot:EWN}
    \frac{\bv_u(t+1)^\top\widetilde \g_u}{\| \bv_u(t+1) \| \| \widetilde \g_u \|} \geq \frac{\frac{\bv_u(t_2)^\top\widetilde \g_u}{\|\widetilde \g_u\|} + \sum_{k=t_2}^{k=t} (\cos(\tau) - \cos(\kappa))\left(\eta(k)\frac{e^{\alpha_u(k)}}{\| \bv_u(k) \|} \| \nabla_{\w_u} \L(\w(k)) \|\right)}{\sqrt{\| \bv_u(t_2) \|^2 + \sum_{k=t_2}^{k=t}\left(\eta(k)\frac{e^{\alpha_u(k)}}{\| \bv_u(k) \|} \| \nabla_{\w_u} \L(\w(k)) \|\right)^2}}
\end{equation}
Now, as $\alpha_u(t) \to \infty$, therefore, using Equation (\ref{eq:alpha:EWN:gd:2}), we can say
\[ \sum_{k=t_2}^{k=\infty} \eta(k) e^{\alpha_u(k)} \| \nabla_{\w_u} \L(\w(k)) \| = \infty \]
Now, using this identity, along with the Assumption (A5), Equation (\ref{eq:v:bound:EWN}) and Lemma \ref{lemma:5}, we can say
\[ \sum_{k=t_2}^\infty \eta(k)\frac{e^{\alpha_u(k)}}{\| \bv_u(k) \|} \| \nabla_{\w_u} \L(\w(k)) \| = \infty \]
Using this along with Equation (\ref{eq:unitv:unitg:dot:EWN}) and Lemma \ref{lemma:4}, we can say
\[ \lim_{t \to \infty}  \frac{\bv_u(t+1)^\top\widetilde \g_u}{\| \bv_u(t+1) \| \| \widetilde \g_u \|} \geq \infty \]
However, this is not possible as the vectors on LHS have bounded norm. This contradicts. 
Thus there must exist a $t_3$ such that $\left( \frac{\bv_u(t_3)}{\| \bv_u(t_3) \|} \right)^\top \left(\frac{\widetilde \g_u}{\| \widetilde \g_u \|}\right) > \cos(\kappa)$.

Now, we are going to show that there exists a $t_4 > t_3$, such that for all $t > t_4$, 
$\left( \frac{\bv_u(t)}{\| \bv_u(t) \|} \right)^\top \left(\frac{\widetilde \g_u}{\| \widetilde \g_u \|}\right) > \cos(\kappa)$. Consider a $\beta$ such that $\tau < \beta < \kappa$. Now, if at any time t, $\left( \frac{\bv_u(t)}{\| \bv_u(t) \|} \right)^\top \left(\frac{\widetilde \g_u}{\| \widetilde \g_u \|}\right) < \cos(\beta)$, then, similar to Equation (\ref{eq:ewn:gd:w:conv}), we can say
\begin{equation*}
 \frac{\bv_u(t+1)^\top\widetilde \g_u}{\| \widetilde \g_u \|} \geq \frac{\bv_u(t)^\top\widetilde \g_u}{\| \widetilde \g_u \|} + (\cos(\tau) - \cos(\beta))\left(\eta(t)\frac{e^{\alpha_u(t)}}{\| \bv_u(t) \|} \| \nabla_{\w_u} \L(\w(t)) \|\right)
\end{equation*}
Using the upper bound on $\| \bv_u(t+1) \|$ from Equation (\ref{eq:v:bound:EWN}), we can say
\begin{equation}
\label{eq:ewn:gd:w:conv:2}
    \frac{\bv_u(t+1)^\top\widetilde \g_u}{\| \bv_u(t+1) \|\| \widetilde \g_u \|} \geq \frac{\frac{\bv_u(t)^\top\widetilde \g_u}{\|\widetilde \g_u\|} + (\cos(\tau) - \cos(\beta))\left(\eta(t)\frac{e^{\alpha_u(t)}}{\| \bv_u(t) \|} \| \nabla_{\w_u} \L(\w(t)) \|\right)}{\sqrt{\| \bv_u(t) \|^2 + \left(\eta(t)\frac{e^{\alpha_u(t)}}{\| \bv_u(t) \|} \| \nabla_{\w_u} \L(\w(t)) \|\right)^2}}
\end{equation}
Let $\eta(t)\frac{e^{\alpha_u(t)}}{\| \bv_u(t) \|} \| \nabla_{\w_u} \L(\w(t)) \|$ be denoted by $\chi(t)$. Then, the above equation can be rewritten as
\[  \frac{\bv_u(t+1)^\top\widetilde \g_u}{\| \bv_u(t+1) \|\| \widetilde \g_u \|} \geq \frac{\bv_u(t)^\top\widetilde \g_u}{\| \bv_u(t) \| \|\widetilde \g_u\|} \frac{\| \bv_u(t) \|}{\sqrt{\| \bv_u(t) \|^2 + \chi(t)^2}} + (\cos(\tau) - \cos(\beta))\frac{\chi(t)}{\sqrt{\| \bv_u(t) \|^2 + \chi(t)^2}} \]
Now, we are going to show that for a small enough $\chi(t)$, RHS is greater than $\frac{\bv_u(t)^\top\widetilde \g_u}{\| \bv_u(t) \| \|\widetilde \g_u\|}$.
\begin{flalign*}
    &\frac{\bv_u(t)^\top\widetilde \g_u}{\| \bv_u(t) \| \|\widetilde \g_u\|} \frac{\| \bv_u(t) \|}{\sqrt{\| \bv_u(t) \|^2 + \chi(t)^2}} + (\cos(\tau) - \cos(\beta))\frac{\chi(t)}{\sqrt{\| \bv_u(t) \|^2 + \chi(t)^2}} > \frac{\bv_u(t)^\top\widetilde \g_u}{\| \bv_u(t) \| \|\widetilde \g_u\|} \\
    &\implies (\cos(\tau) - \cos(\beta))\frac{\chi(t)}{\sqrt{\| \bv_u(t) \|^2 + \chi(t)^2}} > \frac{\bv_u(t)^\top\widetilde \g_u}{\| \bv_u(t) \| \|\widetilde \g_u\|} \left(1 - \frac{\| \bv_u(t) \|}{\sqrt{\| \bv_u(t) \|^2 + \chi(t)^2}}\right) \\
    &\implies (\cos(\tau) - \cos(\beta)) > \frac{\bv_u(t)^\top\widetilde \g_u}{\| \bv_u(t) \| \|\widetilde \g_u\|}\left(\frac{\sqrt{\| \bv_u(t) \|^2 + \chi(t)^2} - \| \bv_u(t) \|}{\chi(t)}\right)\\
\end{flalign*}

Clearly as $\chi(t) \to 0$, the RHS tends to 0, therefore the equation is satisfied. Thus for a small enough $\chi(t)$, RHS of Equation (\ref{eq:ewn:gd:w:conv:2}) is greater than $\frac{\bv_u(t)^\top\widetilde \g_u}{\| \bv_u(t) \| \|\widetilde \g_u\|}$.
As $\| \bv_u(t) \|$ keeps on increasing and by Assumption (B3), $\lim_{t \to \infty} \eta(t) \| \w_u(t) \| \| \nabla_{\w_u} \L(\w(t)) \| = 0$, we can say there exists a time $t_5$, such that for any $t > t_5$, $\frac{\bv_u(t)^\top\widetilde \g_u}{\| \bv_u(t) \| \|\widetilde \g_u\|}$ goes up whenever $\left( \frac{\bv_u(t)}{\| \bv_u(t) \|} \right)^\top \left(\frac{\widetilde \g_u}{\| \widetilde \g_u \|}\right) < \cos(\beta)$. 

Also, by using Equation (\ref{eq:v:EWN:gd:2}) and Assumption (B3), we can say, that there exists a time $t_6$, such that for $t > t_6$, $\left( \frac{\bv_u(t)}{\| \bv_u(t) \|} \right)^\top \left(\frac{\widetilde \g_u}{\| \widetilde \g_u \|}\right) > \cos(\beta) \implies \left( \frac{\bv_u(t+1)}{\| \bv_u(t+1) \|} \right)^\top \left(\frac{\widetilde \g_u}{\| \widetilde \g_u \|}\right) > \cos(\kappa)$, as the RHS of Equation (\ref{eq:v:EWN:gd:2}) goes to 0 norm in limit. Now, define $t_4 > max(t_5, t_6)$ such that $\left( \frac{\bv_u(t_4)}{\| \bv_u(t_4) \|} \right)^\top \left(\frac{\widetilde \g_u}{\| \widetilde \g_u \|}\right) > \cos(\kappa)$ (must exist from previous arguments). Then, as the dot product always goes up when between $\cos(\beta)$ and $\cos(\kappa)$, and can't go in a single step from being greater than $\cos(\beta)$ to less than $\cos(\kappa)$, therefore, for every $t > t_4$, $\left( \frac{\bv_u(t)}{\| \bv_u(t) \|} \right)^\top \left(\frac{\widetilde \g_u}{\| \widetilde \g_u \|}\right) > \cos(\kappa)$.

Now as the above argument holds for any $\kappa$ between $\tau$ and $\Delta$, and for any $\tau > 0$, we can say that $\w_u(t)$ converges in direction of $\widetilde \g_u$.
\end{proof}

\subsection{Standard Weight Normalization} \label{proof:swn:thm:weightconv}
\subsubsection{Gradient flow} \label{proof:swn:gf:thm:weightconv}
\begin{theorem*}
    Consider a node $u$ in the network with $\| \widetilde{\g}_u \| > 0$ and $\lim_{t \to \infty} \| \w_u(t) \| = \infty$. Under assumptions (A1), (A2) for gradient flow, for SWN
    \begin{enumerate}[label=(\roman*)]
        \begin{minipage}{0.5\textwidth}
        \item $\lim_{t \to \infty} \frac{\w_u(t)}{\| \w_u(t) \|} := \widetilde{\w}_u$ exists.
        \end{minipage}
        \begin{minipage}{0.5\textwidth}
        \item $\widetilde{\w}_u = \lambda \widetilde \g_u$ for some $\lambda > 0$.
        \end{minipage}
    \end{enumerate}
\end{theorem*}
\textbf{Update Equations:}
\begin{equation}
    \label{eq:gamma:SWN:gf}
    \frac{d\gamma_u(t)}{dt} = -\eta(t) \frac{\bv_u(t)^\top \nabla_{\w_u} \L(\w(t))}{\| \bv_u(t) \|}
\end{equation}
\begin{equation}
    \label{eq:v:SWN:gf}
    \frac{d \bv_u(t)}{dt} = -\eta(t)\frac{\gamma_u(t)}{\| \bv_u(t) \|}\left(I - \frac{\bv_u(t)\bv_u(t)^\top}{\| \bv_u(t) \|^2}\right)\nabla_{\w_u} \L(\w(t))
\end{equation}

\begin{proof}
The proof will be given for $\gamma_u \to \infty$. The one for $\gamma_u \to -\infty$ can be handled similarly.

As $\| \widetilde \g_u \| > 0$, therefore $\nabla_{\w_u} \L(\w(t))$ converges in direction. Therefore, for every $\tau$ satisfying $0 < \tau < 2\pi$, there exists a time $t_1(\tau)$, such that for $t > t_1(\tau)$,  $\left( \frac{-\nabla_{\w_u} \L(\w(t))}{\| \nabla_{\w_u} \L(\w(t))\|} \right)^\top \left(\frac{\widetilde \g_u}{\| \widetilde \g_u \|}\right) \geq \cos(\tau)$. Now, Let's assume that $\w_u(t)$ does not converge in the direction of $\widetilde \g_u$. Then, there must exist a $\tau$ satisfying $0 < \tau < 2\pi$, such that for this $\tau$, there exists a time $t_2 > t_1(\tau)$ satisfying $\bv_u(t_2)^\top \left(\frac{\widetilde \g_u}{\| \widetilde \g_u \|}\right) = \cos(\Delta)$, where $\Delta > \tau$.

Now, we are going to show that for any $\kappa$ satisfying $\tau < \kappa < \Delta$, there exists a time $t_3 > t_2$ such that $\bv_u(t_3)^\top \left(\frac{\widetilde \g_u}{\| \widetilde \g_u \|}\right) > \cos(\kappa)$. Let's say for a given $\kappa$, no such $t_3$ exists. Then, taking dot product with $\frac{\widetilde \g_u}{\| \widetilde \g_u \|}$ on both sides of Equation (\ref{eq:v:SWN:gf}), we can say
\[ \left( \frac{\widetilde \g_u}{\| \widetilde \g_u \|} \right)^\top \frac{d \bv_u(t)}{dt} = \eta(t) \gamma_u(t) \| \nabla_{\w_u} \L(\w(t)) \| \left( \frac{\widetilde \g_u}{\| \widetilde \g_u \|} \right)^\top (I - \bv_u(t)\bv_u(t)^\top) \left( \frac{-\nabla_{\w_u} \L(\w(t))}{\| \nabla_{\w_u} \L(\w(t)) \|} \right) \]
Now, as $\left( \frac{\widetilde \g_u}{\| \widetilde \g_u \|} \right)^\top \left( \frac{-\nabla_{\w_u} \L(\w(t))}{\| \nabla_{\w_u} \L(\w(t)) \|} \right) \geq \cos(\tau)$ and $\left( \frac{\widetilde \g_u}{\| \widetilde \g_u \|} \right)^\top \bv_u \leq \cos(\kappa)$, we can say
\begin{equation}
\label{eq:swn:gf:w:conv}
 \left( \frac{\widetilde \g_u}{\| \widetilde \g_u \|} \right)^\top \frac{d \bv_u(t)}{dt} \geq \eta(t) \gamma_u(t) \| \nabla_{\w_u} \L(\w(t)) \| (\cos(\tau) - \cos(\kappa))   
\end{equation}
Now, using the fact that $\gamma_u \to \infty$ and using Equation (\ref{eq:gamma:SWN:gf}), we can say
\[ \int_{t = t_2}^\infty \eta(t) \| \nabla_{\w_u} \L(\w(t)) \| dt = \infty \]
Using this fact and integrating the Equation (\ref{eq:swn:gf:w:conv}) on both the sides from $t_2$ to $\infty$, we get a contradiction as vectors on LHS have a finite norm while RHS tends to $\infty$. Thus, for every $\kappa$ between $\tau$ and $\Delta$, there must exist a $t_3$, such that $\bv_u(t_3)^\top \left(\frac{\widetilde \g_u}{\| \widetilde \g_u \|}\right) > \cos(\kappa)$.

Now, we are going to show for all $t \geq t_3$, $\bv_u(t)^\top \left(\frac{\widetilde \g_u}{\| \widetilde \g_u \|}\right) > \cos(\kappa)$. Now, consider any $\beta$ such that $\tau < \beta < \kappa$. Using similar argument as in Equation (\ref{eq:swn:gf:w:conv}), we can say, if for any $t_4 > t_3$, $\bv_u(t_4)^\top \left(\frac{\widetilde \g_u}{\| \widetilde \g_u \|}\right) < \cos(\beta)$, then 
\begin{equation}
\label{eq:swn:gf:w:conv:2}
 \left( \frac{\widetilde \g_u}{\| \widetilde \g_u \|} \right)^\top \frac{d \bv_u(t_4)}{dt} \geq \eta(t_4) \gamma_u(t_4) \| \nabla_{\w_u} \L(\w(t_4)) \| (\cos(\tau) - \cos(\beta))   
\end{equation}
This means that the dot product between $\left( \frac{\widetilde \g_u}{\| \widetilde \g_u \|} \right)$ and $\bv_u(t)$ goes up, whenever $\left( \frac{\widetilde \g_u}{\| \widetilde \g_u \|} \right)^\top \bv_u(t) < \cos(\tau)$. Therefore, its not possible that $\bv_u(t)^\top \left(\frac{\widetilde \g_u}{\| \widetilde \g_u \|}\right) \leq \cos(\kappa)$ for any $t > t_3$. As $\kappa$ can be arbitrarily chosen between $\tau$ and $\Delta$, $\w_u(t)$ converges in the direction of $\widetilde \g_u$
\end{proof}

\subsubsection{Gradient Descent} \label{proof:swn:gd:thm:weightconv}
\begin{theorem*}
    Consider a node $u$ in the network with $\| \widetilde{\g}_u \| > 0$ and $\lim_{t \to \infty} \| \w_u(t) \| = \infty$. Under assumptions (B1)-(B3) for gradient descent, for SWN
    \begin{enumerate}[label=(\roman*)]
        \begin{minipage}{0.5\textwidth}
        \item $\lim_{t \to \infty} \frac{\w_u(t)}{\| \w_u(t) \|} := \widetilde{\w}_u$ exists.
        \end{minipage}
        \begin{minipage}{0.5\textwidth}
        \item $\widetilde{\w}_u = \lambda \widetilde \g_u$ for some $\lambda > 0$.
        \end{minipage}
    \end{enumerate}
\end{theorem*}
\textbf{Update Equations:}
\begin{equation}
\label{eq:gamma:SWN:gd:2}
    \gamma_u(t+1) = \gamma_u(t) - \eta(t)\frac{\bv_u(t)^\top \nabla_{\w_u} \L(\w(t))}{\| \bv_u(t) \|}
\end{equation}
\begin{equation}
\label{eq:v:SWN:gd:2}
    \bv_u(t+1) = \bv_u(t) - \eta(t)\frac{\gamma_u(t)}{\| \bv_u(t) \|}\left(I - \frac{\bv_u(t)\bv_u(t)^\top}{\| \bv_u(t) \|^2}\right)\nabla_{\w_u} \L(\w(t))
\end{equation}

\begin{proof}
The proof will be given for $\gamma_u \to \infty$. The one for $\gamma_u \to -\infty$ can be handled similarly.

As $\| \widetilde \g_u \| > 0$, therefore $\nabla_{\w_u} \L(\w(t))$ converges in direction. Therefore, for every $\tau$ satisfying $0 < \tau < 2\pi$, there exists a time $t_1(\tau)$, such that for $t > t_1(\tau)$,  $\left( \frac{-\nabla_{\w_u} \L(\w(t))}{\| \nabla_{\w_u} \L(\w(t))\|} \right)^\top \left(\frac{\widetilde \g_u}{\| \widetilde \g_u \|}\right) \geq \cos(\tau)$. Now, Let's assume that $\w_u(t)$ does not converge in the direction of $\widetilde \g_u$. Then, there must exist a $\tau$ satisfying $0 < \tau < 2\pi$, such that for this $\tau$, there exists a time $t_2 > t_1(\tau)$ satisfying $\bv_u(t_2)^\top \left(\frac{\widetilde \g_u}{\| \widetilde \g_u \|}\right) = \cos(\Delta)$, where $\Delta > \tau$.

Now, we are going to show that for any $\kappa$ satisfying $\tau < \kappa < \Delta$, there exists a time $t_3 > t_2$ such that $\left( \frac{\bv_u(t_3)}{\| \bv_u(t_3) \|} \right)^\top \left(\frac{\widetilde \g_u}{\| \widetilde \g_u \|}\right) > \cos(\kappa)$. Let's say for a given $\kappa$, no such $t_3$ exists. Then, taking dot product with $\frac{\widetilde \g_u}{\| \widetilde \g_u \|}$ on both sides of Equation (\ref{eq:v:SWN:gd:2}), we can say
\begin{align*}
\frac{\bv_u(t+1)^\top\widetilde \g_u}{\| \widetilde \g_u \|} = &\frac{\bv_u(t)^\top\widetilde \g_u}{\| \widetilde \g_u \|} + \\
&\eta(t)\frac{\gamma_u(t)}{\| \bv_u(t) \|} \| \nabla_{\w_u} \L(\w(t)) \|\left(\frac{\widetilde \g_u}{\| \widetilde \g_u \|}\right)^\top \left(I - \frac{\bv_u(t)\bv_u(t)^\top}{\| \bv_u(t) \|^2}\right) \left( \frac{-\nabla_{\w_u} \L(\w(t))}{\| \nabla_{\w_u} \L(\w(t)) \|} \right)    
\end{align*}

Now, as $\left( \frac{\widetilde \g_u}{\| \widetilde \g_u \|} \right)^\top \left( \frac{-\nabla_{\w_u} \L(\w(t))}{\| \nabla_{\w_u} \L(\w(t)) \|} \right) \geq \cos(\tau)$ and $\left( \frac{\widetilde \g_u}{\| \widetilde \g_u \|} \right)^\top \left( \frac{\bv_u(t)}{\| \bv_u(t) \|} \right) \leq \cos(\kappa)$, we can say

\begin{equation}
\label{eq:swn:gd:w:conv}
 \frac{\bv_u(t+1)^\top\widetilde \g_u}{\| \widetilde \g_u \|} \geq \frac{\bv_u(t)^\top\widetilde \g_u}{\| \widetilde \g_u \|} + (\cos(\tau) - \cos(\kappa))\left(\eta(t)\frac{\gamma_u(t)}{\| \bv_u(t) \|} \| \nabla_{\w_u} \L(\w(t)) \|\right)
\end{equation}
However, in this case, $\| \bv_u(t) \|$ doesn't stay constant and thus increase in dot product doesn't directly correspond to an increase in angle. Now, using Equation (\ref{eq:v:SWN:gd:2}), we can say
\begin{equation}
\label{eq:v:bound:SWN:gd}
    \| \bv_u(t+1) \|^2 \leq \| \bv_u(t) \|^2 + \left(\eta(t)\frac{\gamma_u(t)}{\| \bv_u(t) \|} \| \nabla_{\w_u} \L(\w(t)) \|\right)^2
\end{equation}
Using the above two equations, we can say, for time $t>t_2$,
\begin{equation*}
    \frac{\bv_u(t+1)^\top\widetilde \g_u}{\| \bv_u(t+1) \|\| \widetilde \g_u \|} \geq \frac{\frac{\bv_u(t)^\top\widetilde \g_u}{\|\widetilde \g_u\|} + \epsilon\left(\eta(t)\frac{\gamma_u(t)}{\| \bv_u(t) \|} \| \nabla_{\w_u} \L(\w(t)) \|\right)}{\sqrt{\| \bv_u(t) \|^2 + \left(\eta(t)\frac{\gamma_u(t)}{\| \bv_u(t) \|} \| \nabla_{\w_u} \L(\w(t)) \|\right)^2}}
\end{equation*}
Unrolling the equation above, we get
\begin{equation}
    \label{eq:unitv:unitg:dot:SWN:gd}
    \frac{\bv_u(t+1)^\top\widetilde \g_u}{\| \bv_u(t+1) \| \| \widetilde \g_u \|} \geq \frac{\frac{\bv_u(t_2)^\top\widetilde \g_u}{\|\widetilde \g_u\|} + \sum_{k=t_2}^{k=t} \epsilon\left(\eta(k)\frac{\gamma_u(k)}{\| \bv_u(k) \|} \| \nabla_{\w_u} \L(\w(k)) \|\right)}{\sqrt{\| \bv_u(t_2) \|^2 + \sum_{k=t_2}^{k=t}\left(\eta(k)\frac{\gamma_u(k)}{\| \bv_u(k) \|} \| \nabla_{\w_u} \L(\w(k)) \|\right)^2}}
\end{equation}
Now, as $\gamma_u(t) \to \infty$, therefore, using Equation (\ref{eq:v:SWN:gd:2}), we can say
\[ \sum_{k=t_2}^{k=\infty} \eta(k) \| \nabla_{\w_u} \L(\w(k)) \| = \infty \]
Now, using this identity, along with the assumption (A5), Equation (\ref{eq:v:bound:SWN:gd}) and Lemma \ref{lemma:5}, we can say
\[ \sum_{k=t_2}^\infty \eta(k)\frac{\gamma_u(k)}{\| \bv_u(k) \|} \| \nabla_{\w_u} \L(\w(k)) \| = \infty \]
Using this along with Equation (\ref{eq:unitv:unitg:dot:SWN:gd}) and Lemma \ref{lemma:4}, we can say
\[ \lim_{t \to \infty}  \frac{\bv_u(t+1)^\top\widetilde \g_u}{\| \bv_u(t+1) \| \| \widetilde \g_u \|} \geq \infty \]
However, this is not possible as the vectors on LHS have bounded norm. This contradicts. Thus there must exist a $t_3$ such that $\left( \frac{\bv_u(t_3)}{\| \bv_u(t_3) \|} \right)^\top \left(\frac{\widetilde \g_u}{\| \widetilde \g_u \|}\right) > \cos(\kappa)$.

Now, we are going to show that there exists a $t_4 > t_3$, such that for all $t > t_4$, 
$\left( \frac{\bv_u(t)}{\| \bv_u(t) \|} \right)^\top \left(\frac{\widetilde \g_u}{\| \widetilde \g_u \|}\right) > \cos(\kappa)$. Consider a $\beta$ such that $\tau < \beta < \kappa$. Now, if at any time t, $\left( \frac{\bv_u(t)}{\| \bv_u(t) \|} \right)^\top \left(\frac{\widetilde \g_u}{\| \widetilde \g_u \|}\right) < \cos(\beta)$, then, similar to Equation (\ref{eq:swn:gd:w:conv}), we can say
\begin{equation*}
 \frac{\bv_u(t+1)^\top\widetilde \g_u}{\| \widetilde \g_u \|} \geq \frac{\bv_u(t)^\top\widetilde \g_u}{\| \widetilde \g_u \|} + (\cos(\tau) - \cos(\beta))\left(\eta(t)\frac{\gamma_u(t)}{\| \bv_u(t) \|} \| \nabla_{\w_u} \L(\w(t)) \|\right)
\end{equation*}
Using the upper bound on $\| \bv_u(t+1) \|$ from Equation (\ref{eq:v:bound:SWN:gd}), we can say
\begin{equation}
\label{eq:swn:gd:w:conv:2}
    \frac{\bv_u(t+1)^\top\widetilde \g_u}{\| \bv_u(t+1) \|\| \widetilde \g_u \|} \geq \frac{\frac{\bv_u(t)^\top\widetilde \g_u}{\|\widetilde \g_u\|} + (\cos(\tau) - \cos(\beta))\left(\eta(t)\frac{\gamma_u(t)}{\| \bv_u(t) \|} \| \nabla_{\w_u} \L(\w(t)) \|\right)}{\sqrt{\| \bv_u(t) \|^2 + \left(\eta(t)\frac{\gamma_u(t)}{\| \bv_u(t) \|} \| \nabla_{\w_u} \L(\w(t)) \|\right)^2}}
\end{equation}
Let $\eta(t)\frac{\gamma_u(t)}{\| \bv_u(t) \|} \| \nabla_{\w_u} \L(\w(t)) \|$ be denoted by $\chi(t)$. Then, the above equation can be rewritten as
\[  \frac{\bv_u(t+1)^\top\widetilde \g_u}{\| \bv_u(t+1) \|\| \widetilde \g_u \|} \geq \frac{\bv_u(t)^\top\widetilde \g_u}{\| \bv_u(t) \| \|\widetilde \g_u\|} \frac{\| \bv_u(t) \|}{\sqrt{\| \bv_u(t) \|^2 + \chi(t)^2}} + (\cos(\tau) - \cos(\beta))\frac{\chi(t)}{\sqrt{\| \bv_u(t) \|^2 + \chi(t)^2}} \]
Now, we are going to show that for a small enough $\chi(t)$, RHS is greater than $\frac{\bv_u(t)^\top\widetilde \g_u}{\| \bv_u(t) \| \|\widetilde \g_u\|}$.
\begin{flalign*}
    &\frac{\bv_u(t)^\top\widetilde \g_u}{\| \bv_u(t) \| \|\widetilde \g_u\|} \frac{\| \bv_u(t) \|}{\sqrt{\| \bv_u(t) \|^2 + \chi(t)^2}} + (\cos(\tau) - \cos(\beta))\frac{\chi(t)}{\sqrt{\| \bv_u(t) \|^2 + \chi(t)^2}} > \frac{\bv_u(t)^\top\widetilde \g_u}{\| \bv_u(t) \| \|\widetilde \g_u\|} \\
    &\implies (\cos(\tau) - \cos(\beta))\frac{\chi(t)}{\sqrt{\| \bv_u(t) \|^2 + \chi(t)^2}} > \frac{\bv_u(t)^\top\widetilde \g_u}{\| \bv_u(t) \| \|\widetilde \g_u\|} \left(1 - \frac{\| \bv_u(t) \|}{\sqrt{\| \bv_u(t) \|^2 + \chi(t)^2}}\right) \\
    &\implies (\cos(\tau) - \cos(\beta)) > \frac{\bv_u(t)^\top\widetilde \g_u}{\| \bv_u(t) \| \|\widetilde \g_u\|}\left(\frac{\sqrt{\| \bv_u(t) \|^2 + \chi(t)^2} - \| \bv_u(t) \|}{\chi(t)}\right)\\
\end{flalign*}

Clearly as $\chi(t) \to 0$, the RHS tends to 0, therefore the equation is satisfied. Thus for a small enough $\chi(t)$, RHS of Equation (\ref{eq:swn:gd:w:conv:2}) is greater than $\frac{\bv_u(t)^\top\widetilde \g_u}{\| \bv_u(t) \| \|\widetilde \g_u\|}$.
As $\| \bv_u(t) \|$ keeps on increasing and by Assumption (A5), $\lim_{t \to \infty} \eta(t) \gamma_u(t) \| \nabla_{\w_u} \L(\w(t)) \| = 0$, we can say there exists a time $t_5$, such that for any $t > t_5$, $\frac{\bv_u(t)^\top\widetilde \g_u}{\| \bv_u(t) \| \|\widetilde \g_u\|}$ goes up whenever $\left( \frac{\bv_u(t)}{\| \bv_u(t) \|} \right)^\top \left(\frac{\widetilde \g_u}{\| \widetilde \g_u \|}\right) < \cos(\beta)$. 

Also, by using Equation (\ref{eq:v:SWN:gd:2}) and Assumption (A5), we can say, that there exists a time $t_6$, such that for $t > t_6$, $\left( \frac{\bv_u(t)}{\| \bv_u(t) \|} \right)^\top \left(\frac{\widetilde \g_u}{\| \widetilde \g_u \|}\right) > \cos(\beta) \implies \left( \frac{\bv_u(t+1)}{\| \bv_u(t+1) \|} \right)^\top \left(\frac{\widetilde \g_u}{\| \widetilde \g_u \|}\right) > \cos(\kappa)$, as the RHS of Equation (\ref{eq:v:SWN:gd:2}) goes to 0 norm in limit. Now, define $t_4 > max(t_5, t_6)$ such that $\left( \frac{\bv_u(t_4)}{\| \bv_u(t_4) \|} \right)^\top \left(\frac{\widetilde \g_u}{\| \widetilde \g_u \|}\right) > \cos(\kappa)$ (must exist from previous arguments). Then, as the dot product always goes up when between $\cos(\beta)$ and $\cos(\kappa)$, and can't go in a single step from being greater than $\cos(\beta)$ to less than $\cos(\kappa)$, therefore, for every $t > t_4$, $\left( \frac{\bv_u(t)}{\| \bv_u(t) \|} \right)^\top \left(\frac{\widetilde \g_u}{\| \widetilde \g_u \|}\right) > \cos(\kappa)$.

Now as the above argument holds for any $\kappa$ between $\tau$ and $\Delta$, and for any $\tau > 0$, we can say that $\w_u(t)$ converges in direction of $\widetilde \g_u$.
\end{proof}

\section{Proof of Theorem \ref{thm:weightnorm}} \label{proof:thm:weightnorm}
\begin{theorem*}
Consider two nodes $u$ and $v$ in the network with $\| \widetilde \g_u \| \geq \|\widetilde \g_v \|>0, \lim_{t \to \infty} \| \w_u(t) \| = \infty$ and $\lim_{t \to \infty} \| \w_v(t) \| = \infty$. Let $\frac{\| \widetilde \g_u \|}{\| \widetilde \g_v \|}$ be denoted by $c$. Under assumptions (A1), (A2) for gradient flow and (B1)-(B3) for gradient descent,
    % Under assumptions (A1)-(A3) for gradient flow and (A1)-(A4) for gradient descent, for all nodes $u$ and $v$ in the network, the following hold
    \begin{enumerate}[label=(\roman*)]
        \item for SWN, $\lim_{t\to\infty} \frac{\|\w_u(t)\|}{\|\w_v(t)\|}=c $
        \item for EWN, $\lim_{t\to\infty} \frac{\|\w_u(t)\|}{\|\w_v(t)\|}$ is either $0, \infty$ or $\frac{1}{c}$
    \end{enumerate}
\end{theorem*}
Proof for different cases will be split into different subsections and the corresponding case will be restated there for ease of the reader.
\subsection{Exponential Weight Normalization} \label{proof:ewn:thm:weightnorm}
\subsubsection{Gradient Flow} \label{proof:ewn:gf:thm:weightnorm}
\begin{theorem*}
Consider two nodes $u$ and $v$ in the network with $\| \widetilde \g_u \| \geq \|\widetilde \g_v \|>0, \lim_{t \to \infty} \| \w_u(t) \| = \infty$ and $\lim_{t \to \infty} \| \w_v(t) \| = \infty$. Let $\frac{\| \widetilde \g_u \|}{\| \widetilde \g_v \|}$ be denoted by $c$. Under assumptions (A1), (A2) for gradient flow, for EWN, $\lim_{t\to\infty} \frac{\|\w_u(t)\|}{\|\w_v(t)\|}$ is either $0, \infty$ or $\frac{1}{c}$
\end{theorem*}
    
\begin{proof}
Using Theorem \ref{thm:weightconv} and the fact that $\lim_{t \to \infty} \frac{\| \nabla_{\w_u} \L(\w(t)) \|}{\| \nabla_{\w_v} \L(\w(t)) \|} = c$, for any $0 < \epsilon < c$ and $0 < \delta < 2\pi$, there exists a time $t_1$, such that for $t > t_1$, the following hold
\begin{enumerate}[label=(\roman*)]
    \begin{minipage}{0.4\textwidth}
    \item $\frac{\| \nabla_{\w_u} \L(\w(t))\|}{\| \nabla_{\w_v} \L(\w(t))\|} \in [c - \epsilon, c + \epsilon]$
    \end{minipage}
    \begin{minipage}{0.5\textwidth}
    \item $\left( \frac{\w_u(t)}{\| \w_u(t) \|}\right)^\top \left(\frac{-\nabla_{\w_u} \L(\w(t))}{\| \nabla_{\w_u} \L(\w(t)) \|}\right) \geq \cos(\delta)$
    \end{minipage}
    \item $\left( \frac{\w_v(t)}{\| \w_v(t) \|}\right)^\top \left(\frac{-\nabla_{\w_v} \L(\w(t))}{\| \nabla_{\w_v} \L(\w(t)) \|}\right) \geq \cos(\delta)$.
\end{enumerate}

Now, we will provide the proof of part (iii) of Proposition \ref{prop:3}, i.e, 
for EWN, if at some time $t_2 > t_1$,
\begin{enumerate}[label=(\alph*)]
    \item $\frac{\| \w_u(t_2) \|}{\| \w_v(t_2) \|} > \frac{1}{(c-\epsilon)\cos(\delta)} \implies \lim_{t \to \infty} \frac{\| \w_u(t) \|}{\| \w_v(t) \|} = \infty$
    \item $\frac{\| \w_u(t_2) \|}{\| \w_v(t_2) \|} < \frac{\cos(\delta)}{c + \epsilon} \implies \lim_{t \to \infty} \frac{\| \w_u(t) \|}{\| \w_v(t) \|} = 0$
\end{enumerate}
        
(a) $\frac{\| \w_u(t_2) \|}{\| \w_v(t_2) \|} > \frac{1}{(c - \epsilon) \cos(\delta)} \implies \lim_{t \to \infty} \frac{\| \w_u(t) \|}{\| \w_v(t) \|} = \infty$

Using Equation (\ref{eq:alpha:EWN:gf}), 
\begin{equation}
\label{eq:normw:ewn:gf}
\frac{d \| \w_u(t) \| }{dt} = \frac{d e^{\alpha_u(t)}}{dt} = -\eta(t) \| \w_u(t) \|^2 (\bv_u(t)^\top \nabla_{\w_u} \L(\w(t))    
\end{equation}

Using the equation above, we can say for $t > t_1$,
\begin{align}
    \label{eq:wratio:lowbound:EWN:gf}
    \frac{d \frac{\| \w_u(t) \|}{\| \w_v(t) \|}}{dt} &= \frac{\| \w_v(t) \|\frac{d \| \w_u(t) \|}{dt} -   \| \w_u(t) \|\frac{d \| \w_v(t) \|}{dt}}{\| \w_v(t) \|^2} \nonumber \\
    &\geq \eta(t)\frac{\| \w_u(t) \|}{\| \w_v(t) \|}(\|\w_u(t)\|\|\nabla_{\w_u} \L(\w(t)) \| \cos(\delta) - \|\w_v(t)\|\|\nabla_{\w_v} \L(\w(t)) \|) \nonumber \\    
    &\geq \eta(t)\| \w_u(t) \|\|\nabla_{\w_u} \L(\w(t)) \|\left(\frac{\| \w_u(t) \|}{\| \w_v(t) \|} \cos(\delta) - \frac{1}{c - \epsilon}\right)
\end{align}

In this case, using Equation (\ref{eq:wratio:lowbound:EWN:gf}), we can see $\frac{d \frac{\| \w_u(t) \|}{\| \w_v(t) \|}}{dt} > 0$ at $t_2$. Thus, $\frac{\| \w_u(t) \|}{\| \w_v(t) \|}$ always remains greater than $\frac{1}{(c - \epsilon) \cos(\delta)}$ and keeps on increasing. Let's denote $\frac{\|\w_u(t_2)\|}{\|\w_v(t_2)\|}$ by $\Delta$. Then, for $t > t_2$,
\[ \frac{d \frac{\| \w_u(t) \|}{\| \w_v(t) \|}}{dt} \geq \left(\Delta\cos(\delta) - \frac{1}{c - \epsilon}\right) \eta(t) \|\w_u(t)\|\|\nabla_{\w_u} \L(\w(t)) \| \]
As $\alpha_u \to \infty$, therefore using Equation (\ref{eq:alpha:EWN:gf}), we can say $\int_{t_2}^\infty \eta(t) \|\w_u(t)\|\|\nabla_{\w_u} \L(\w(t)) \| dt \to \infty$. Thus, integrating both the sides of the equation above from $t_2$ to $\infty$, we get
\[ \int_{t_2}^\infty \frac{d \frac{\| \w_u(t) \|}{\| \w_v(t) \|}}{dt} dt \geq \infty \]
Thus $\lim_{t \to \infty} \frac{\| \w_u(t) \|}{\| \w_v(t) \|} = \infty$.

(b) $\frac{\| \w_u(t_2) \|}{\| \w_v(t_2) \|} < \frac{\cos(\delta)}{c + \epsilon} \implies \lim_{t \to \infty} \frac{\| \w_u(t) \|}{\| \w_v(t) \|} = 0$.

Using Equation (\ref{eq:normw:ewn:gf}), we can say for $t > t_1$,
\begin{align}
    \label{eq:wratio:uppbound:EWN:gf}
    \frac{d \frac{\| \w_u(t) \|}{\| \w_v(t) \|}}{dt} &= \frac{\| \w_v(t) \|\frac{d \| \w_u(t) \|}{dt} -   \| \w_u(t) \|\frac{d \| \w_v(t) \|}{dt}}{\| \w_v(t) \|^2} \nonumber \\
    &\leq \eta(t)\frac{\| \w_u(t) \|}{\| \w_v(t) \|}(\|\w_u(t)\|\|\nabla_{\w_u} \L(\w(t)) \| - \|\w_v(t)\|\|\nabla_{\w_v} \L(\w(t)) \| \cos(\delta))\nonumber\\
    &= (\eta(t) \| \w_v(t) \| \| \nabla_{\w_v} \L(\w(t)) \|) \frac{\| \w_u(t) \|}{\| \w_v(t) \|}\left(\frac{\| \w_u \| \| \nabla_{\w_u} \L(\w(t)) \|}{\| \w_v \| \| \nabla_{\w_v} \L(\w(t)) \|} - \cos(\delta)\right)\nonumber\\
    &\leq (\eta(t) \| \w_v(t) \| \| \nabla_{\w_v} \L(\w(t)) \|) \frac{\| \w_u(t) \|}{\| \w_v(t) \|}\left(\frac{\| \w_u \|}{\| \w_v \|} (c + \epsilon) - \cos(\delta)\right)
\end{align}

In this case, using Equation (\ref{eq:wratio:uppbound:EWN:gf}), we can see $\frac{d \frac{\| \w_u(t) \|}{\| \w_v(t) \|}}{dt} < 0$ at $t_2$. Thus, $\frac{\| \w_u(t) \|}{\| \w_v(t) \|}$ always remains smaller than $\frac{\cos(\delta)}{c + \epsilon}$ and keeps on decreasing. Now, lets say $\lim_{t \to \infty} \frac{\| \w_u(t) \|}{\| \w_v(t) \|} > 0$. This means that $\frac{\| \w_u(t) \|}{\| \w_v(t) \|} > \Delta$, for some $\Delta > 0$. Also, let's denote $\frac{\|\w_u(t_2)\|}{\|\w_v(t_2)\|}$ by $\beta$. Then we can say
\[ \frac{d \frac{\| \w_u(t) \|}{\| \w_v(t) \|}}{dt} \leq - \Delta(\cos(\delta) - \beta(c + \epsilon)) \eta(t) \|\w_v(t)\|\|\nabla_{\w_v} \L(\w(t)) \| \]
As $\alpha_v \to \infty$, therefore using Equation (\ref{eq:alpha:EWN:gf}), we can say $\int_{t_2}^\infty \eta(t) \|\w_v(t)\|\|\nabla_{\w_v} \L(\w(t)) \| dt = \infty$. Thus, integrating both the sides of the equation above from $t_2$ to $\infty$, we get
\[ \int_{t_2}^\infty \frac{d \frac{\| \w_u(t) \|}{\| \w_v(t) \|}}{dt} dt \leq -\infty \]
This is not possible as $\frac{\| \w_u(t) \|}{\| \w_v(t) \|}$ is lower bounded by 0. Thus $\lim_{t \to \infty} \frac{\| \w_u(t) \|}{\| \w_v(t) \|} = 0$.

Now, as $\epsilon$ and $\delta$ tend to 0, the length of the interval of stability $[\frac{\cos(\delta)}{c + \epsilon}, \frac{1}{(c - \epsilon)\cos(\delta)}]$ shrinks to zero, around the point $\frac{1}{c}$. Thus, either $\frac{\| \w_u(t) \|}{\| \w_v(t) \|}$ moves out of the interval of stability and converges to either $0$ or $\infty$, or it always remains within the interval of stability and converges to $\frac{1}{c}$.
\end{proof}

\subsubsection{Gradient Descent} \label{proof:ewn:gd:thm:weightnorm}
\begin{theorem*}
Consider two nodes $u$ and $v$ in the network with $\| \widetilde \g_u \| \geq \|\widetilde \g_v \|>0, \lim_{t \to \infty} \| \w_u(t) \| = \infty$ and $\lim_{t \to \infty} \| \w_v(t) \| = \infty$. Let $\frac{\| \widetilde \g_u \|}{\| \widetilde \g_v \|}$ be denoted by $c$. Under assumptions (B1)-(B3) for gradient descent, for EWN, $\lim_{t\to\infty} \frac{\|\w_u(t)\|}{\|\w_v(t)\|}$ is either $0, \infty$ or $\frac{1}{c}$
\end{theorem*}

\begin{proof}
Using Theorem \ref{thm:weightconv} and the fact that $\lim_{t \to \infty} \frac{\| \nabla_{\w_u} \L(\w(t)) \|}{\| \nabla_{\w_v} \L(\w(t)) \|} = c$, for any $0 < \epsilon < c$ and $0 < \delta < 2\pi$, there exists a time $t_1$, such that for $t > t_1$, the following hold
\begin{enumerate}[label=(\roman*)]
    \begin{minipage}{0.4\textwidth}
    \item $\frac{\| \nabla_{\w_u} \L(\w(t))\|}{\| \nabla_{\w_v} \L(\w(t))\|} \in [c - \epsilon, c + \epsilon]$
    \end{minipage}
    \begin{minipage}{0.5\textwidth}
    \item $\left( \frac{\w_u(t)}{\| \w_u(t) \|}\right)^\top \left(\frac{-\nabla_{\w_u} \L(\w(t))}{\| \nabla_{\w_u} \L(\w(t)) \|}\right) \geq \cos(\delta)$
    \end{minipage}
    \item $\left( \frac{\w_v(t)}{\| \w_v(t) \|}\right)^\top \left(\frac{-\nabla_{\w_v} \L(\w(t))}{\| \nabla_{\w_v} \L(\w(t)) \|}\right) \geq \cos(\delta)$.
\end{enumerate}

Now, we will provide the proof of part (iii) of Proposition \ref{prop:3}, i.e, 
for EWN, if at some time $t_2 > t_1$,
\begin{enumerate}[label=(\alph*)]
    \item $\frac{\| \w_u(t_2) \|}{\| \w_v(t_2) \|} > \frac{1}{(c-\epsilon)\cos(\delta)} \implies \lim_{t \to \infty} \frac{\| \w_u(t) \|}{\| \w_v(t) \|} = \infty$
    \item $\frac{\| \w_u(t_2) \|}{\| \w_v(t_2) \|} < \frac{\cos(\delta)}{c + \epsilon} \implies \lim_{t \to \infty} \frac{\| \w_u(t) \|}{\| \w_v(t) \|} = 0$
\end{enumerate}

(a) $\frac{\| \w_u(t_2) \|}{\| \w_v(t_2) \|} > \frac{1}{(c-\epsilon)\cos(\delta)} \implies \lim_{t \to \infty} \frac{\| \w_u(t) \|}{\| \w_v(t) \|} = \infty$

Using Equation (\ref{eq:alpha:EWN:gd:2}) and part 1 of the Proposition, we can say
\begin{align*}
    \frac{\| \w_u(t_2 + 1) \|}{\| \w_v(t_2 + 1) \|} &\geq \frac{ \| \w_u(t_2) \| + \eta(t_2)\cos(\delta)\| \w_u(t_2) \|^2\| \nabla_{\w_u} \L(\w(t_2)) \|}{ \| \w_v(t_2) \| + \eta(t_2)\| \w_v(t_2) \|^2\| \nabla_{\w_v} \L(\w(t_2)) \|} \\
    &= \frac{\| \w_u(t_2) \|}{\| \w_v(t_2) \|}\left(\frac{1 + \cos(\delta)\eta(t_2)\| \w_u(t_2) \|\| \nabla_{\w_u} \L(\w(t_2)) \|}{1 + \eta(t_2)\| \w_v(t_2) \|\| \nabla_{\w_v} \L(\w(t_2)) \|}\right)\\
    &\geq \frac{\| \w_u(t_2) \|}{\| \w_v(t_2) \|}\\
\end{align*}
Thus, $\frac{\| \w_u(t) \|}{\| \w_v(t) \|}$ keeps on increasing for $t>t_2$. It can either diverge to infinity or converge to a finite value. If it converges to a finite value, then by Stolz Cesaro theorem,
\[ \lim_{t \to \infty} \frac{\| \w_u(t) \|}{\| \w_v(t) \|} = \lim_{t \to \infty} \frac{\| \w_u(t) \|^2 \| \nabla_{\w_u} \L(\w(t))\|}{\| \w_v(t) \|^2 \| \nabla_{\w_v} \L(\w(t))\|} \]
However, this is not possible as $\frac{\| \w_u(t) \|}{\| \w_v(t) \|} > \frac{1}{c}$ for every $t > t_2$. Thus, $\frac{\| \w_u(t) \|}{\| \w_v(t) \|}$ diverges to infinity.

(b) $\frac{\| \w_u(t_2) \|}{\| \w_v(t_2) \|} < \frac{\cos(\delta)}{c + \epsilon} \implies \lim_{t \to \infty} \frac{\| \w_u(t) \|}{\| \w_v(t) \|} = 0$

Using Equation (\ref{eq:alpha:EWN:gd:2}) and part 1 of the Proposition, we can say
\begin{align*}
    \frac{\| \w_u(t_2 + 1) \|}{\| \w_v(t_2 + 1) \|} &\leq \frac{ \| \w_u(t_2) \| + \eta(t_2)\| \w_u(t_2) \|^2\| \nabla_{\w_u} \L(\w(t_2)) \|}{ \| \w_v(t_2) \| + \eta(t_2)\cos(\delta)\| \w_v(t_2) \|^2\| \nabla_{\w_v} \L(\w(t_2)) \|} \\
    &= \frac{\| \w_u(t_2) \|}{\| \w_v(t_2) \|}\left(\frac{1 + \eta(t_2)\| \w_u(t_2) \|\| \nabla_{\w_u} \L(\w(t_2)) \|}{1 + \eta(t_2)\cos(\delta)\| \w_v(t_2) \|\| \nabla_{\w_v} \L(\w(t_2)) \|}\right)\\
    &\leq \frac{\| \w_u(t_2) \|}{\| \w_v(t_2) \|}\\
\end{align*}
Thus, $\frac{\| \w_u(t) \|}{\| \w_v(t) \|}$ keeps on decreasing for $t>t_2$. As it is always greater than zero, it must converge. Therefore, by Stolz Cesaro Theorem,
\[ \lim_{t \to \infty} \frac{\| \w_u(t) \|}{\| \w_v(t) \|} = \lim_{t \to \infty} \frac{\| \w_u(t) \|^2 \| \nabla_{\w_u} \L(\w(t))\|}{\| \w_v(t) \|^2 \| \nabla_{\w_v} \L(\w(t))\|} \]
For $\frac{\| \w_u(t) \|}{\| \w_v(t) \|} < \frac{1}{c}$, this can only be satisfied when $\lim_{t \to \infty} \frac{\| \w_u(t) \|}{\| \w_v(t) \|} = 0$.

Now, as $\epsilon$ and $\delta$ tend to 0, the length of the interval of stability $[\frac{\cos(\delta)}{c + \epsilon}, \frac{1}{(c - \epsilon)\cos(\delta)}]$ shrinks to zero, around the point $\frac{1}{c}$. Thus, either $\frac{\| \w_u(t) \|}{\| \w_v(t) \|}$ moves out of the interval of stability and converges to either $0$ or $\infty$, or it always remains within the interval of stability and converges to $\frac{1}{c}$.
\end{proof}

\subsection{Standard Weight Normalization} \label{proof:swn:thm:weightnorm}
\subsubsection{Gradient Flow} \label{proof:swn:gf:thm:weightnorm}
\begin{theorem*}
Consider two nodes $u$ and $v$ in the network with $\| \widetilde \g_u \| \geq \|\widetilde \g_v \|>0, \lim_{t \to \infty} \| \w_u(t) \| = \infty$ and $\lim_{t \to \infty} \| \w_v(t) \| = \infty$. Let $\frac{\| \widetilde \g_u \|}{\| \widetilde \g_v \|}$ be denoted by $c$. Under assumptions (A1), (A2) for gradient flow, for SWN, $\lim_{t\to\infty} \frac{\|\w_u(t)\|}{\|\w_v(t)\|}=c $
\end{theorem*}
    
\begin{proof}
From Theorem \ref{thm:weightconv}, we can say, for both $u$ and $v$, weights and gradients converge in opposite directions.

Consider a time $t_1$, such that for any $t > t_1$,
\begin{itemize}
    \item $- \nabla_{\w_u} \L(\w(t))$ and $\w_u(t)$ atmost make an angle $\epsilon$ with each other
    \item $- \nabla_{\w_v} \L(\w(t))$ and $\w_v(t)$ atmost make an angle $\epsilon$ with each other
\end{itemize} 
Then, using Equation (\ref{eq:gamma:SWN:gf}), we can say for any time $t>t_1$,
\[ \| \w_u(t) \| \geq \| \w_u(t_2) \| + \text{cos}(\epsilon)\int_{k=t_2}^t \eta(k) \| \nabla_{\w_u} \L(\w(k)) \| dk \]
\[ \| \w_u(t) \| \leq \| \w_u(t_2) \| + \int_{k=t_2}^t \eta(k) \| \nabla_{\w_u} \L(\w(k)) \| dk \]
\[ \| \w_v(t) \| \geq \| \w_v(t_2) \| + \text{cos}(\epsilon)\int_{k=t_2}^t \eta(k) \| \nabla_{\w_v} \L(\w(k)) \| dk\]
\[ \| \w_v(t) \| \leq \| \w_v(t_2) \| + \int_{k=t_2}^t \eta(k) \| \nabla_{\w_v} \L(\w(k)) \| dk \]
Using the above equations, we can say, for time $t > t_2$,
\[ \frac{\| \w_u(t) \|}{\| \w_v(t) \|} \geq \frac{\| \w_u(t_2) \| + \text{cos}(\epsilon)\int_{k=t_2}^t \eta(k) \| \nabla_{\w_u} \L(\w(k)) \| dk}{\| \w_v(t_2) \| + \int_{k=t_2}^t \eta(k) \| \nabla_{\w_v} \L(\w(k)) \| dk} \]
\[ \frac{\| \w_u(t) \|}{\| \w_v(t) \|} \leq \frac{\| \w_u(t_2) \| + \int_{k=t_2}^t \eta(k) \| \nabla_{\w_u} \L(\w(k)) \| dk}{\| \w_v(t_2) \| + \text{cos}(\epsilon)\int_{k=t_2}^t \eta(k) \| \nabla_{\w_v} \L(\w(k)) \| dk} \]
We know that both integrals diverge as $\| \w_u(t) \|$ and $\| \w_v(t) \| \to \infty$, $\lim_{t \to \infty} \frac{\| \w_u(t) \|}{\| \w_v(t) \|}$ and $\lim_{t \to \infty} \frac{\| \nabla_{\w_u} \L(\w(t)) \|}{\| \nabla_{\w_v} \L(\w(t)) \|}$ exist. Taking limit $t \to \infty$ on both the equations and using the Integral form of Stolz-Cesaro theorem, we get
\[ \liminf_{t \to \infty} \frac{\| \w_u(t) \|}{\| \w_v(t) \|} \geq \frac{\text{cos}(\epsilon) \| \widetilde \g_u \|}{\| \widetilde \g_v \|} \]
\[ \limsup_{t \to \infty} \frac{\| \w_u(t) \|}{\| \w_v(t) \|} \leq \frac{\| \widetilde \g_u \|}{\text{cos}(\epsilon)\| \widetilde \g_v \|} \]
As this holds for any $\epsilon > 0$, therefore
\[ \lim_{t \to \infty} \frac{\| \w_u(t) \|}{\| \w_v(t) \|} = c \]
\end{proof}

\subsubsection{Gradient Descent} \label{proof:swn:gd:thm:weightnorm}
\begin{theorem*}
Consider two nodes $u$ and $v$ in the network with $\| \widetilde \g_u \| \geq \|\widetilde \g_v \|>0, \lim_{t \to \infty} \| \w_u(t) \| = \infty$ and $\lim_{t \to \infty} \| \w_v(t) \| = \infty$. Let $\frac{\| \widetilde \g_u \|}{\| \widetilde \g_v \|}$ be denoted by $c$. Under assumptions (B1) - (B3) for gradient descent, for SWN, $\lim_{t\to\infty} \frac{\|\w_u(t)\|}{\|\w_v(t)\|}=c $
\end{theorem*}

\begin{proof}
For ease of notation, we will denote the two nodes by $u$ and $s$. From Theorem \ref{thm:weightconv}, we can say, for both $u$ and $s$, weights and gradients converge in opposite directions.
 Now from Equation (\ref{eq:gamma:SWN:gd:2}), we can say
\[ \gamma_u(t) = \gamma_u(0) - \sum_{k=0}^{k=t-1} \eta(k) \frac{\bv_u(k)^\top\nabla_{\w_u} \L(\w(k))}{\| \bv_u(k) \|} \]
\[ \gamma_s(t) = \gamma_s(0) - \sum_{k=0}^{k=t-1} \eta(k) \frac{\bv_s(k)^\top\nabla_{\w_s} \L(\w(k))}{\| \bv_s(k) \|} \]

Now, $\gamma_s(t)$ either diverges to $\infty$ or $-\infty$. In both the cases, it is a strictly monotonic sequence for large enough $t$. Also $\lim_{t \to \infty} \frac{\gamma_u(t+1) - \gamma_u(t)}{\gamma_s(t+1) - \gamma_s(t)}$ exists. Therefore, using Stolz-Cesaro Theorem, we can say
\[ \lim_{t \to \infty} \frac{\| \w_u(t) \|}{\| \w_s(t) \|} = c \]
\end{proof}

\section{Proof of Proposition \ref{prop:3}} \label{proof:prop3}
\begin{proposition*}
    Consider two nodes $u$ and $v$ in the network such that $\| \widetilde \g_v \| \geq \| \widetilde \g_u \| > 0$ and $\| \w_u(t) \|, \| \w_v(t) \| \to \infty$. Let $\frac{\| \widetilde \g_u\|}{\| \widetilde \g_v \|}$ be denoted by $c$. Consider any $\epsilon, \delta$ such that $0< \epsilon < c$ and $0<\delta<\frac{\pi}{2}$.  Then, the following holds:
    \begin{enumerate}[label=(\roman*)]
        \item There exists a time $t_1$, such that for all $t > t_1$ both SWN and EWN trajectories have the following properties: 
        \begin{enumerate}
        \begin{minipage}{0.4\textwidth}
        \item $\frac{\| \nabla_{\w_u} \L(\w(t))\|}{\| \nabla_{\w_v} \L(\w(t))\|} \in [c - \epsilon, c + \epsilon]$
        \end{minipage}
        \begin{minipage}{0.5\textwidth}
        \item $\left( \frac{\w_u(t)}{\| \w_u(t) \|}\right)^\top \left(\frac{-\nabla_{\w_u} \L(\w(t))}{\| \nabla_{\w_u} \L(\w(t)) \|}\right) \geq \cos(\delta)$
        \end{minipage}
        \item $\left( \frac{\w_v(t)}{\| \w_v(t) \|}\right)^\top \left(\frac{-\nabla_{\w_v} \L(\w(t))}{\| \nabla_{\w_v} \L(\w(t)) \|}\right) \geq \cos(\delta)$.
        \end{enumerate}
        \item for SWN, $\lim_{t \to \infty} \frac{\| \w_u(t) \|}{\| \w_v(t) \|} = c$
        \item for EWN, if at some time $t_2 > t_1$,
        \begin{enumerate}
            \item $\frac{\| \w_u(t_2) \|}{\| \w_v(t_2) \|} > \frac{1}{(c-\epsilon)\cos(\delta)} \implies \lim_{t \to \infty} \frac{\| \w_u(t) \|}{\| \w_v(t) \|} = \infty$
            \item $\frac{\| \w_u(t_2) \|}{\| \w_v(t_2) \|} < \frac{\cos(\delta)}{c + \epsilon} \implies \lim_{t \to \infty} \frac{\| \w_u(t) \|}{\| \w_v(t) \|} = 0$
        \end{enumerate}
    \end{enumerate}
\end{proposition*}

\begin{proof}
(i) It follows from Theorem \ref{thm:weightconv} and the fact that $\lim_{t \to \infty} \frac{\| \nabla_{\w_u} \L(\w(t))\|}{\| \nabla_{\w_v} \L(\w(t))\|} = c$.

(ii) Proof provided in Appendix \ref{proof:swn:thm:weightnorm}

(iii) Proof provided in Appendix \ref{proof:ewn:thm:weightnorm}
\end{proof}

\section{Proof of Proposition \ref{prop:quant_sparsity}} \label{proof:prop:quant_sparsity}
\begin{proposition*}
%\label{prop:quant_sparsity}
    Consider a linear model over $\R^d$ given by $f(\x) = \w^\top \x$, where each $w_i$ is further reparameterized as $e^{\alpha_i}$. Consider a dataset consisting of a single data point $\z \succ 0$, that is labelled as +1. According to the initialization of $\balpha$, define a relation $R$ on $\{1,\ldots,d\}$, given by $i \sim j$ if $w_i(0)z_i = w_j(0)z_j$. Then, R is an equivalence relation on $\{1,\ldots,d\}$.
    Let these equivalent sets be denoted by $I_1, I_2, ..., I_k$. Define a total order on these sets given by $I_a > I_b$ if $\exists i \in I_a, j \in I_b$ such that $w_i(0)z_i > w_j(0)z_j$. Let the maximum set according to this order be denoted by $I^*$. Then, for gradient flow on exponential loss, the following holds
    \begin{enumerate}[label=(\roman*)]
        \item For any $i \in I^*$, $\lim_{t \to \infty} w_i(t) = \infty$
        \item For $i, j \in I^*$, $\frac{w_i(t)}{w_j(t)} = \frac{z_j}{z_i}$.
        \item For any $i \notin I^*$, $\lim_{t \to \infty} w_i(t) = \left(\frac{1}{w_i(0)} - \frac{z_i}{w_j(0)z_j}\right)^{-1}$, where $j$ is any element in $I^*$.
    \end{enumerate}
\end{proposition*}
\begin{proof}
In this case, the loss is given by
\[ \L(\w) = e^{-\w^\top \z} \]
As each $w_i$ is further exponentially reparameterized, therefore, using Theorem \ref{theorem:1}, 
\begin{equation}
\label{eq:lin:gf:upd}
\frac{dw_i}{dt} = w_i^2 z_i e^{-\w^\top \z}    
\end{equation}

Using this equation, we can say, for any pair of indices $i, j$,
\[ \frac{d(\frac{z_i}{w_j} - \frac{z_j}{w_i})}{dt} = 0 \]
Thus, for any pair of indices $i, j$
\begin{equation}
\label{eq:lin:invariant}
\frac{z_i}{w_j(t)} - \frac{z_j}{w_i(t)} = \frac{z_i}{w_j(0)} - \frac{z_j}{w_i(0)}    
\end{equation}

Thus, we can say for every equivalent set $I_a$, if $i, j \in I_a$, then as $\frac{z_j}{w_i(0)} - \frac{z_i}{w_j(0)} = 0$, therefore, at any time $t$,
\[ \frac{w_i(t)}{w_j(t)} = \frac{z_j}{z_i} \]
Now, clearly the gradient flow stops only when $\w^\top \z \to \infty$. This means atleast one of the $w_i$ must tend to $\infty$ (Notice from Equation (\ref{eq:lin:gf:upd}) that $w_i$ always goes up). For two equivalents sets $I_a$ and $I_b$, such that $I_a > I_b$, we can say, for $i \in I_a$ and $j \in  I_b$, the RHS of Equation (\ref{eq:lin:invariant}) is positive. Thus, it is not possible for $w_j$ to go to $\infty$, otherwise the quantity on the LHS will be negative when $t \to \infty$. Using this argument, we can say, only for $i \in I^*$, $w_i$ tend to $\infty$. Then, for $i \notin I^*$, considering $j \in I^*$ and using Equation (\ref{eq:lin:invariant}),
\[ \lim_{t \to \infty} w_i(t) = \left(\frac{1}{w_i(0)} - \frac{z_i}{w_j(0)z_j}\right)^{-1} \]
\end{proof}

\section{Proof of Theorem \ref{theorem:ewn:gf:rate}}
\begin{theorem*}
%\label{theorem:ewn:gf:rate}
For Exponential Weight Normalization, under assumption (A1), the following hold for $t > t_0$ in case of gradient flow
\begin{enumerate}[label=(\roman*)]
    \begin{minipage}{0.5\textwidth}
    \item $\| \w(t) \|$ grows with $t$ as $o((\log t)^{\frac{1}{L}})$
    \end{minipage}
    \begin{minipage}{0.5\textwidth}
    \item $\L(t)$ goes down with $t$ as $O\left(\frac{1}{t}\right)$
    \end{minipage}
\end{enumerate}
\end{theorem*}

\begin{proof}
Using Equation (\ref{eq:finiteconvrate:gf}), we can say
\[ \frac{1}{\L(t)} \geq \frac{1}{\L(t_0)} + \frac{L^2\epsilon^2}{k} (t - t_0) \]
Thus $\L(t)$ goes down with t as $O\left(\frac{1}{t}\right)$. Now, we know, $\L(\w) = \sum_i e^{-y_i\Phi(\w, \x_i)}$. Clearly, as $\Phi(\w, \x_i)$ is smooth, therefore $\L(\w)$ attains a minima over the compact set $\| w \| = 1$. Using homogeneity of $\Phi$ and the fact that $\L$ goes down at $O\left(\frac{1}{t}\right)$, we can say $\| \w(t) \|$ grows with t as $o((\log t)^{\frac{1}{L}})$.
\end{proof}

\section{Proof of Theorem \ref{theorem:3}}
\begin{theorem*}
%\label{theorem:3}
For Exponential Weight Normalization, under Assumptions (B1)-(B4), $\rho > 0$, $\eta(t) = O\left(\left(\log\frac{1}{\L}\right)^c\right)$ for $c < 1$ and $\lim_{t \to \infty} \frac{\| \br(t+1) - \br(t) \|}{g(t+1) - g(t)} = 0$, the following hold
\begin{enumerate}[label=(\roman*)]
    \item $\| \w(t) \|$ asymptotically grows with $t$ as $\Theta\left((\log d(t))^{\frac{1}{L}}\right)$
    \item $\L(\w(t))$ asymptotically goes down with $t$ as $\Theta\left(\frac{1}{d(t)(\log d(t))^2}\right)$.
\end{enumerate}
\end{theorem*}

First, we will establish rates for gradient flow to elucidate the proof technique and then go to the case of gradient descent.
\subsection{Gradient Flow}
Although the asymptotic convergence rates for smooth homogeneous neural nets have been established in \citet{LyuLi20}, the proof technique becomes easier to understand for smooth homogeneous nets, without weight normalization. In this case, we will use an assumption (A3) that $\lim_{t \to \infty} \frac{\w(t)}{\| \w(t) \|}$ exists.
\subsubsection{Unnormalized Network} \label{convrate:unnorm:gf}
\begin{theorem*}
For Unnorm, under Assumptions (A1)-(A3) for gradient flow, $\rho > 0$ and $\lim_{t \to \infty} \frac{\| \frac{d \br(t)}{dt} \|}{g'(t)} = 0$, the following hold
\begin{enumerate}
    \item $\| \w(t) \|$ asymptotically grows at $\Theta\left((\log t)^{\frac{1}{L}}\right)$
    \item $\L(\w(t))$ asymptotically goes down at the rate of $\Theta\left(\frac{1}{t(\log t)^{2 - \frac{2}{L}}}\right)$.
\end{enumerate}
\end{theorem*}

\begin{proof}
Consider $\w = g(t)\widetilde \w + \br(t)$, where $\lim_{t \to \infty} \frac{\| \br(t)\|}{g(t)} = 0$ and $\br(t)^\top\widetilde \w = 0$. Now, we make an additional assumption that
$\lim_{t \to \infty} \frac{\| \frac{d \br(t)}{d(t)} \|}{g'(t)} = 0$. This basically avoids any oscillations in $\br(t)$ for large $t$, where it can have a higher derivative, but the value may be bounded. Now, we know
\begin{equation}
\label{eq:unnorm:conv:upd}
    \frac{d \w(t)}{dt} = \sum_{i=1}^m e^{-y_i\Phi(\w(t), \x_i)} y_i\nabla_\w \Phi(\w(t), \x_i)   
\end{equation}

Now, we know $\| \frac{d \w(t)}{dt} \| \neq 0$ for any finite $t$, otherwise $\w$ won't change and $\L$ can't converge to 0. Thus, for all $t$, we can say
\[ \frac{\| \frac{d \w(t)}{dt} \|}{\| \sum_{i=1}^m e^{-y_i\Phi(\w(t), \x_i)} y_i\nabla_\w \Phi(\w, \x_i) \|} = 1 \]
Taking limit $t \to \infty$ on both the sides, we get
\[ \lim_{t \to \infty} \frac{\| \frac{d \w(t)}{dt} \|}{\| \sum_{i=1}^m e^{-y_i\Phi(\w(t), \x_i)} y_i\nabla_\w \Phi(\w, \x_i) \|} = 1 \]
Now, we know
\[ \| \frac{d \w(t)}{dt} \| = \| g'(t)\widetilde \w + \frac{d \br(t)}{dt} \| \]
\[ \| \sum_{i=1}^m e^{-y_i\Phi(\w(t), \x_i)} y_i\nabla_\w \Phi(\w, \x_i) \| = \| \sum_{i=1}^m e^{-y_ig(t)^L\Phi\left(\widetilde \w + \frac{\br(t)}{g(t)}, \x_i\right)} (y_ig(t)^{L-1}\nabla_\w \Phi\left(\widetilde \w + \frac{\br(t)}{g(t)}, \x_i\right) \| \]
Thus, we can say
\[ \| \sum_{i=1}^m e^{-y_i\Phi(\w(t), \x_i)} y_i\nabla_\w \Phi(\w, \x_i) \| = \L(\w(t))g(t)^{L-1} \| \sum_{i=1}^m \frac{e^{-y_i\Phi(\w(t), \x_i)}}{\L(\w(t))} y_i\nabla_\w \Phi\left(\widetilde \w + \frac{\br(t)}{g(t)}, \x_i\right) \| \]
Now, for large enough $t$, $\| \sum_{i=1}^m \frac{e^{-y_i\Phi(\w(t), \x_i)}}{\L(\w(t))} y_i\nabla_\w \Phi\left(\widetilde \w + \frac{\br(t)}{g(t)}, \x_i\right) \|$ is bounded as, using Euler's homogeneous theorem, we can say 
\[ \lim_{t \to \infty} \widetilde{\w}^\top \left(\sum_{i=1}^m \frac{e^{-y_i\Phi(\w(t), \x_i)}}{\L(\w(t))} y_i\nabla_\w \Phi\left(\widetilde \w + \frac{\br(t)}{g(t)}, \x_i\right)\right) = \lim_{t \to \infty} L\left(\sum_{i=1}^m \frac{e^{-y_i\Phi(\w(t), \x_i)}}{\L(\w(t))} \Phi(\widetilde{\w}, \x_i)\right) \]
Thus, its a convex combination of positive defined terms and hence bounded. Thus, we can say
\begin{equation}
\label{eq:unnorm:conv:gf:lossgradnorm}
k_1 \leq \lim_{t \to \infty} \frac{\| \frac{d \w(t)}{dt} \|}{\L(\w(t)) g(t)^{L-1}} \leq k_2    
\end{equation}
where $k_1$ and $k_2$ are some constants. Also, by the assumption
\[ \lim_{t \to \infty} \frac{\| \frac{d \w(t)}{dt} \|}{g'(t)} = 1 \]
Thus, we can say
\[ k_1 \leq \lim_{t \to \infty} \frac{g'(t)}{\L(\w(t)) g(t)^{L-1}} \leq k_2 \]
Now, as for large enough $t$, for all $i$ that satisfy $\Phi(\widetilde{\w}, \x_i) = \rho$ and any $\epsilon$ satisfying $0 < \epsilon < \rho$, we can say
\[ \rho - \epsilon \leq \Phi\left(\widetilde \w + \frac{\br(t)}{g(t)}, \x_i\right) \leq \rho + \epsilon \]
therefore, for large enough $t$, we can say
\[ c_1 e^{-g(t)^L(\rho + \epsilon)} \leq \L(\w(t)) \leq c_2 e^{-g(t)^L(\rho - \epsilon)} \]
where $c_1$ and $c_2$ are some constants. Using the above equations, we can say
\begin{align}
    \label{eq:swn:grad:lower:bound}
    \lim_{t \to \infty} \frac{g'(t)}{e^{-g(t)^L(\rho + \epsilon)} g(t)^{L-1}} &\geq k_1c_1 \\
    \label{eq:swn:grad:upper:bound}
    \lim_{t \to \infty} \frac{g'(t)}{e^{-g(t)^L(\rho - \epsilon)} g(t)^{L-1}} &\leq k_2c_2
\end{align}
Now, in Equation \ref{eq:swn:grad:lower:bound}, multiplying numerator and denominator by $L g(t)^{L-1}(\rho + \epsilon)$ and denoting  $g(t)^L(\rho + \epsilon)$ by $h_1(t)$, we get
\[ \lim_{t \to \infty} \frac{h_1'(t)}{e^{-h_1(t)}h_1(t)^{2 - \frac{2}{L}}} \geq \alpha \]
where $\alpha$ is some constant. This leads us to conclude
\[ \lim_{t \to \infty} \frac{h_1(t)}{\log(t)} \geq 1 \implies \lim_{t \to \infty} \frac{g(t)^L(\rho + \epsilon)}{\log(t)} \geq 1 \]
Similarly, in Equation \ref{eq:swn:grad:upper:bound}, multiplying numerator and denominator by $L g(t)^{L-1}(\rho - \epsilon)$ and denoting  $g(t)^L(\rho - \epsilon)$ by $h_2(t)$, we get
\[ \lim_{t \to \infty} \frac{h_2'(t)}{e^{-h_2(t)}h_2(t)^{2 - \frac{2}{L}}} \leq \beta \]
where $\beta$ is some constant. This leads us to conclude
\[ \lim_{t \to \infty} \frac{h_2(t)}{\log(t)} \leq 1 \implies \lim_{t \to \infty} \frac{g(t)^L(\rho - \epsilon)}{\log(t)} \leq 1 \]
As $\epsilon$ can be chosen to be arbitrarily small, we can conclude
\[ \lim_{t \to \infty} \frac{g(t)^L \rho}{\log(t)} = 1 \]
Substituting this in Equation \ref{eq:unnorm:conv:gf:lossgradnorm}, we get that loss asymptotically goes down at $\Theta\left(\frac{1}{t (\log t)^{2 - \frac{2}{L}}}\right)$.
\end{proof}

\subsubsection{Exponential Weight Normalization}
\begin{theorem*}
For EWN, under Assumptions (A1)-(A3) for gradient flow and $\lim_{t \to \infty} \frac{\| \frac{d \br(t)}{dt} \|}{g'(t)} = 0$, the following hold
\begin{enumerate}
    \item $\| \w(t) \|$ asymptotically grows at $\Theta\left(\log (t)^{\frac{1}{L}}\right)$
    \item $\L(\w(t))$ asymptotically goes down at the rate of $\Theta\left(\frac{1}{t(\log t)^2}\right)$.
\end{enumerate}
\end{theorem*}

\begin{proof}
Consider $\w = g(t)\widetilde \w + \br(t)$, where $\lim_{t \to \infty} \frac{\| \br(t)\|}{g(t)} = 0$ and $\br(t)^\top\widetilde \w = 0$. Now, we make an additional assumption that
$\lim_{t \to \infty} \frac{\| \frac{d \br(t)}{d(t)} \|}{g'(t)} = 0$.

In this case, 
\[ -\frac{d \L(\w(t))}{d \w} = \sum_{i=1}^m e^{-y_i\Phi(\w(t), \x_i)} y_i\nabla_\w \Phi(\w(t), \x_i) \]
However, in this case, for a node $u$,
\[ \frac{d \w_u(t)}{dt} = -\| \w_u(t) \|^2\frac{d \L(\w(t))}{d \w_u} \]
Consider a vector $\ba(t)$ of equal dimension as $\w$, and its components corresponding to a node $u$ is given by $\ba_u(t) = -\|\widetilde \w_u\|^2\frac{d \L(\w(t))}{d \w_u}$. Now as we know $\w$ converges in direction to $\widetilde \w$, therefore, using the update equation above, we can say
\[ \lim_{t \to \infty} \frac{\|\frac{d \w(t)}{dt}\|}{g(t)^2\|\ba(t)\|} = 1 \]
Using the update equation for $-\frac{d \L(\w(t))}{d \w}$, which is the same as Equation \ref{eq:unnorm:conv:upd}, and using the same arguments as for Unnorm in Appendix \ref{convrate:unnorm:gf},  we can say
\begin{equation}
    \label{eq:dLdwnorm:EWN:gf}
    k_1 \leq \lim_{t \to \infty} \frac{\| \frac{d \L(\w(t))}{dw} \|}{\L(\w(t))g(t)^{L-1}} \leq k_2    
\end{equation}
where $k_1$ and $k_2$ are some constants. Now, using the expression for $\ba(t)$, we can say
\[ k_3 \leq \lim_{t \to \infty} \frac{\| \ba(t) \|}{\L(\w(t))g(t)^{L-1}} \leq k_4 \]
where $k_3$ and $k_4$ are some constants. Now, from the assumption, we know
\[ \lim_{t \to \infty} \frac{ \| \frac{d\w(t)}{dt} \|}{g'(t)} = 1 \]
Using the equations above, we can say
\begin{equation}
\label{eq:convrate:ewn:gf:gradloss}
    k_3 \leq \frac{g'(t)}{\L(\w(t))g(t)^{L+1}} \leq k_4
\end{equation}
Using similar reasoning as for Unnorm in Appendix \ref{convrate:unnorm:gf}, we can say, for a large enough $t$,
\[ c_1 e^{-g(t)^L(\rho + \epsilon)} \leq \L(\w(t)) \leq c_2 e^{-g(t)^L(\rho - \epsilon)} \]
where $c_1$ and $c_2$ are some constants. Substituting this in equation above, we get
\begin{align}
    \label{eq:ewn:gf:grad:lower:bound}
    \lim_{t \to \infty} \frac{g'(t)}{e^{-g(t)^L(\rho + \epsilon)} g(t)^{L+1}} &\geq k_3c_1 \\
    \label{eq:ewn:gf:grad:upper:bound}
    \lim_{t \to \infty} \frac{g'(t)}{e^{-g(t)^L(\rho - \epsilon)} g(t)^{L+1}} &\leq k_4c_2
\end{align}
Now, using similar arguments as for Unnorm in Appendix \ref{convrate:unnorm:gf}, we can say
\[ \lim_{t \to \infty} \frac{g(t)^L \rho}{\log(t)} = 1 \]
Substituting this in Equation \ref{eq:convrate:ewn:gf:gradloss}, we get that loss asymptotically goes down at $\Theta\left(\frac{1}{t (\log t)^2}\right)$.
\end{proof}

\subsection{Gradient Descent}\label{conv:gd}
\begin{theorem*}
For Exponential Weight Normalization, under Assumptions (B1)-(B4), $\rho > 0$, $\eta(t) = O\left(\left(\log\frac{1}{\L}\right)^c\right)$ for $c < 1$ and $\lim_{t \to \infty} \frac{\| \br(t+1) - \br(t) \|}{g(t+1) - g(t)} = 0$, the following hold
\begin{enumerate}[label=(\roman*)]
    \item $\| \w(t) \|$ asymptotically grows with $t$ as $\Theta\left((\log d(t))^{\frac{1}{L}}\right)$
    \item $\L(\w(t))$ asymptotically goes down with $t$ as $\Theta\left(\frac{1}{d(t)(\log d(t))^2}\right)$.
\end{enumerate}
\end{theorem*}

\begin{proof}
Consider $\w = g(t)\widetilde \w + \br(t)$, where $\lim_{t \to \infty} \frac{\| \br(t)\|}{g(t)} = 0$ and $\br(t)^\top\widetilde \w = 0$. Now, we make additional assumptions that
$\lim_{t \to \infty} \frac{\| \br(t+1) - \br(t) \|}{g(t+1) - g(t)} = 0$.

Consider a node $u$ in the network that has $\| \widetilde \w_u\| > 0$. The update equations for $\bv_u(t)$ and $\alpha_u(t)$ are given by
\[ \alpha_u(t+1) = \alpha_u(t) - \eta(t)e^{\alpha_u(t)}\frac{\bv_u(t)^\top\nabla_{\w_u} \L(\w(t))}{\| \bv_u(t) \|} \]
\[ \bv_u(t+1) = \bv_u(t) - \eta(t)\frac{e^{\alpha_u(t)}}{\| \bv_u(t) \|}\left(I - \frac{\bv_u(t)\bv_u(t)^\top}{\| \bv_u(t) \|^2}\right)\nabla_{\w_u} \L(\w(t)) \]
Now, we will first estimate $\|e^{\alpha_u(t+1)}\frac{\bv_u(t+1)}{\|\bv_u(t+1)\|} - e^{\alpha_u(t)}\frac{\bv_u(t)}{\|\bv_u(t)\|} \|$. Let $\delta_u(t)$ denote $\eta(t)e^{\alpha_u(t)}\| \nabla_{\w_u} \L(\w(t)) \|$ and $\epsilon_u(t)$ denote the angle between $\bv_u(t)$ and $-\nabla_{\w_u} \L(\w(t))$. We know $\lim_{t \to \infty} \delta_u(t) = 0$ and $\lim_{t \to \infty} \epsilon_u(t) = 0$. Now, rewriting update equations in terms of these symbols, we get
\[ e^{\alpha_u(t+1)} = e^{\alpha_u(t)}e^{\delta_u(t) \cos(\epsilon_u(t))} \]
\[ \bv_u(t+1) = \bv_u(t) + \delta_u(t)\sin(\epsilon_u(t))\frac{-\nabla_{\w_u} \L(\w(t))_{\bv_u(t)_\perp}}{\| \nabla_{\w_u} \L(\w(t))_{\bv_u(t)_\perp} \|} \]
where $-\nabla_{\w_u} \L(\w(t))_{\bv_u(t)_\perp}$ denotes the component of $-\nabla_{\w_u} \L(\w(t))$ perpendicular to $\bv_u(t)$. Now, using these equations we can say
\begin{equation}
\label{eq:diffwnorm:EWN:gd:convrate}
    \begin{split}
    e^{\alpha_u(t+1)}\frac{\bv_u(t+1)}{\| \bv_u(t+1) \|} - e^{\alpha_u(t)}\frac{\bv_u(t)}{\| \bv_u(t) \|} &= e^{\alpha_u(t)}\left(e^{\delta_u(t)\cos(\epsilon_u(t))}\frac{\| \bv_u(t) \|}{\| \bv_u(t+1) \|} - 1\right)\frac{\bv_u(t)}{\| \bv_u(t) \|} \\
    &+\left(\frac{e^{\alpha_u(t+1)}\delta_u(t)\sin(\epsilon_u(t))}{\| \bv_u(t) \|\| \bv_u(t+1) \|}\right)\frac{-\nabla_{\w_u} \L(\w(t))_{\bv_u(t)_\perp}}{\| \nabla_{\w_u} \L(\w(t))_{\bv_u(t)_\perp} \|}
    \end{split}
\end{equation}

Now as $\lim_{t \to \infty} \frac{\| \bv_u(t) \|}{\| \bv_u(t+1) \|} = 1$, therefore we can say
\[ \lim_{t \to \infty} \frac{e^{\delta_u(t)cos(\epsilon_u(t))}\frac{\| \bv_u(t) \|}{\| \bv_u(t+1) \|} - 1}{\delta_u(t)cos(\epsilon_u(t))} = 1 \]
Now, as $\| \bv_u(t) \|$ keeps on increasing during the gradient descent trajectory, therefore we can say $\frac{1}{\| \bv_u(t) \|\| \bv_u(t+1) \|} \leq k$, where $k > 0$ is some constant. Now dividing both sides of Equation (\ref{eq:diffwnorm:EWN:gd:convrate}) by $e^{\alpha_u(t)}\delta_u(t)\cos(\epsilon_u(t))$ and analyzing the coefficient of the second term on RHS, we get
\[ \lim_{t \to \infty} \frac{e^{\delta_u(t)\cos(\epsilon_u(t))}\sin(\epsilon_u(t))}{\|\bv_u(t)\|\|\bv_u(t+1)\|\cos(\epsilon_u(t))} \leq 0 \]
Taking norm on both sides of Equation (\ref{eq:diffwnorm:EWN:gd:convrate}), using Pythagoras theorem and the limits established above, we can say
\[ \lim_{t \to \infty} \frac{\| e^{\alpha_u(t+1)}\frac{\bv_u(t+1)}{\| \bv_u(t+1) \|} - e^{\alpha_u(t)}\frac{\bv_u(t)}{\| \bv_u(t) \|} \| }{e^{\alpha_u(t)}\delta_u(t)} = 1 \]
Now, we also know 
\[ \lim_{t \to \infty} \frac{\| e^{\alpha_u(t+1)}\frac{\bv_u(t+1)}{\| \bv_u(t+1) \|} - e^{\alpha_u(t)}\frac{\bv_u(t)}{\| \bv_u(t) \|} \| }{g(t+1) - g(t)} = \| \widetilde \w_u \| \]

Now, using equations above and Equation (\ref{eq:dLdwnorm:EWN:gf}), we can say
\[  k_1 \leq \lim_{t \to \infty} \frac{g(t+1) - g(t)}{\eta(t)\L(\w(t))g(t)^{L+1}} \leq k_2 \] 
where $k_1$ and $k_2$ are some constants.
Using similar reasoning as for Unnorm in Appendix \ref{convrate:unnorm:gf}, we can say
\[ c_1 e^{-g(t)^L(\rho + \epsilon)} \leq \L(\w(t)) \leq c_2 e^{-g(t)^L(\rho - \epsilon)} \]
where $c_1$ and $c_2$ are some constants.
Substituting in the equation above, we get
\begin{align*}
    \lim_{t \to \infty} \frac{g(t+1) - g(t)}{\eta(t)e^{-g(t)^L(\rho + \epsilon)} g(t)^{L+1}} &\geq k_1c_1 \\
    \lim_{t \to \infty} \frac{g(t+1) - g(t)}{\eta(t)e^{-g(t)^L(\rho - \epsilon)} g(t)^{L+1}} &\leq k_2c_2
\end{align*}
These equations govern the rate of $g(t)$ for any $\eta(t)$ that satisfies assumption (A4). Now, to obtain a better closed form, we will need the new assumption on $\eta(t)$, i.e, $\eta(t) = O\left(\left(\log\frac{1}{\L}\right)^c\right)$, where $c < 1$.

Now, define a map $d:\mathbb{N} \to \mathbb{R}$, given by $d(t) = \sum_{\tau = 0}^{t-1} \eta(\tau)$ and a real analytic function $f(t)$ satisfying $f(d(t)) = g(t)$ for all $t \in \mathbb{N}$ and $\lim_{t \to \infty} \frac{f(d(t+1)) - f(d(t))}{\eta(t) f'(d(t))} = 1$. We will later verify that the $f(t)$ obtained indeed satisfies this for the given $\eta(t)$. Substituting in the equations above,
\begin{equation}
\label{eq:ewn:gd:convrate:lossgrad}
k_1 \leq \lim_{t \to \infty} \frac{f'(d(t))}{\L(\w(t))f(d(t))^{L+1}} \leq k_2     
\end{equation}
\begin{align*}
    \lim_{t \to \infty} \frac{f'(d(t))}{e^{-f(d(t))^L(\rho + \epsilon)} f(d(t))^{L+1}} &\geq k_1c_1 \\
    \lim_{t \to \infty} \frac{f'(d(t))}{e^{-f(d(t))^L(\rho - \epsilon)} f(d(t))^{L+1}} &\leq k_2c_2
\end{align*}
As $t \to \infty$, $d(t) \to \infty$, therefore
\begin{align*}
    \lim_{t \to \infty} \frac{f'(t)}{e^{-f(t)^L(\rho + \epsilon)} f(t)^{L+1}} &\geq k_1c_1 \\
    \lim_{t \to \infty} \frac{f'(t)}{e^{-f(t)^L(\rho - \epsilon)} f(t)^{L+1}} &\leq k_2c_2
\end{align*}
Now, using similar arguments as in Appendix \ref{convrate:unnorm:gf}, 
\[ \lim_{t \to \infty} \frac{f(t)^L \rho}{\log(t)} = 1 \]
Substituting in the Equation \ref{eq:ewn:gd:convrate:lossgrad}, we get that the loss goes down at $\Theta\left(\frac{1}{d(t) (\log d(t))^2}\right)$.

We also verify that $\lim_{t \to \infty} \frac{f(d(t+1)) - f(d(t))}{\eta(t) f'(d(t))} = 1$ for $\eta(t) = O(\left(\log\frac{1}{\L}\right)^c)$, where $c < 1$. This can be easily verified by using mean value theorem, and simply verifying $\lim_{t \to \infty} \frac{\eta(t) f''(d(t))}{f'(d(t))} = 0$. Obtaining the expressions for $f'(d(t))$ and $f''(d(t))$, we get
\[ \lim_{t \to \infty} \frac{\eta(t) f''(d(t))}{f'(d(t))} = \lim_{t \to \infty} \frac{\eta(t)\left(d(t)(\frac{1}{L} - 1) - \log d(t)\right)}{d(t)\log(d(t))} \]
As loss goes down at $\Theta\left(\frac{1}{d(t) (\log d(t))^2}\right)$, therefore if $\eta(t) = O(\left(\log\frac{1}{\L}\right)^c)$ for $c < 1$, the above limit tends to 0 as $d(t) \to \infty$.
\end{proof}

\section{Cross-Entropy Loss}
In this section, we will provide the corresponding assumptions and theorems, along with their proofs, for cross-entropy loss. 

\subsection{Notations}
Let $k$ denote the total number of classes. As $\Phi(\w, \x_i)$ is a multidimensional function for multi-class classification, let's denote the $j^{th}$ component of the output by $\Phi_j(\w, \x_i)$. Also, denote the asymptotic normalized margin for $j^{th}$ class corresponding to $i^{th}$ data point($j \neq y_i$) by $\rho_{i, j}$, i.e, $\rho_{i, j} = \Phi_{y_i}(\widetilde \w, \x_i) - \Phi_j(\widetilde \w, \x_i)$. Margin for a data point $i$ is defined as $\rho_i = \min_{j \neq y_i} \rho_{i, j}$. The margin for the entire network is defined as $\rho = \min_i \rho_i$.

\subsection{Assumptions}
The assumptions can be broadly divided into loss function/architecture based assumptions and trajectory based assumptions. The loss functions/architecture based assumptions are shared across both gradient flow and gradient descent.

\textbf{Loss function/Architecture based assumptions}
\begin{enumerate}[label=\arabic*]
    \item $\ell(y_i, \Phi(\w, \x_i)) = \log\left(1 + \sum_{j \neq y_i} e^{-(\Phi_{y_i}(\w, \x_i) - \Phi_j(\w, \x_i))}\right)$
    \item $\Phi(., \x)$ is a $\C^1$ function, for a fixed $\x$
    \item $\Phi(\lambda \w, \x) = \lambda^L \Phi(\w, \x)$, for some $\lambda > 0$ and $L > 0$
\end{enumerate}

\textbf{Gradient flow}. For gradient flow, we make the following trajectory based assumptions
\begin{enumerate}[label=(A\arabic*)]
\item There exists a time $t_0$ such that $\L(\w(t_0)) < \log 2$.
\item $\lim_{t \to \infty} \frac{ -\nabla_\w \L(\w(t))}{\| \nabla_\w \L(\w(t)) \|} := \widetilde{\g}$.
\end{enumerate}

\textbf{Gradient Descent}. For gradient descent, we require the learning rate $\eta(t)$ to not grow too fast, and a slightly stronger assumption on loss.
\begin{enumerate}[label=(B\arabic*)]
    \begin{minipage}{0.5\textwidth}
    \item $\lim_{t \to \infty} \L(\w(t)) = 0$
    \end{minipage}
    \begin{minipage}{0.5\textwidth}
    \item $\lim_{t \to \infty} \frac{ -\nabla_\w \L(\w(t))}{\| \nabla_\w \L(\w(t)) \|} := \widetilde{\g}$
    % \item $\lim_{t \to \infty}\frac{\bell(\w(t))}{\| \bell(\w(t)) \|} := \widetilde{\bell}$
    % \item Let $\rho = \min_i y_i\Phi(\widetilde \w, \x_i)$. Then $\rho > 0$. 
    \end{minipage}
    \item $\lim_{t \to \infty} \eta(t) \| \w_u(t) \| \nabla_{\w_u} \L(\w(t)) \| = 0$ for all $u$ in the network.
\end{enumerate}

The assumption (B3) is mild, as the norm of the gradient of cross-entropy loss goes down exponentially fast as compared to norm of the weights.
 
\subsection{Asymptotic relations between weights and gradients}
This section contains the main theorems that establish asymptotic relations between weights and gradients for SWN and EWN. First, we will state a common proposition for both SWN and EWN.

\begin{proposition}
\label{prop:ce:lzero:gf}
Under assumption (A1) for gradient flow, for both SWN and EWN, $\lim_{t \to \infty} \L(\w(t)) = 0$.
\end{proposition}

\begin{proof}
First of all, for cross-entropy loss
\begin{equation}
\label{eq:ce:loss:grad}
\frac{d\L(t)}{d\w} = -\sum_i \left(\frac{\sum_{j \neq y_i} e^{-(\Phi_{y_i}(\w, \x_i) - \Phi_j(\w, \x_i))} (\nabla_\w \Phi_{y_i}(\w, \x_i) - \nabla_\w \Phi_j(\w, \x_i))}{1 + \sum_{j \neq y_i} e^{-(\Phi_{y_i}(\w, \x_i) - \Phi_j(\w, \x_i))}}\right)
\end{equation}
Now, using Theorem \ref{theorem:1},
\[ \frac{d\L(t)}{dt} = \left(\frac{d\L(t)}{d\w}\right)^\top \frac{d\w(t)}{dt} = -\sum_u \| \w_u(t) \|^2 \left\| \frac{d\L(t)}{d\w_u} \right\|^2 \]
Let $k$ be the total number of neurons in the network. Then using the elementary inequality, $(\sum_{i=1}^n a_i)^2 \leq n\sum_{i=1}^n a_i^2$, we get
\[ \frac{d\L(t)}{dt} \leq -\frac{1}{k} \left(\sum_u \| \w_u(t) \| \left\| \frac{d\L(t)}{d\w_u} \right\| \right)^2 \]
Again using the fact that $\left|\w(t)^\top \frac{d\L(t)}{d\w}\right| \leq \sum_u \| \w_u(t) \| \left\| \frac{d\L(t)}{d\w_u} \right\|$, we get
\begin{equation}
\label{eq:ce:dL:dt:upp:bound}
\frac{d\L(t)}{dt} \leq -\frac{1}{k} \left(\w(t)^\top \frac{d\L(t)}{d\w}\right)^2    
\end{equation}
Taking the dot product with $\w$ on both sides of Equation (\ref{eq:ce:loss:grad}) and using $\w^\top \nabla_\w \Phi(\w, \x_i) = L \Phi(\w, \x_i)$ (Euler's homogeneity theorem), we get
\[ \w(t)^\top \frac{d\L(t)}{d\w} = -L\sum_i \left(\frac{\sum_{j \neq y_i} e^{-(\Phi_{y_i}(\w, \x_i) - \Phi_j(\w, \x_i))} (\Phi_{y_i}(\w, \x_i) - \Phi_j(\w, \x_i))}{1 + \sum_{j \neq y_i} e^{-(\Phi_{y_i}(\w, \x_i) - \Phi_j(\w, \x_i))}}\right) \]
Now, using the fact, that at time $t_0$, $\L(t_0) < \log 2$, which means $\min_i \min_{j \neq y_i} (\Phi_{y_i}(\w, \x_i) - \Phi_j(\w, \x_i)) = \epsilon > 0$. Also, as we know, for gradient flow, the loss cannot go up, therefore, for any time $t > t_0$, $\min_i \min_{j \neq y_i} (\Phi_{y_i}(\w, \x_i) - \Phi_j(\w, \x_i)) > \epsilon > 0$. Using this, we can say, for any $t > t_0$,
\[ \w(t)^\top \frac{d\L(t)}{d\w} \leq -L\epsilon \sum_i \left(\frac{\sum_{j \neq y_i} e^{-(\Phi_{y_i}(\w, \x_i) - \Phi_j(\w, \x_i))}}{1 + \sum_{j \neq y_i} e^{-(\Phi_{y_i}(\w, \x_i) - \Phi_j(\w, \x_i))}}\right) \]
Using the fact that $\ln(1+t) > \frac{t}{1+t}$ for $t \in (0,1)$, therefore,
\[ \w(t)^\top \frac{d\L(t)}{d\w} \leq -L\epsilon\L(t) \]
Substituting this in Equation (\ref{eq:ce:dL:dt:upp:bound}), we get
\[ \frac{d\L(t)}{dt} \leq - \frac{L^2\epsilon^2}{k} \L(t)^2 \]
Integrating this equation from $t_0$ to $t$, we get
\begin{equation}
\label{eq:ce:finiteconvrate:gf}
\frac{1}{\L(t)} \geq \frac{1}{\L(t_0)} + \frac{L^2\epsilon^2}{k} (t - t_0)
\end{equation}
Clearly as t tends to $\infty$, RHS tends to $\infty$ and thus $\L$ tends to 0.
\end{proof}

Now, we provide one of our main theorem that establishes gradient convergence implies weight convergence.

\begin{theorem}
\label{thm:ce:weightconv}
    Consider a node $u$ in the network with $\| \widetilde{\g}_u \| > 0$ and $\lim_{t \to \infty} \| \w_u(t) \| = \infty$. Under assumptions (A1), (A2) for gradient flow and (B1)-(B3) for gradient descent, for both SWN and EWN
    \begin{enumerate}[label=(\roman*)]
        \begin{minipage}{0.5\textwidth}
        \item $\lim_{t \to \infty} \frac{\w_u(t)}{\| \w_u(t) \|} := \widetilde{\w}_u$ exists.
        \end{minipage}
        \begin{minipage}{0.5\textwidth}
        \item $\widetilde{\w}_u = \lambda \widetilde \g_u$ for some $\lambda > 0$.
        \end{minipage}
    \end{enumerate}
\end{theorem}

\begin{proof}
Same as in Appendix \ref{proof:thm:weightconv}.
\end{proof}

Now, we provide the main theorem that distinguishes SWN and EWN.

\begin{theorem}
\label{thm:ce:weightnorm}
    Consider two nodes $u$ and $v$ in the network with $\| \widetilde \g_u \| \geq \|\widetilde \g_v \|>0, \lim_{t \to \infty} \| \w_u(t) \| = \infty$ and $\lim_{t \to \infty} \| \w_v(t) \| = \infty$. Let $\frac{\| \widetilde \g_u \|}{\| \widetilde \g_v \|}$ be denoted by $c$. Under assumptions (A1), (A2) for gradient flow and (B1)-(B3) for gradient descent,
    % Under assumptions (A1)-(A3) for gradient flow and (A1)-(A4) for gradient descent, for all nodes $u$ and $v$ in the network, the following hold
    \begin{enumerate}[label=(\roman*)]
        \item for SWN, $\lim_{t\to\infty} \frac{\|\w_u(t)\|}{\|\w_v(t)\|}=c $
        \item for EWN, $\lim_{t\to\infty} \frac{\|\w_u(t)\|}{\|\w_v(t)\|}$ is either $0, \infty$ or $\frac{1}{c}$
    \end{enumerate}
\end{theorem}

\begin{proof}
Same as in Appendix \ref{proof:thm:weightnorm}
\end{proof}
\subsection{Sparsity Inductive Bias for Exponential Weight Normalisation}

The inverse relation between $\| \w_u(t) \|$ and $\| \nabla_{\w_u} \L(\w(t)) \|$ in the EWN trajectory results in an interesting inductive bias that favours movement along sparse directions. 
\begin{proposition}
\label{ce:prop:3}
    Consider two nodes $u$ and $v$ in the network such that $\| \widetilde \g_v \| \geq \| \widetilde \g_u \| > 0$ and $\| \w_u(t) \|, \| \w_v(t) \| \to \infty$. Let $\frac{\| \widetilde \g_u\|}{\| \widetilde \g_v \|}$ be denoted by $c$. Consider any $\epsilon, \delta$ such that $0< \epsilon < c$ and $0<\delta<2\pi$.  Then, the following holds:
    \begin{enumerate}[label=(\roman*)]
        \item There exists a time $t_1$, such that for all $t > t_1$ both SWN and EWN trajectories have the following properties: 
        \begin{enumerate}
        \begin{minipage}{0.4\textwidth}
        \item $\frac{\| \nabla_{\w_u} \L(\w(t))\|}{\| \nabla_{\w_v} \L(\w(t))\|} \in [c - \epsilon, c + \epsilon]$
        \end{minipage}
        \begin{minipage}{0.5\textwidth}
        \item $\left( \frac{\w_u(t)}{\| \w_u(t) \|}\right)^\top \left(\frac{-\nabla_{\w_u} \L(\w(t))}{\| \nabla_{\w_u} \L(\w(t)) \|}\right) \geq \cos(\delta)$
        \end{minipage}
        \item $\left( \frac{\w_v(t)}{\| \w_v(t) \|}\right)^\top \left(\frac{-\nabla_{\w_v} \L(\w(t))}{\| \nabla_{\w_v} \L(\w(t)) \|}\right) \geq \cos(\delta)$.
        \end{enumerate}
        \item for SWN, $\lim_{t \to \infty} \frac{\| \w_u(t) \|}{\| \w_v(t) \|} = c$
        \item for EWN, if at some time $t_2 > t_1$,
        \begin{enumerate}
            \item $\frac{\| \w_u(t_2) \|}{\| \w_v(t_2) \|} > \frac{1}{(c-\epsilon)\cos(\delta)} \implies \lim_{t \to \infty} \frac{\| \w_u(t) \|}{\| \w_v(t) \|} = \infty$
            \item $\frac{\| \w_u(t_2) \|}{\| \w_v(t_2) \|} < \frac{\cos(\delta)}{c + \epsilon} \implies \lim_{t \to \infty} \frac{\| \w_u(t) \|}{\| \w_v(t) \|} = 0$
        \end{enumerate}
    \end{enumerate}
\end{proposition}
\begin{proof}
The proof follows from Appendix \ref{proof:prop3}.
\end{proof}

\subsection{Convergence rates}
In this section, we provide convergence rate of loss for EWN.

\paragraph{Gradient Flow:} We provide a finite-time convergence rate of loss for gradient flow in case of EWN.
\begin{theorem}
\label{theorem:ce:ewn:gf:rate}
For Exponential Weight Normalization, under assumption (A1), the following hold for $t > t_0$ in case of gradient flow
\begin{enumerate}[label=(\roman*)]
    \begin{minipage}{0.5\textwidth}
    \item $\| \w(t) \|$ grows with $t$ as $o((\log t)^{\frac{1}{L}})$
    \end{minipage}
    \begin{minipage}{0.5\textwidth}
    \item $\L(t)$ goes down with $t$ as $O\left(\frac{1}{t}\right)$
    \end{minipage}
\end{enumerate}
\end{theorem}

\begin{proof}
Follow from Equation (\ref{eq:ce:finiteconvrate:gf})
\end{proof}

\paragraph{Gradient Descent:} 
\begin{theorem}
\label{ce:theorem:3}
For Exponential Weight Normalization, under Assumptions (B1)-(B4), $\rho > 0$, $\eta(t) = O\left(\left(\log\frac{1}{\L}\right)^c\right)$ for $c < 1$ and $\lim_{t \to \infty} \frac{\| \br(t+1) - \br(t) \|}{g(t+1) - g(t)} = 0$, the following hold
\begin{enumerate}[label=(\roman*)]
    \item $\| \w(t) \|$ asymptotically grows with $t$ as $\Theta\left((\log d(t))^{\frac{1}{L}}\right)$
    \item $\L(\w(t))$ asymptotically goes down with $t$ as $\Theta\left(\frac{1}{d(t)(\log d(t))^2}\right)$.
\end{enumerate}
\end{theorem}

\begin{proof}
The proof follows Appendix \ref{conv:gd}, the only difference is in the gradient update. Let $\w$ be represented as $\w = g(t)\widetilde \w + \br(t)$, where $\lim_{t \to \infty} \frac{\|\br(t)\|}{g(t)} = 0$. Using Equation (\ref{eq:ce:loss:grad}), we can say
\begin{equation}
    \label{eq:ce:dLdwnorm:EWN:gf}
    k_1 \leq \lim_{t \to \infty} \frac{\| \frac{d \L(\w(t))}{dw} \|}{\L(\w(t))g(t)^{L-1}} \leq k_2    
\end{equation}
where $k_1$ and $k_2$ are some constants.
As the order remains the same as in the proof for exponential loss, the proof follows from Appendix \ref{conv:gd}.
\end{proof}

\section{Lemma Proofs} \label{pflemmas}

\begin{lemma*}
Consider sequence a satisfying the following properties
\begin{enumerate}
    \item $a_k > 0$
    \item $\sum_{k=0}^\infty a_k = \infty$
    \item $\lim_{k \to \infty} a_k = 0$
\end{enumerate}
Then $\sum_{k=0}^{\infty} \frac{a_k}{\sqrt{\sum_{j=0}^{k} a_j^2}} = \infty$
\end{lemma*}

\begin{proof}
If $\sum_{k=0}^\infty a_k^2$ is bounded, then the statement is obvious. Let's consider the case when $\sum_{k=0}^\infty a_k^2$ diverges. As $\lim_{k \to \infty} a_k = 0$, therefore there must be an index $k_1$, such that for $k \geq k_1$, $a_k \leq \epsilon$. Now, as $a_k \leq \epsilon$, therefore $a_k^2 \leq \epsilon a_k$. Now, as $\sum_{k=0}^\infty a_k^2$ diverges, therefore, there must be an index $k_2 > k_1$, such that for any $k > k_2$, $\sum_{j = k_1}^k a_j^2 \geq \sum_{j=0}^{k_1-1} a_j^2$. Now, for $k > k_2$, we can say
\begin{align*}
\sum_{j=k_1}^k \frac{a_j}{\sqrt{\sum_{l=0}^j a_l^2}} &\geq \frac{1}{\sqrt{2}}\sum_{j=k_1}^k \frac{a_j}{\sqrt{\sum_{l=k_1}^j a_l^2}} \\
&\geq \frac{1}{\sqrt{2}}\sum_{j=k_1}^k \frac{a_j}{\sqrt{\sum_{l=k_1}^j \epsilon a_l}}\\
&\geq \frac{1}{\sqrt{2\epsilon}}\sum_{j=k_1}^k \frac{a_j}{\sqrt{\sum_{l=k_1}^k a_l}}\\
&= \frac{1}{\sqrt{2\epsilon}} \sqrt{\sum_{j=k_1}^k a_j}
\end{align*}
As $\sum_{k=0}^\infty a_k$ diverges, therefore $\sum_{k=0}^\infty \frac{a_k}{\sqrt{\sum_{j=0}^k a_j^2}}$ diverges as well.
\end{proof}

\begin{lemma*}
Consider two sequences a and b satisfying the following properties
\begin{enumerate}
    \item $a_k > 0, \sum_{k=0}^\infty a_k = \infty$ and $\lim_{k \to \infty} a_k = 0$
    \item $b_0 > 0$, $b$ is increasing and $b_{k+1}^2 \leq b_k^2 + \left(\frac{a_k}{b_k}\right)^2$
\end{enumerate}
Then $\sum_{k=0}^\infty \frac{a_k}{b_k} = \infty$.
\end{lemma*}

\begin{proof}
As we know $b$ is increasing and $b_{k+1}^2 \leq b_k^2 + (\frac{a_k}{b_k})^2$, we get
\[ b_k \leq \sqrt{b_0^2 + \sum_{j=0}^{k-1} \left(\frac{a_j}{b_j}\right)^2} \leq \sqrt{b_0^2 + \frac{1}{b_0^2}\sum_{j=0}^{k-1} a_j^2} \]
Using this, we can say
\[ \sum_{j=0}^k \frac{a_j}{b_j} \geq \sum_{j=0}^k \frac{a_j}{\sqrt{b_0^2 + \frac{1}{b_0^2}\sum_{l=0}^{j-1} a_l^2}} \geq \sum_{j=0}^k \frac{a_j}{\sqrt{b_0^2 + \frac{1}{b_0^2}\sum_{l=0}^{k-1} a_l^2}}\]
Now, if $\sum_{k=0}^\infty a_k^2$ does not diverge to infinity, then $b$ remains bounded using the bound above and then its trivial to establish that $\sum_{k=0}^\infty \frac{a_k}{b_k}$ diverges. In case, $\sum_{k=0}^\infty a_k^2$ diverges to infinity, then there must be an index $k_1$ such that for any $k>k_1$, we can say $\sum_{j=0}^{k-1} a_j^2 \geq b_0^4$. So, for $k>k_1$, we can say
\[ \sum_{j=0}^k \frac{a_j}{b_j} \geq \sum_{j=0}^k \frac{b_0}{\sqrt{2}}\frac{a_j}{\sqrt{\sum_{l=0}^{k-1} a_l^2}}\]
Now, as we have assumed $a$ tends to zero, so there must be an index $k_2$ such that for any $k>k_2$, $a_k \leq \epsilon$. Also, as we have assumed $\sum_{j=0}^\infty a_j^2$ diverges, therefore there must be an index $k_3>k_2$, such that for $k>k_3$, $\sum_{j=k_2}^k a_j^2 \geq \sum_{j=0}^{k_2} a_j^2$. Using these things and that if $a_j \leq \epsilon$, then $a_j^2 \leq \epsilon a_j$, we can say for $k>k_3$,
\[ \sum_{j=k_3}^k \frac{a_j}{b_j} \geq \sum_{j=k_3}^k \frac{b_0}{2}\frac{a_j}{\sqrt{\sum_{l=k_3}^{k-1} \epsilon a_l}} \geq \frac{b_0}{2\sqrt{\epsilon}} \sqrt{\sum_{j=k_3}^{k-1} a_j} \]
Now, as $\sum_{k=0}^\infty a_k$ diverges, thus $\sum_{k=0}^\infty \frac{a_k}{b_k}$ diverges as well.
\end{proof}

% \begin{lemma*}
% Consider two sequences a and b satisfying the following properties
% \begin{enumerate}
%     \item $a_k > 0$ and $\sum_{k=0}^\infty a_k = \infty$
%     \item $b_k > 0$ and $\sum_{k=0}^\infty b_k = \infty$
%     \item $\sum_{k=0}^\infty (a_k - b_k)$ converges to a finite value
%     \item $lim_{k \to \infty} \frac{a_k}{b_k}$ exists
% \end{enumerate}
% Then $\lim_{k \to \infty} \frac{a_k}{b_k} = 1$.
% \end{lemma*}

% \begin{proof}
% Let's say $\lim_{k \to \infty} \frac{a_k}{b_k} = c > 1$. The other case can be handled similarly. Choose an $\epsilon > 0$ such that $c - \epsilon > 1$. Then, there exists an index $k_1$, such that for $k > k_1$, we can say
% \[ c - \epsilon \leq \frac{a_k}{b_k} \leq c + \epsilon \]
% Using this, we can say, for $k > k_1$,
% \[ b_k(c - \epsilon - 1) \leq a_k - b_k \leq b_k(c + \epsilon - 1) \]
% Summing the equation above from $k_1$ to $\infty$ and recognizing that $\sum_{k=0}^\infty b$ diverges, we get $\sum_{k=k_1}^\infty (a_k-b_k) = \infty$. This contradicts. Therefore $\lim_{k \to \infty} \frac{a_k}{b_k} = 1$.
% \end{proof}

\section{Integral Form of Stolz-Cesaro Theorem} \label{intstolz}
We first state the Stolz-Cesaro Theorem.

\begin{theorem*} \citep{muresan2015concrete}
Assume that $\{a\}_{k=1}^\infty$ and $\{b\}_{k=1}^\infty$ are two sequences of real numbers such that $\{b\}_{k=1}^\infty$ is strictly monotonic and diverging. Additionally, if $\lim_{k \to \infty} \frac{a_{k+1} - a_k}{b_{k+1} - b_k} = L$ exists, then $\lim_{k \to \infty} \frac{a_k}{b_k}$ exists and is equal to $L$. 
\end{theorem*}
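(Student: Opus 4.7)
The target is the classical Stolz--Cesaro theorem as cited, so my plan is the standard telescoping $\varepsilon$-argument. I would first reduce to the case where $\{b_k\}$ is strictly increasing to $+\infty$: the strictly-decreasing-to-$-\infty$ case follows verbatim after replacing $(a_k, b_k)$ with $(-a_k, -b_k)$, which preserves both the hypothesis and the conclusion. The key observation is that the hypothesis controls the \emph{increments} $a_{k+1} - a_k$ in terms of $b_{k+1} - b_k$, and a telescoping sum then converts this into a bound on $a_n$ itself in terms of $b_n$.

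Concretely, fix $\varepsilon > 0$. By hypothesis, choose $N$ so that for all $k \geq N$,
\[ (L - \varepsilon)(b_{k+1} - b_k) \;<\; a_{k+1} - a_k \;<\; (L + \varepsilon)(b_{k+1} - b_k), \]
where I have used strict positivity of $b_{k+1} - b_k$ to multiply through without flipping signs. Summing telescopically from $k = N$ to $k = n-1$ gives
\[ (L - \varepsilon)(b_n - b_N) \;<\; a_n - a_N \;<\; (L + \varepsilon)(b_n - b_N). \]
Dividing by $b_n > 0$ (valid for $n$ large, since $b_n \to \infty$) and rearranging,
\[ (L - \varepsilon)\Bigl(1 - \tfrac{b_N}{b_n}\Bigr) + \tfrac{a_N}{b_n} \;<\; \tfrac{a_n}{b_n} \;<\; (L + \varepsilon)\Bigl(1 - \tfrac{b_N}{b_n}\Bigr) + \tfrac{a_N}{b_n}. \]
Since $a_N, b_N$ are fixed while $b_n \to \infty$, both $\tfrac{b_N}{b_n}$ and $\tfrac{a_N}{b_n}$ vanish. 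Taking $\limsup$ and $\liminf$ as $n \to \infty$ yields $L - \varepsilon \leq \liminf \tfrac{a_n}{b_n} \leq \limsup \tfrac{a_n}{b_n} \leq L + \varepsilon$. Since $\varepsilon$ was arbitrary, the limit exists and equals $L$.

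The main ``obstacle'' is really just bookkeeping around corner cases: (i) handling $L = \pm\infty$, which I would cover by replacing the two-sided squeeze with a one-sided bound $a_{k+1} - a_k > M(b_{k+1} - b_k)$ for arbitrary $M$ and running the same telescoping argument to conclude $\tfrac{a_n}{b_n} > M$ eventually; and (ii) the decreasing-$b$ case, handled by the symmetry noted above. The heart of the argument is purely the telescoping identity, which works precisely because the monotonicity of $\{b_k\}$ gives a fixed sign to the increments and the divergence of $\{b_k\}$ drowns out the fixed constants $a_N, b_N$ in the limit.
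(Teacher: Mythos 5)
Your proof is correct. The paper does not actually prove this discrete statement---it is cited from the literature---but the paper's own proof of the integral form of Stolz--Cesaro uses exactly the same squeeze: bound the increments between $(L-\epsilon)$ and $(L+\epsilon)$ times the $b$-increments, telescope (respectively, integrate), divide by the divergent denominator so the fixed boundary terms vanish, and take $\limsup$/$\liminf$; your handling of the $L=\pm\infty$ and decreasing-$b_k$ corner cases is also sound.
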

Now, we state and prove the Integral Form of Stolz-Cesaro Theorem.
\begin{theorem*}
Consider two functions $f(t)$ and $g(t)$ greater than zero satisfying $\int_{a}^b f(t) dt < \infty$ and $\int_{a}^b g(t) dt < \infty$ for every finite $a,b$. For any time t, its known that $\int_t^\infty f(t) dt = \infty$ and $\int_t^\infty g(t) dt = \infty$. If $\lim_{t \to \infty} \frac{f(t)}{g(t)}$ exist and is equal to $L$, then $\lim_{t \to \infty} \frac{\int_c^t f(t)dt}{\int_c^t g(t)dt}$ exists for any $c$ and is equal to $L$.
\end{theorem*}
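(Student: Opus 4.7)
The plan is to mimic the standard $\epsilon$-$\delta$ proof of the discrete Stolz--Cesaro theorem, translating sums to integrals. Since the statement is an ``integral analogue'' of a well-known sequence result, the structure of the argument is almost forced; the only care needed is in handling the starting offset $c$ versus the threshold $T$ past which $f/g$ is close to $L$, and in treating the three cases $L\in(0,\infty)$, $L=0$, and $L=+\infty$ (note $f,g>0$ forces $L\geq 0$).

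First I would treat the finite positive case $L\in(0,\infty)$. Given $\epsilon>0$, pick $T>c$ such that for all $s>T$,
\[
(L-\epsilon)\,g(s)\;\leq\;f(s)\;\leq\;(L+\epsilon)\,g(s).
\]
Integrating from $T$ to $t$ preserves the inequalities, giving
\[
(L-\epsilon)\int_T^t g(s)\,ds \;\leq\; \int_T^t f(s)\,ds \;\leq\; (L+\epsilon)\int_T^t g(s)\,ds.
\]
Then I would split $\int_c^t = \int_c^T + \int_T^t$ for both $f$ and $g$, so
\[
\frac{\int_c^t f(s)\,ds}{\int_c^t g(s)\,ds} \;=\; \frac{\int_c^T f(s)\,ds + \int_T^t f(s)\,ds}{\int_c^T g(s)\,ds + \int_T^t g(s)\,ds}.
\]
Because $\int_t^\infty g = \infty$ and $\int_c^T g$ is finite, dividing numerator and denominator by $\int_T^t g(s)\,ds$ and sending $t\to\infty$ makes the ``tail'' dominate: the fixed contributions from $[c,T]$ vanish in the limit, and the ratio is squeezed between $L-\epsilon$ and $L+\epsilon$. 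Since $\epsilon$ is arbitrary, the limit equals $L$.

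For $L=0$ I would drop the lower bound and just use $f(s)\leq \epsilon g(s)$ for $s>T$, leading to $\limsup \int_c^t f/\int_c^t g \leq \epsilon$, hence the limit is $0$. For $L=+\infty$, I would instead use that for any $M>0$ there exists $T$ with $f(s)\geq M g(s)$ on $(T,\infty)$; the same splitting argument then gives $\liminf \int_c^t f/\int_c^t g \geq M$, so the ratio diverges to $+\infty$.

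I do not expect a serious obstacle here: the result is essentially L'H\^opital applied to $F(t)=\int_c^t f$ and $G(t)=\int_c^t g$, both of which are absolutely continuous and tend to $+\infty$, with $F'/G'=f/g\to L$. The mildest bit of care is verifying that the ``boundary'' terms $\int_c^T f$ and $\int_c^T g$ are finite (given by the hypothesis that $f,g$ are integrable on every finite interval) so that they can be absorbed into the $o(1)$ contribution when divided by the diverging tail. Independence of the limit from the choice of $c$ is automatic, since changing $c$ only alters $F$ and $G$ by bounded constants.
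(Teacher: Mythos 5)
Your proposal is correct and follows essentially the same route as the paper's proof: both arguments pick a threshold past which $f/g$ is within $\epsilon$ of $L$ (or exceeds $M$ in the infinite case), integrate the resulting pointwise bounds, split the integral at the threshold, and use divergence of $\int g$ to absorb the finite boundary contributions before squeezing the $\liminf$ and $\limsup$. The only cosmetic difference is that you normalize by the tail integral $\int_T^t g$ while the paper normalizes by the full integral $\int_c^t g$; the content is identical.
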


\begin{proof}
\textbf{Case 1:} L = 0 or $\infty$:

We will prove for L = $\infty$. The case for 0 can be handled similarly. For any $M>0$, there must exist a time $t_1>c$, such that $\frac{f(t)}{g(t)} > M$, for $t>t_1$. Thus we can say for $t>t_1$,
\[ \int_c^t f(t) dt > \int_c^{t_1} f(t)dt + M\int_{t_1}^t g(t)dt \]
Adding $M\int_c^{t_1} g(t) dt$ on both the sides, we get
\[ \int_c^t f(t) dt + M\int_c^{t_1} g(t) dt > \int_c^{t_1} f(t)dt + M\int_{c}^t g(t)dt \]
Dividing both sides by $\int_c^{t} g(t)dt$ and taking limsup $t \to \infty$(using also the fact that $\int_{a}^b f(t) dt < \infty$ and $\int_{a}^b g(t) dt < \infty$ for every finite $a,b$), we get
\[ \limsup_{t \to \infty} \frac{\int_c^t f(t) dt}{\int_c^t g(t) dt} > M \]
Similarly the equation holds for liminf as well. Thus, both liminf and limsup are greater than $M$ for any $M$. Hence $\lim_{t \to \infty} \frac{\int_c^t f(t) dt}{\int_c^t g(t) dt} = \infty$.

\textbf{Case2:} L is finite

In this case, there must exist some time $t_1>c$, such that $L - \epsilon < \frac{f(t)}{g(t)} < L+\epsilon$.Thus, we can say for $t>t_1$,
\[ \int_c^{t_1} f(t)dt + (L - \epsilon)\int_{t_1}^{t} g(t)dt \leq \int_c^t f(t)dt \leq \int_c^{t_1} f(t)dt + (L + \epsilon)\int_{t_1}^{t} g(t)dt \]
Taking the left inequality, adding $(L - \epsilon)\int_{c}^{t_1} g(t)dt$ on both the sides, dividing both the sides by $\int_{c}^{t} g(t)dt$ and taking $\liminf_{t \to \infty}$, we get
\[ L - \epsilon \leq \liminf_{t \to \infty} \frac{\int_c^t f(t)dt}{\int_c^t g(t)dt} \]
Similarly, taking the right inequality, adding $(L + \epsilon)\int_{c}^{t_1} g(t)dt$ on both the sides, dividing both the sides by $\int_{c}^{t} g(t)dt$ and taking $\limsup_{t \to \infty}$, we get
\[ \limsup_{t \to \infty} \frac{\int_c^t f(t)dt}{\int_c^t g(t)dt} \leq L + \epsilon \]
Using the two inequalities, we get, for any $\epsilon > 0$,
\[ \limsup_{t \to \infty} \frac{\int_c^t f(t)dt}{\int_c^t g(t)dt} - \liminf_{t \to \infty} \frac{\int_c^t f(t)dt}{\int_c^t g(t)dt} \leq 2\epsilon \]
Thus, $\lim_{t \to \infty} \frac{\int_c^t f(t)dt}{\int_c^t g(t)dt}$ exists and is equal to $L$.
\end{proof}

\section{Standard Weight Normalization is not Locally Lipschitz in its parameters} \label{swn:ll}
In this section, we will denote $\w$ by $\btheta$ so as to be consistent with the notaion in \citet{LyuLi20}. SWN(in its parameters $\gamma$ and $\bv$) is also a homogeneous network. Therefore, results from \citet{LyuLi20} should directly apply to the case of SWN as well. However, a crucial point to be noted is that it is not even locally Lipschitz around $\| \bv_u \| = 0$. Therefore, the assumptions from \citet{LyuLi20} do not hold.

However, during gradient descent or gradient flow, if started from a finite $\| \bv_u \| > 0$, for all $u$, then during the entire trajectory, $\| \bv_u \|$ cannot go down. Therefore, the network is still locally Lipschitz along the trajectory it takes. Examining the proofs from \citet{LyuLi20}, its clear that the proof regarding monotonicity of margin and convergence rates are just dependent on the path that gradient descent/flow takes and thus the proofs hold.

However, the result regarding the limit points of $\frac{\btheta}{\| \btheta \|}$ do not hold. One of the crucial theorems the proof relies on is stated below

\begin{theorem*}
    Let $\{x_k \in \mathcal{R}^d: k \in \mathbb{N}\}$ be a sequence of feasible points of an optimization problem (P), $\{ \epsilon_k > 0: k \in \mathbb{N} \}$ and $\{ \delta_k > 0 : k \in \mathbb{N} \}$ be two sequences. $x_k$ is an $(\epsilon_k, \delta_k)$-KKT point for every k and $\epsilon_k \to 0, \delta_k \to 0$. If $x_k \to x$ as $k \to \infty$ and MFCQ holds at $x$, then $x$ is a KKT point of (P) 
\end{theorem*}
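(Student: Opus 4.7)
The strategy is the standard one for converting approximate KKT conditions into exact ones: extract multiplier sequences, use MFCQ to bound them, pass to a convergent subsequence, and take limits in the approximate conditions. Write (P) as $\min f(x)$ subject to $g_i(x) \leq 0$ for $i=1,\ldots,m$; equality constraints, if present in (P), would be handled analogously using the linear-independence portion of MFCQ. By the $(\epsilon_k,\delta_k)$-KKT hypothesis, there exist multipliers $\lambda^{(k)}_i \geq 0$ satisfying
\begin{equation*}
\Bigl\| \nabla f(x_k) + \sum_{i=1}^m \lambda^{(k)}_i \nabla g_i(x_k) \Bigr\| \leq \epsilon_k, \qquad g_i(x_k) \leq \delta_k,
\end{equation*}
together with an approximate complementary slackness condition forcing $\lambda^{(k)}_i$ to be negligible when $g_i(x_k)$ is bounded away from zero.

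The heart of the argument is showing that $(\lambda^{(k)})_k$ is bounded. Let $A(x) = \{i : g_i(x) = 0\}$ denote the active set at $x$. For $i \notin A(x)$, continuity of $g_i$ and $x_k \to x$ give $g_i(x_k) < -\delta_k$ for large $k$, so approximate complementary slackness forces $\lambda^{(k)}_i \to 0$. By MFCQ at $x$, there exist a direction $d$ and a constant $c > 0$ with $\nabla g_i(x)^\top d \leq -c$ for every $i \in A(x)$; by continuity, $\nabla g_i(x_k)^\top d \leq -c/2$ for large $k$. Taking the inner product of the approximate stationarity condition with $d$ and rearranging gives
\begin{equation*}
\tfrac{c}{2} \sum_{i \in A(x)} \lambda^{(k)}_i \;\leq\; \epsilon_k \|d\| + \|\nabla f(x_k)\|\,\|d\| + o(1),
\end{equation*}
where the $o(1)$ absorbs the vanishing contribution of the inactive-constraint multipliers. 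Since $\nabla f(x_k) \to \nabla f(x)$, the right-hand side is bounded as $k\to\infty$, so each $\lambda^{(k)}_i$ is bounded.

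Once boundedness is in hand, pass to a subsequence so that $\lambda^{(k)} \to \lambda^*$. Non-negativity is preserved in the limit, so $\lambda^*_i \geq 0$. Continuity of $\nabla f$ and $\nabla g_i$ together with $\epsilon_k \to 0$ lets me pass to the limit in the stationarity condition to obtain $\nabla f(x) + \sum_i \lambda^*_i \nabla g_i(x) = 0$. Passing to the limit in $g_i(x_k) \leq \delta_k$ yields feasibility $g_i(x) \leq 0$. Complementary slackness at $x$ holds because for every $i \notin A(x)$ we already showed $\lambda^{(k)}_i \to 0$, hence $\lambda^*_i = 0$. All KKT conditions at $x$ are thus verified.

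The main obstacle is the uniform boundedness of the multiplier sequence. Without a constraint qualification this can fail --- consider sequences of points where active-constraint gradients become nearly linearly dependent --- and it is precisely the role of MFCQ to rule this out through the strict-descent direction $d$. Everything else is a routine passage to the limit using continuity of gradients together with the sequential compactness that the boundedness provides.
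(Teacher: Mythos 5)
The paper does not prove this statement: it is quoted verbatim from \citet{LyuLi20} (where it appears as an auxiliary result on approximate KKT points) solely to explain why the KKT-convergence conclusion fails for SWN at points where the network is not locally Lipschitz, so there is no in-paper proof to compare against. Your argument is nonetheless the standard and correct one, and it matches the proof strategy in the cited reference: extract the multiplier sequence from the $(\epsilon_k,\delta_k)$-KKT conditions, use the MFCQ direction $d$ with $\nabla g_i(x)^\top d \le -c$ on the active set to get the bound $\tfrac{c}{2}\sum_{i\in A(x)}\lambda^{(k)}_i \le \epsilon_k\|d\| + \|\nabla f(x_k)\|\|d\| + o(1)$, kill the inactive multipliers via approximate complementary slackness, pass to a convergent subsequence, and take limits in each condition. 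The only caveats are cosmetic: the theorem assumes the $x_k$ are feasible, so the $g_i(x_k)\le\delta_k$ relaxation is unnecessary, and the precise form of the approximate complementary slackness condition (here $\lambda^{(k)}_i\,|g_i(x_k)|\le\delta_k$) must be the one that actually forces $\lambda^{(k)}_i\to 0$ when $g_i(x)<0$ — which is the definition used in the reference, so your argument goes through.
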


The above statement requires MFCQ to be satisfied at $x$, that was shown in \citet{LyuLi20} assuming local lipschitzness/smoothness at $x$. However, in this case, for gradient flow, as $\| \bv_u \|$ does not grow, while $| \gamma_u | \to \infty$, therefore the convergent point of $\frac{\btheta}{\| \btheta \|}$ will always have the component corresponding to $\bv_u$ as 0. Thus, the network is not locally lipschitz at $x$ and the proof that MFCQ holds is violated. Similarly, for gradient descent as well, it can't be said that $\bv_u$ has a non-zero component in $\frac{\btheta}{\| \btheta \|}$. Thus, the proof does not hold.

\section{Experiment Details}
In all the experiments, techniques for handling numerical underflow were used as described in \citet{LyuLi20}. However, the learning rate they used was of $O\left(\frac{1}{\L}\right)$, but in our case, we generally modify it to be $O\left(\frac{1}{\L^c}\right)$, where $c < 1$.

\subsection{\texttt{Lin-Sep}}
The learning rate used was $\frac{k(t)}{\L^{0.97}}$, so that it speeds up at the beginning of training, but slows down as loss approaches $e^{-300}$. The constant $k(t)$ was initialized at $0.01$, and was increased by a factor of $1.1$ every time loss went down and decreased  by a factor of $1.1$ every time loss went up after a gradient step. Its value was capped at $0.01$ for EWN and SWN.

\begin{figure}[t]
\floatconts{SWN:demons_lin_sep}{\caption{\textbf{Demonstration of Results for SWN in \texttt{Lin-Sep} experiment: } (a) Evolution of $\| \w_u \|$ (b) Cosine between weights and gradients for weights 5, 6 and 8. (c) Weight and gradient norms for weights 5, 6 and 8.}}  
% The three graphs are plotted at loss values of $e^{-200}, e^{-250}$ and $e^{-300}$ respectively. At each loss value, for the 3 weights, $ \log \| \nabla_{\w_u} \L\| - \log \| \w_u \|$ is approximately same.}}
{
% \subfigure[Dataset]{
%   \includegraphics[width=0.4\textwidth]{plots/lin_sep_dataset/SWN/Assumptions/dataset.pdf}
% }
\subfigure{
  \includegraphics[width=0.3\textwidth]{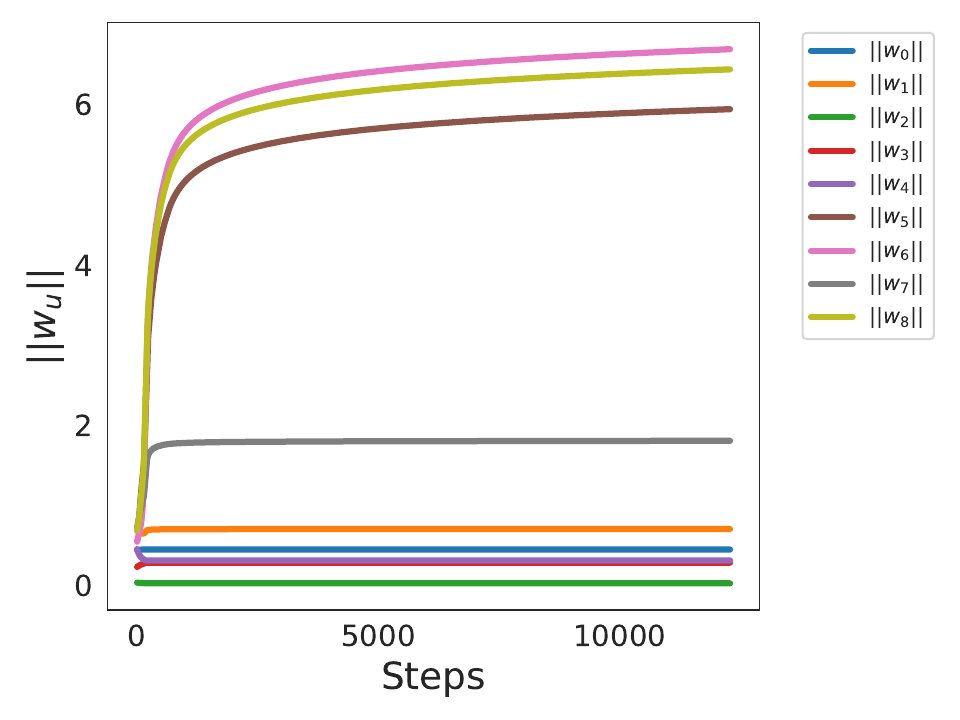}
}
% \subfigure{
%   \includegraphics[width=0.4\textwidth]{plots/lin_sep_dataset/SWN/Theorems/l_tilde_grad.pdf}
% }
\subfigure{
  \includegraphics[width=0.3\textwidth]{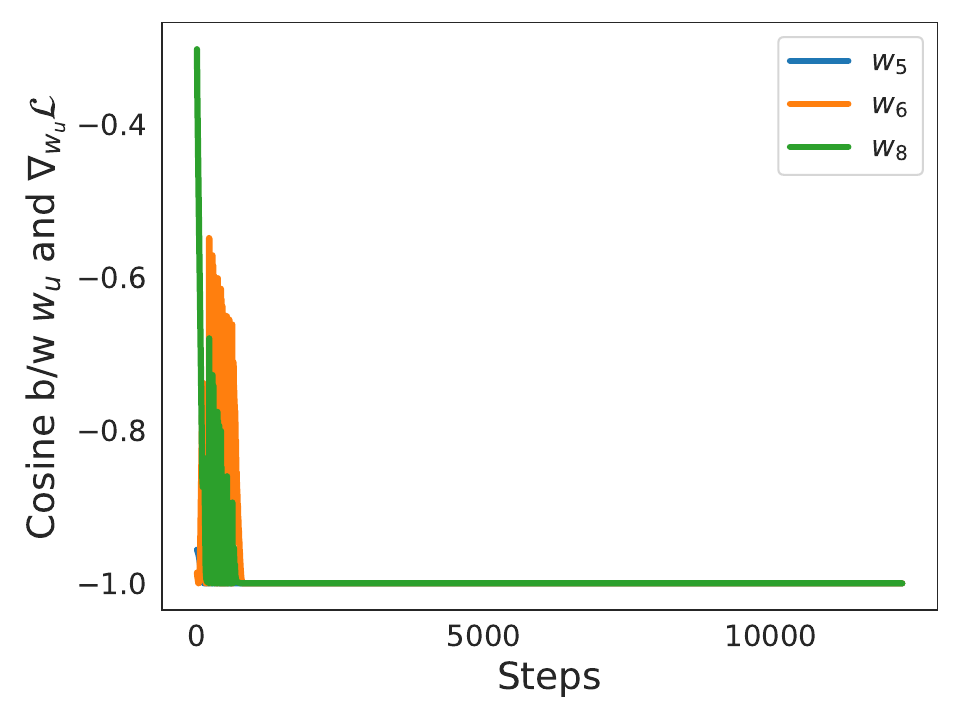}
}
% \subfigure{
%   \includegraphics[width=0.3\textwidth]{plots/lin_sep_dataset/SWN/Theorems/weight_grad_ratios_200.pdf}
% }
% \subfigure{
%   \includegraphics[width=0.3\textwidth]{plots/lin_sep_dataset/SWN/Theorems/weight_grad_ratios_250.pdf}
% }
\subfigure{
  \includegraphics[width=0.3\textwidth]{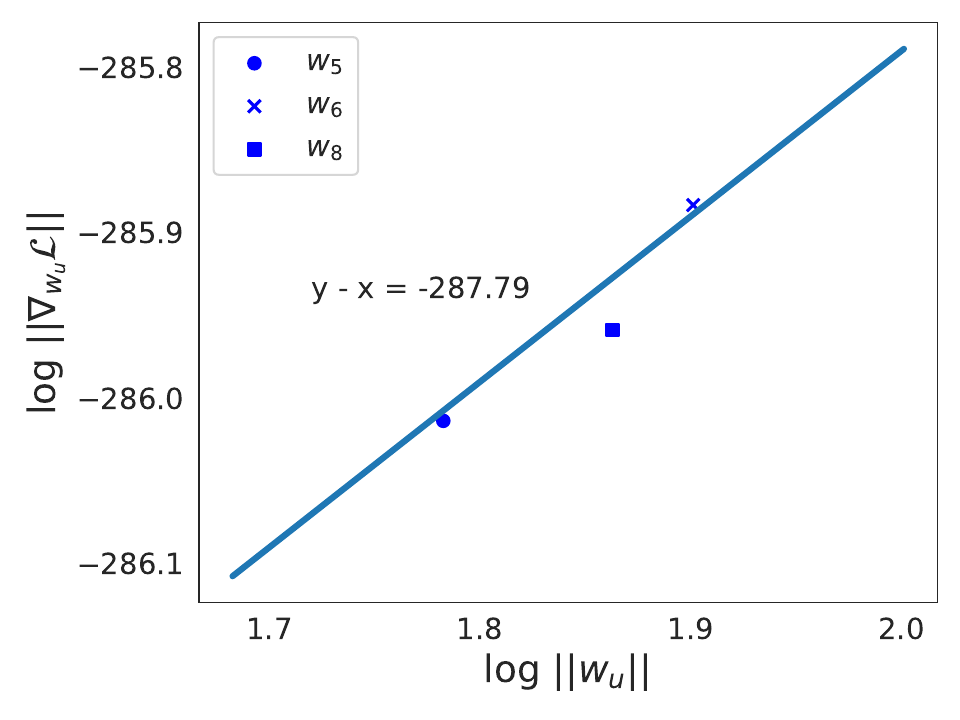}
}
}
\end{figure}

\begin{figure}[t]
\floatconts{EWN:demons_XOR}{\caption{\textbf{Demonstration of Results for EWN in \texttt{XOR} experiment with ReLU-square activation: } (a) Evolution of $\| \w_u \|$ (b) Cosine between weights and gradients for weights 0, 1, 13 and 17. (c) Weight and gradient norms for weights 0, 1, 13 and 17.}}  
% The three graphs are plotted at loss values of $e^{-200}, e^{-250}$ and $e^{-300}$ respectively. At each loss value, for the 3 weights, $ \log \| \nabla_{\w_u} \L\| - \log \| \w_u \|$ is approximately same.}}
{
% \subfigure[Dataset]{
%   \includegraphics[width=0.4\textwidth]{plots/lin_sep_dataset/SWN/Assumptions/dataset.pdf}
% }
\subfigure{
  \includegraphics[width=0.3\textwidth]{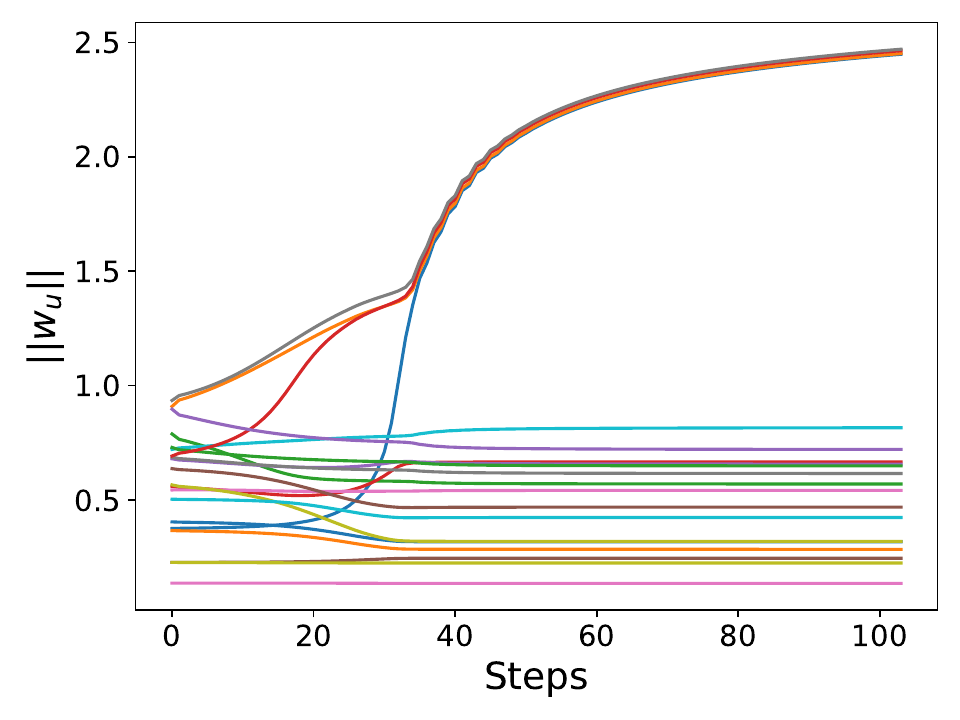}
}
% \subfigure{
%   \includegraphics[width=0.4\textwidth]{plots/lin_sep_dataset/SWN/Theorems/l_tilde_grad.pdf}
% }
\subfigure{
  \includegraphics[width=0.3\textwidth]{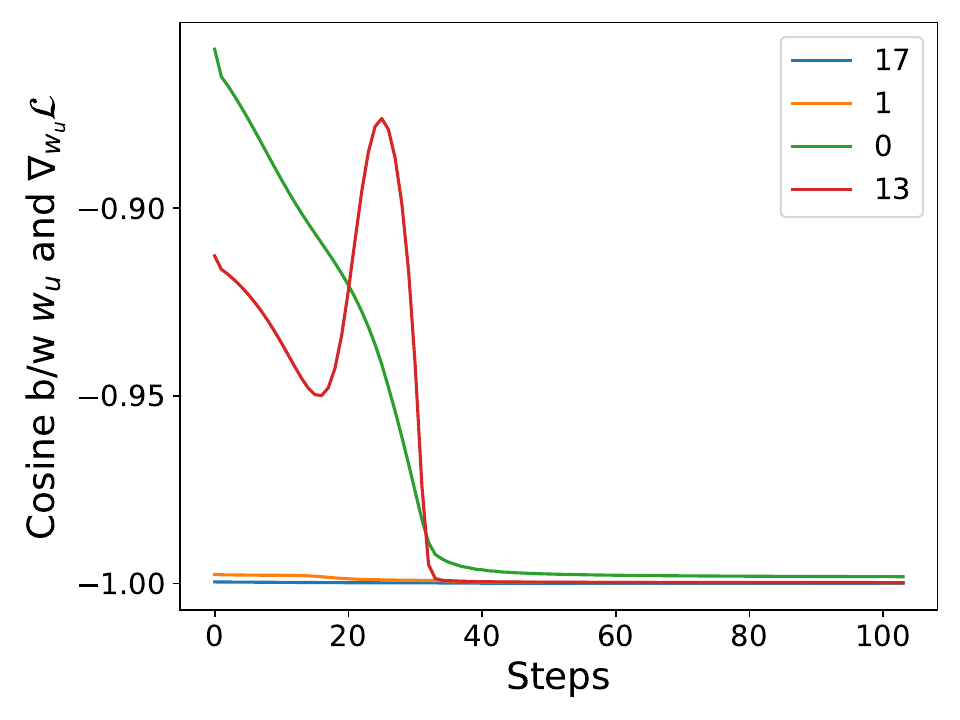}
}
% \subfigure{
%   \includegraphics[width=0.3\textwidth]{plots/lin_sep_dataset/SWN/Theorems/weight_grad_ratios_200.pdf}
% }
% \subfigure{
%   \includegraphics[width=0.3\textwidth]{plots/lin_sep_dataset/SWN/Theorems/weight_grad_ratios_250.pdf}
% }
\subfigure{
  \includegraphics[width=0.3\textwidth]{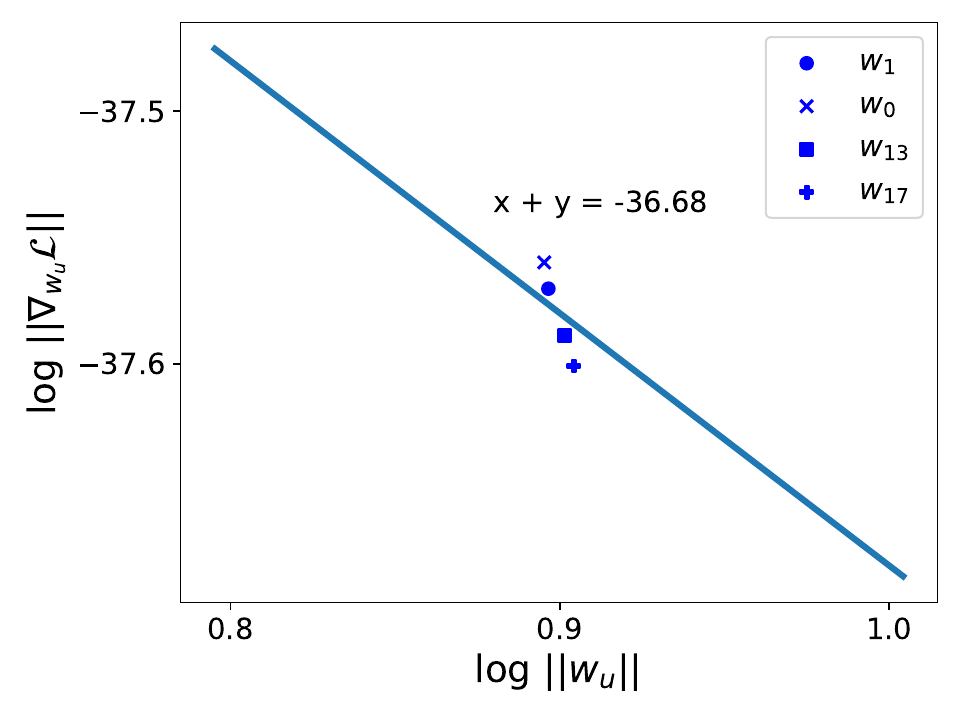}
}
}
\end{figure}

\begin{figure}[t]
\floatconts{SWN:demons_XOR}{\caption{\textbf{Demonstration of Results for SWN in \texttt{XOR} experiment with ReLU-square activation: } (a) Evolution of $\| \w_u \|$ (b) Cosine between weights and gradients for weights 0, 1, 13 and 17. (c) Weight and gradient norms for weights 0, 1, 13 and 17.}}  
% The three graphs are plotted at loss values of $e^{-200}, e^{-250}$ and $e^{-300}$ respectively. At each loss value, for the 3 weights, $ \log \| \nabla_{\w_u} \L\| - \log \| \w_u \|$ is approximately same.}}
{
% \subfigure[Dataset]{
%   \includegraphics[width=0.4\textwidth]{plots/lin_sep_dataset/SWN/Assumptions/dataset.pdf}
% }
\subfigure{
  \includegraphics[width=0.3\textwidth]{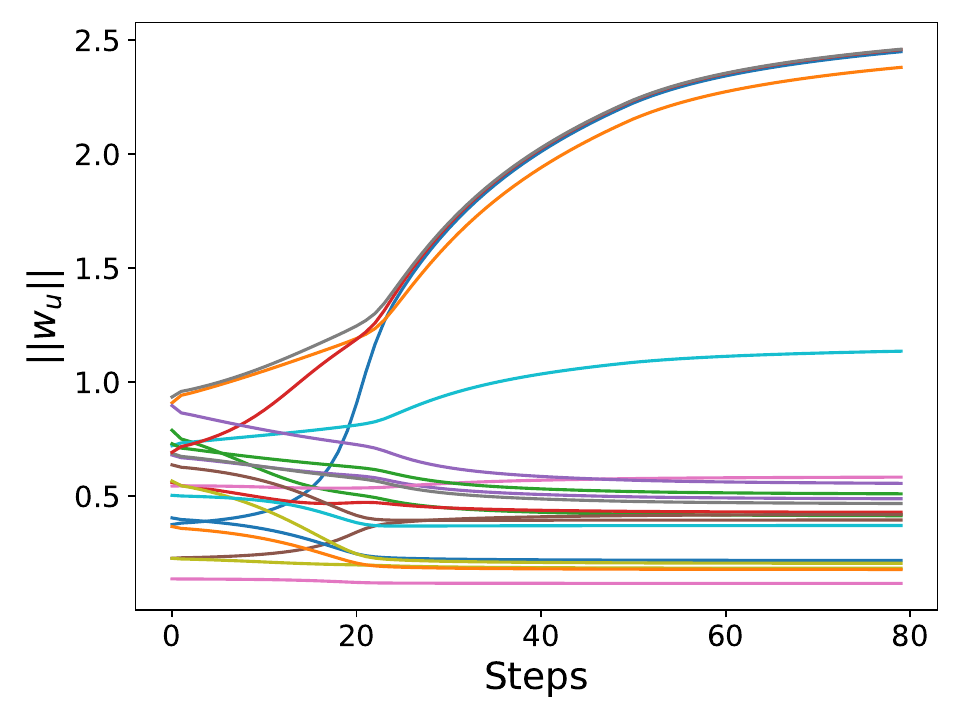}
}
% \subfigure{
%   \includegraphics[width=0.4\textwidth]{plots/lin_sep_dataset/SWN/Theorems/l_tilde_grad.pdf}
% }
\subfigure{
  \includegraphics[width=0.3\textwidth]{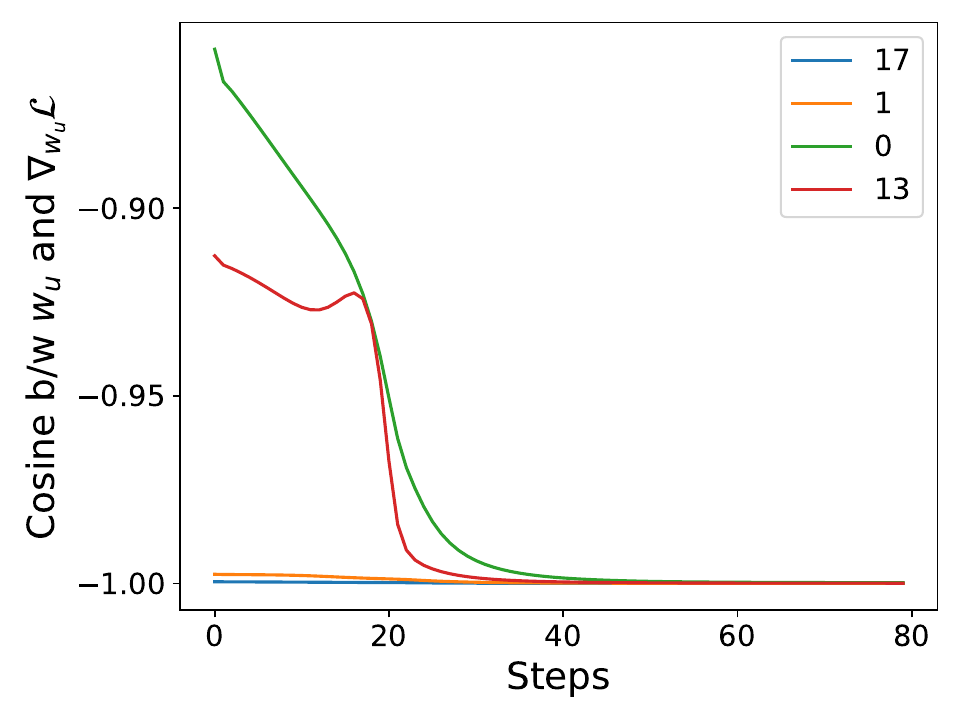}
}
% \subfigure{
%   \includegraphics[width=0.3\textwidth]{plots/lin_sep_dataset/SWN/Theorems/weight_grad_ratios_200.pdf}
% }
% \subfigure{
%   \includegraphics[width=0.3\textwidth]{plots/lin_sep_dataset/SWN/Theorems/weight_grad_ratios_250.pdf}
% }
\subfigure{
  \includegraphics[width=0.3\textwidth]{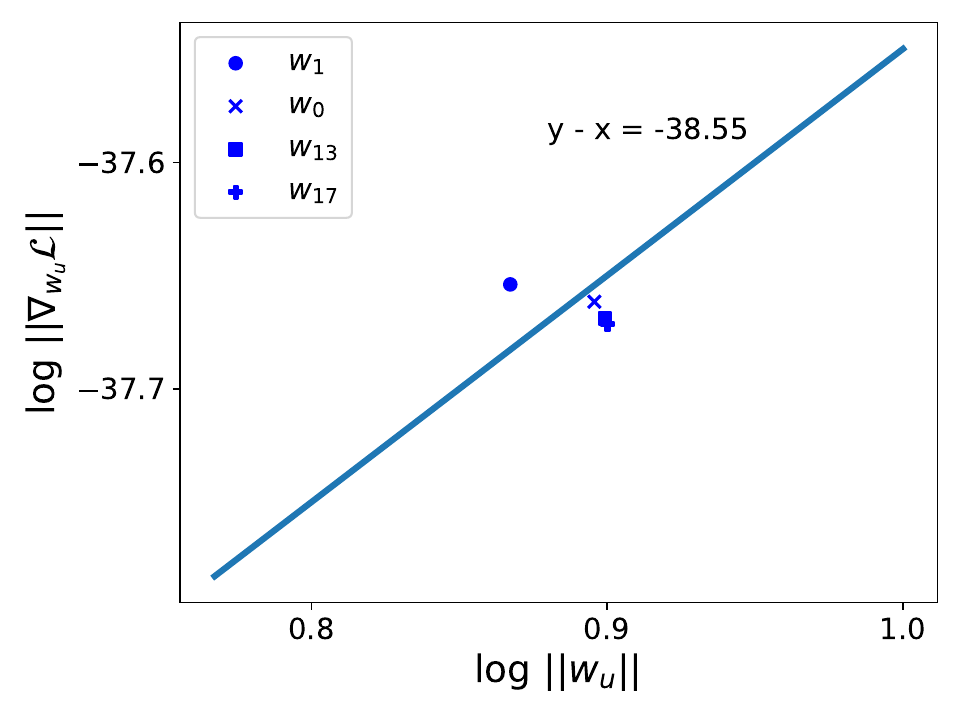}
}
}
\end{figure}

\begin{figure}[t]
\floatconts{EWN:demons_MNIST}{\caption{\textbf{Demonstration of Results for EWN on MNIST dataset with 2-class classification: } (a) Evolution of $\| \w_u \|$ (b) Cosine between weights and gradients for weights 96 and 105. (c) Weight and gradient norms for weights 96 and 105.}}  
% The three graphs are plotted at loss values of $e^{-200}, e^{-250}$ and $e^{-300}$ respectively. At each loss value, for the 3 weights, $ \log \| \nabla_{\w_u} \L\| - \log \| \w_u \|$ is approximately same.}}
{
% \subfigure[Dataset]{
%   \includegraphics[width=0.4\textwidth]{plots/lin_sep_dataset/SWN/Assumptions/dataset.pdf}
% }
\subfigure{
  \includegraphics[width=0.3\textwidth]{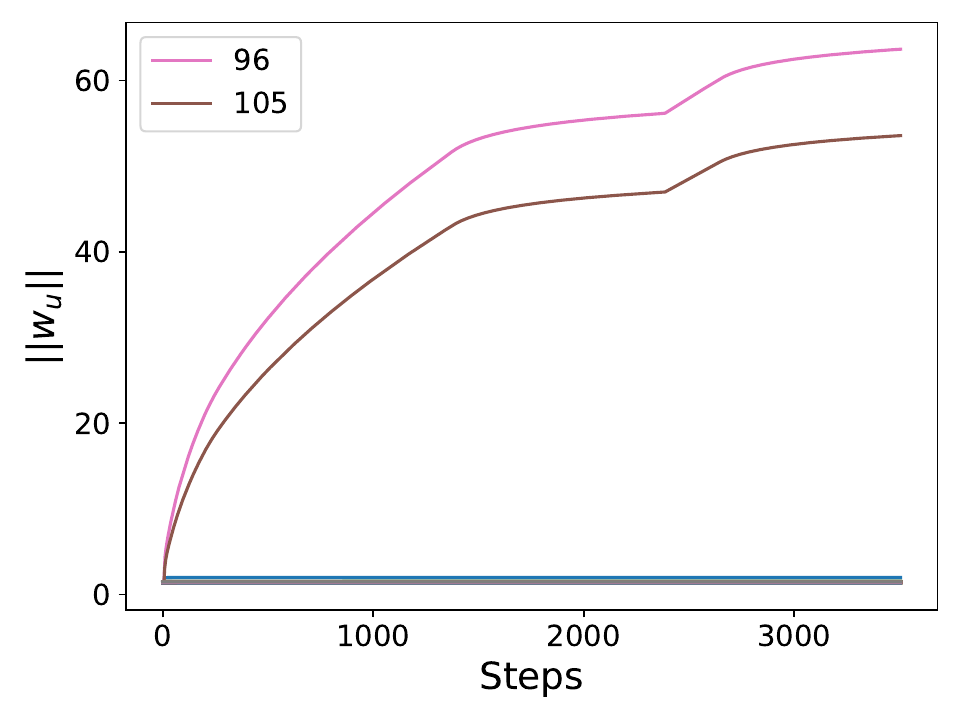}
}
% \subfigure{
%   \includegraphics[width=0.4\textwidth]{plots/lin_sep_dataset/SWN/Theorems/l_tilde_grad.pdf}
% }
\subfigure{
  \includegraphics[width=0.3\textwidth]{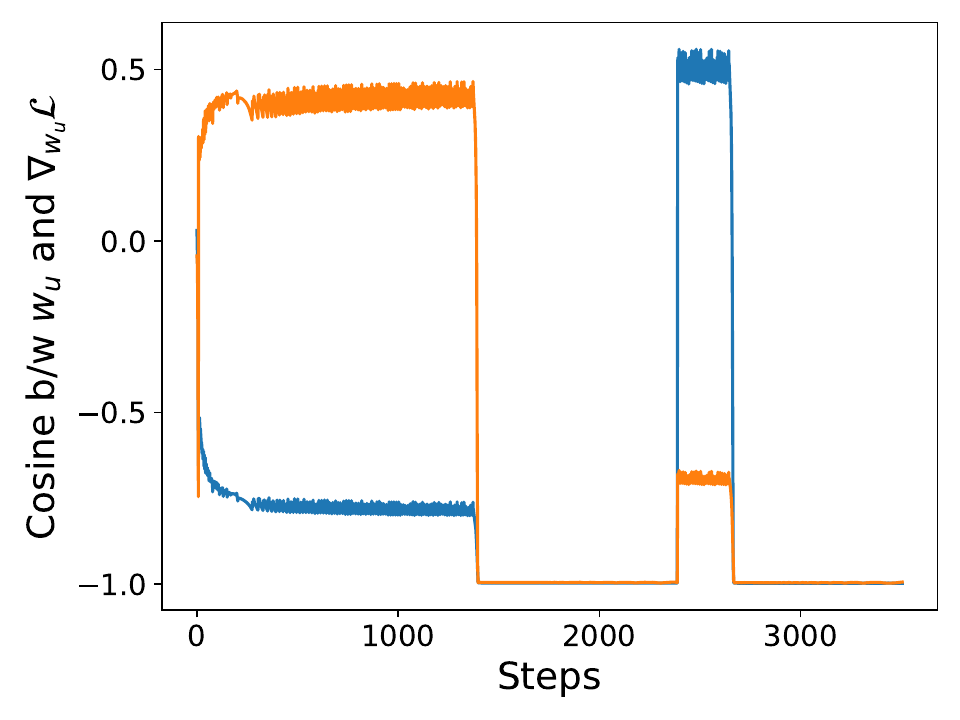}
}
% \subfigure{
%   \includegraphics[width=0.3\textwidth]{plots/lin_sep_dataset/SWN/Theorems/weight_grad_ratios_200.pdf}
% }
% \subfigure{
%   \includegraphics[width=0.3\textwidth]{plots/lin_sep_dataset/SWN/Theorems/weight_grad_ratios_250.pdf}
% }
\subfigure{
  \includegraphics[width=0.3\textwidth]{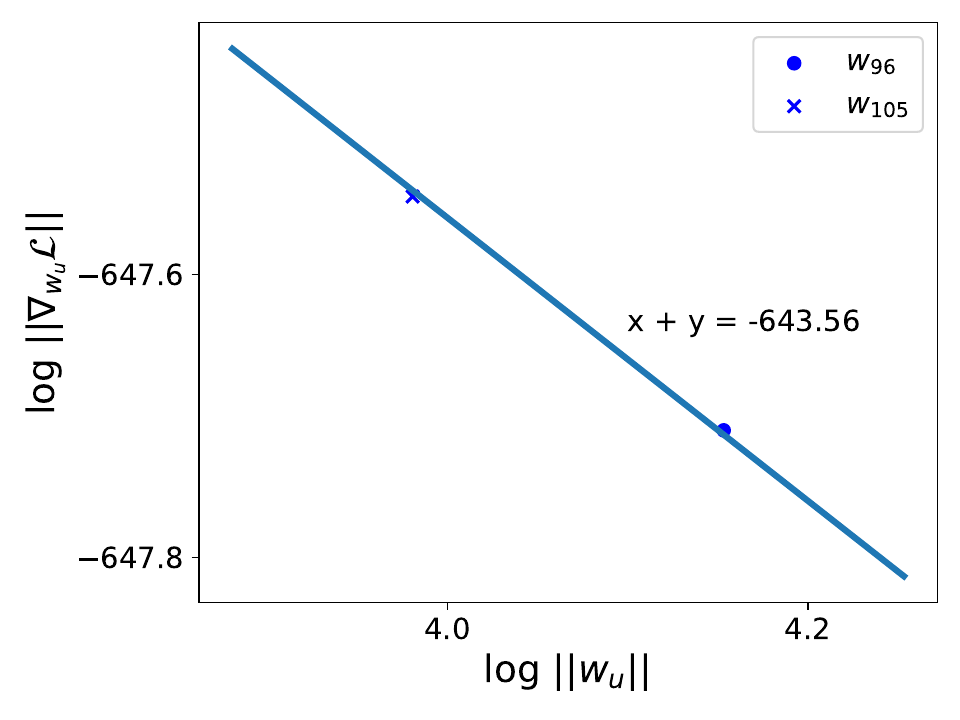}
}
}
\end{figure}

\subsection{\texttt{Simple-Traj}}
The learning rate used was $\frac{k(t)}{\L^{0.9}}$, so that it speeds up at the beginning of training, but slows down as loss approaches $e^{-50}$. The constant $k(t)$ was initialized at $0.01$, and was increased by a factor of $1.1$ every time loss went down and decreased  by a factor of $1.1$ every time loss went up after a gradient step. Its value was capped at $0.1$ for EWN and Unnorm.

\subsection{\texttt{XOR}}
The learning rate used was $\frac{k(t)}{\L^{0.93}}$ for SWN and Unnorm, while $\frac{k(t)}{\L^{0.8}}$ for EWN, so that it speeds up at the beginning of training, but slows down as loss approaches $e^{-50}$. The constant $k(t)$ was initialized at $0.01$, and was increased by a factor of $1.1$ every time loss went down and decreased  by a factor of $1.1$ every time loss went up after a gradient step. Its value was capped at $0.1$ for EWN and Unnorm and $0.01$ for SWN.

\subsection{Convergence rate experiment}
For all SWN, EWN and Unnorm, the learning rate was constant $\eta = 0.001$ and they were trained for 5000 steps. All the networks were explicitly initialized to the same point in function space.

\begin{figure}[t]
\floatconts{conv:MNIST:2356}{\caption{Convergence rate of EWN, SWN and Unnorm on the MNIST dataset for seed - 2356 at different loss values}}  
% The three graphs are plotted at loss values of $e^{-200}, e^{-250}$ and $e^{-300}$ respectively. At each loss value, for the 3 weights, $ \log \| \nabla_{\w_u} \L\| - \log \| \w_u \|$ is approximately same.}}
{
% \subfigure[Dataset]{
%   \includegraphics[width=0.4\textwidth]{plots/lin_sep_dataset/SWN/Assumptions/dataset.pdf}
% }
\subfigure[$\L = e^{-10}$]{
  \includegraphics[width=0.3\textwidth]{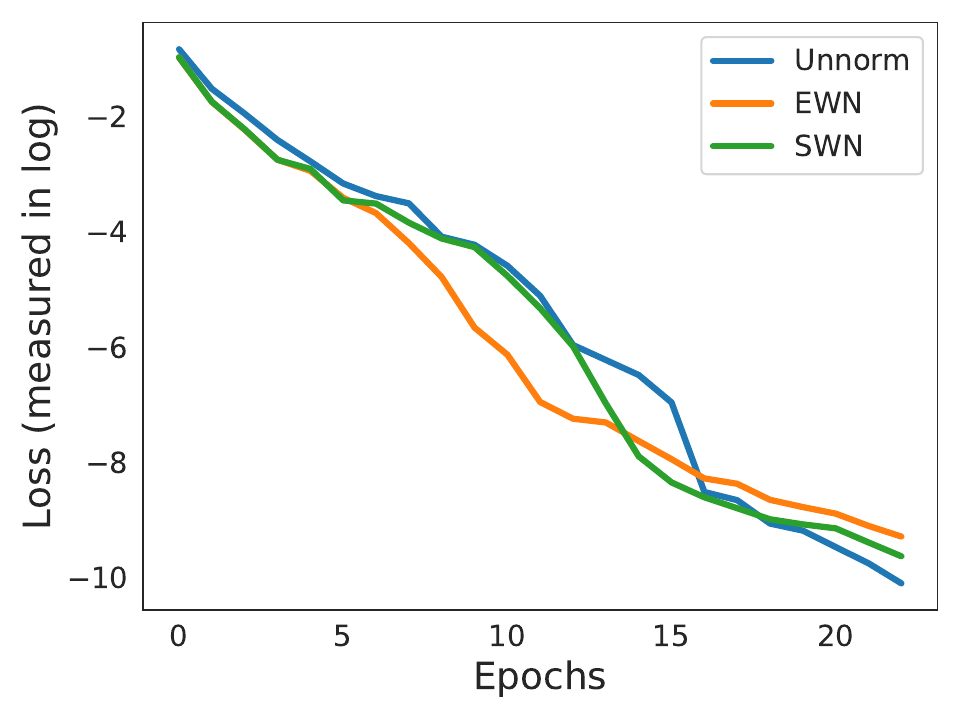}
}
% \subfigure{
%   \includegraphics[width=0.4\textwidth]{plots/lin_sep_dataset/SWN/Theorems/l_tilde_grad.pdf}
% }
\subfigure[$\L = e^{-100}$]{
  \includegraphics[width=0.3\textwidth]{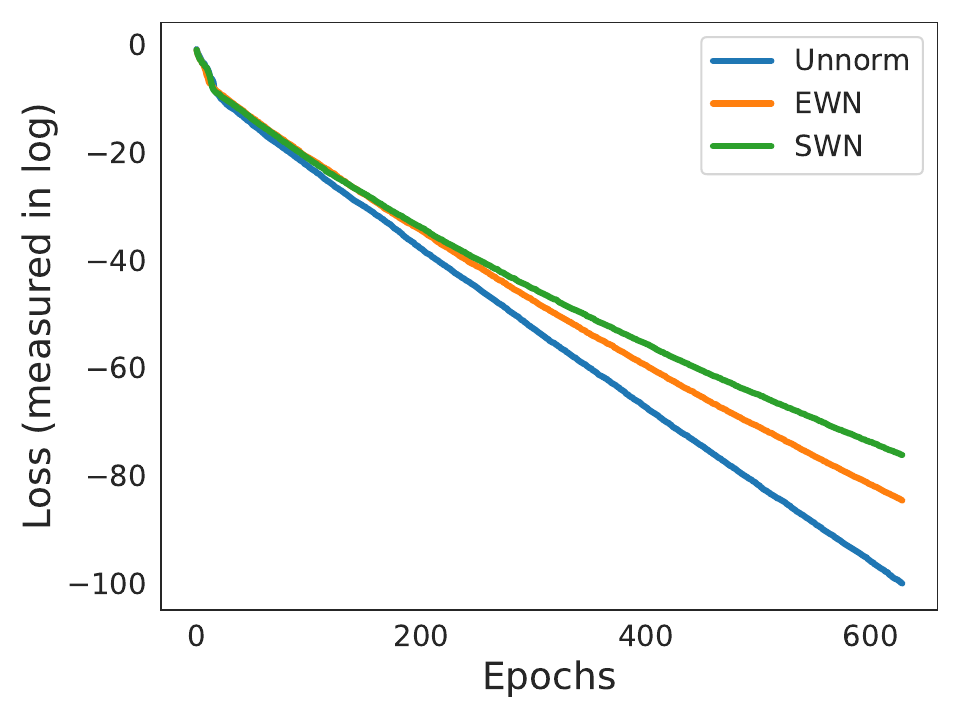}
}
% \subfigure{
%   \includegraphics[width=0.3\textwidth]{plots/lin_sep_dataset/SWN/Theorems/weight_grad_ratios_200.pdf}
% }
% \subfigure{
%   \includegraphics[width=0.3\textwidth]{plots/lin_sep_dataset/SWN/Theorems/weight_grad_ratios_250.pdf}
% }
\subfigure[$\L = e^{-300}$]{
  \includegraphics[width=0.3\textwidth]{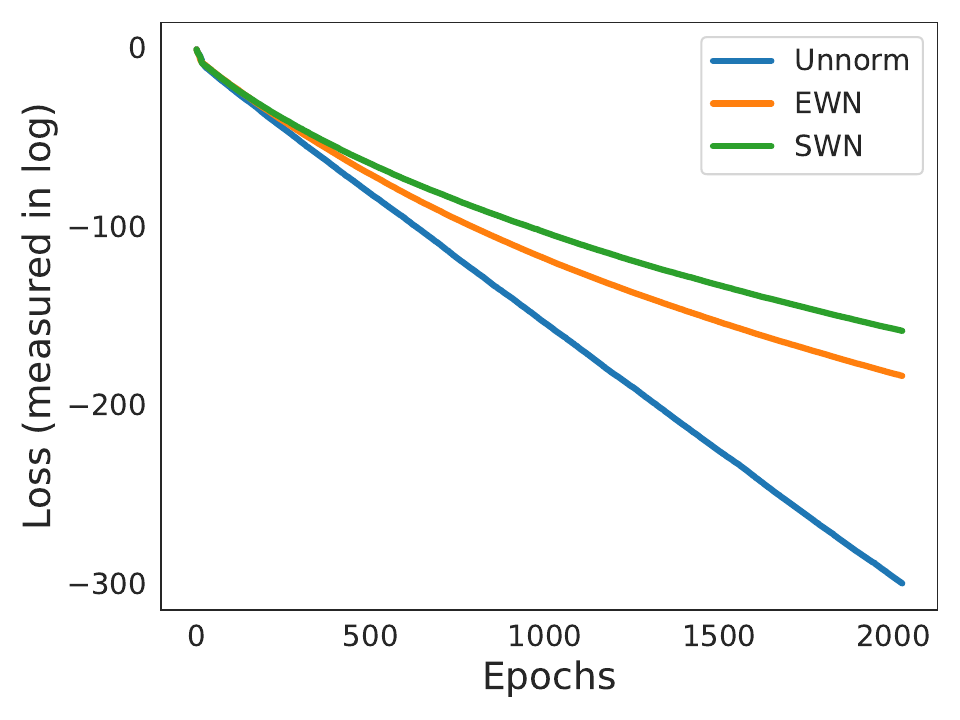}
}
}
\end{figure}

% \begin{figure}[t]
% \centering
%  \begin{subfigure}{0.3\textwidth}
% \centering
%   \includegraphics[width=\textwidth]{plots/MNIST_convergence/MNIST_conv_rate_-10_2356.pdf}
%   \caption{$\L = e^{-10}$}
% \end{subfigure}
% \begin{subfigure}{0.3\textwidth}
% \centering
%   \includegraphics[width=\textwidth]{plots/MNIST_convergence/MNIST_conv_rate_-100_2356.pdf}
%   \caption{$\L = e^{-100}$}
% \end{subfigure}
% \begin{subfigure}{0.3\textwidth}
% \centering
%   \includegraphics[width=\textwidth]{plots/MNIST_convergence/MNIST_conv_rate_-300_2356.pdf}
%   \caption{$\L = e^{-300}$}
% \end{subfigure}
% \caption{Convergence rate of EWN, SWN and Unnorm on the MNIST dataset for seed - 2356 at different loss values}
% \label{conv:MNIST:2356}
% \end{figure}

\begin{figure}[t]
\floatconts{conv:MNIST:3576}{\caption{Convergence rate of EWN, SWN and Unnorm on the MNIST dataset for seed - 3576 at different loss values}}  
% The three graphs are plotted at loss values of $e^{-200}, e^{-250}$ and $e^{-300}$ respectively. At each loss value, for the 3 weights, $ \log \| \nabla_{\w_u} \L\| - \log \| \w_u \|$ is approximately same.}}
{
% \subfigure[Dataset]{
%   \includegraphics[width=0.4\textwidth]{plots/lin_sep_dataset/SWN/Assumptions/dataset.pdf}
% }
\subfigure[$\L = e^{-10}$]{
  \includegraphics[width=0.3\textwidth]{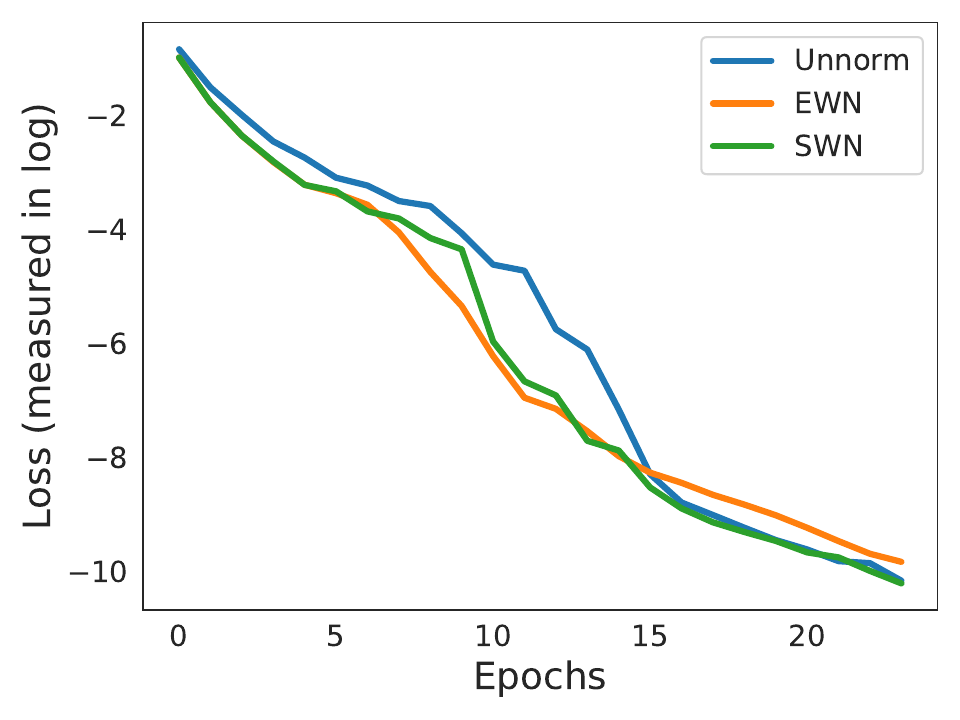}
}
% \subfigure{
%   \includegraphics[width=0.4\textwidth]{plots/lin_sep_dataset/SWN/Theorems/l_tilde_grad.pdf}
% }
\subfigure[$\L = e^{-100}$]{
  \includegraphics[width=0.3\textwidth]{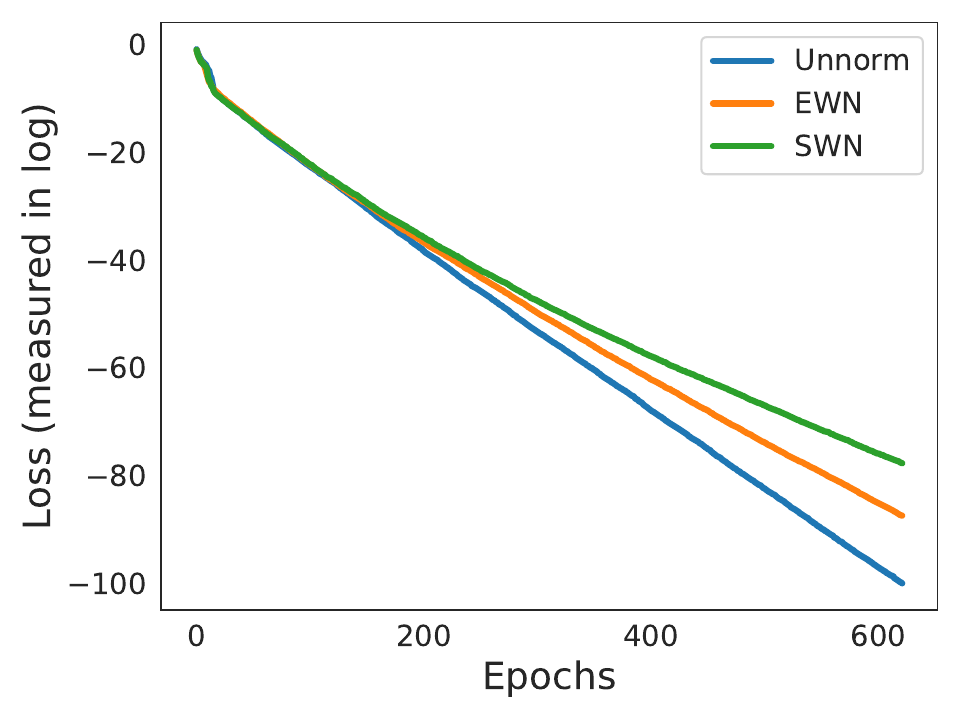}
}
% \subfigure{
%   \includegraphics[width=0.3\textwidth]{plots/lin_sep_dataset/SWN/Theorems/weight_grad_ratios_200.pdf}
% }
% \subfigure{
%   \includegraphics[width=0.3\textwidth]{plots/lin_sep_dataset/SWN/Theorems/weight_grad_ratios_250.pdf}
% }
\subfigure[$\L = e^{-300}$]{
  \includegraphics[width=0.3\textwidth]{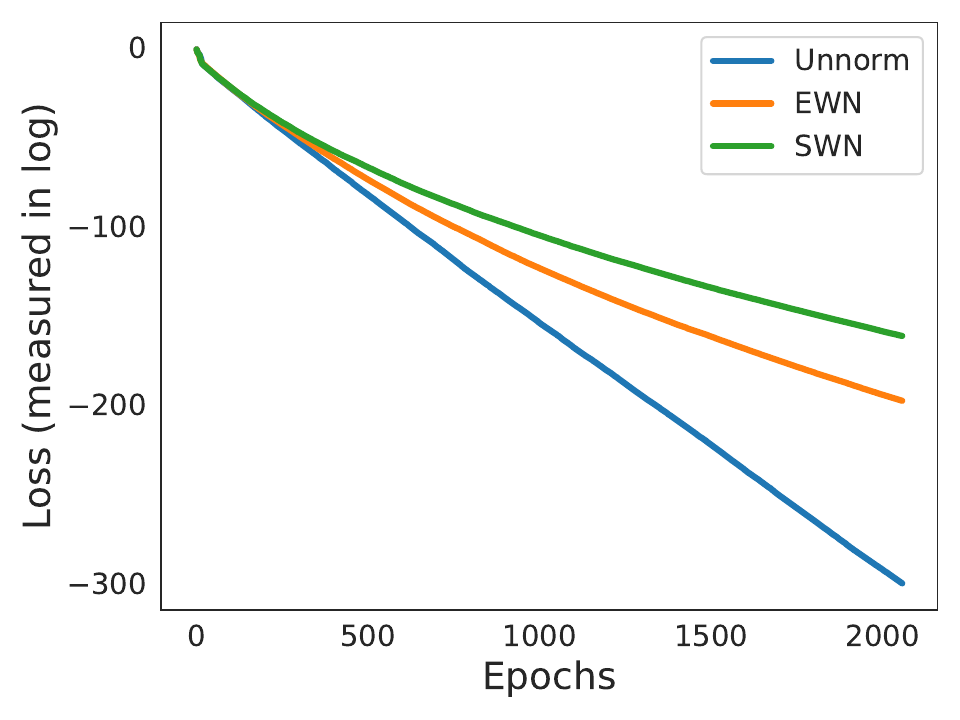}
}
}
\end{figure}

% \begin{figure}[t]
% \centering
%  \begin{subfigure}{0.3\textwidth}
% \centering
%   \includegraphics[width=\textwidth]{plots/MNIST_convergence/MNIST_conv_rate_-10_3576.pdf}
%   \caption{$\L = e^{-10}$}
% \end{subfigure}
% \begin{subfigure}{0.3\textwidth}
% \centering
%   \includegraphics[width=\textwidth]{plots/MNIST_convergence/MNIST_conv_rate_-100_3576.pdf}
%   \caption{$\L = e^{-100}$}
% \end{subfigure}
% \begin{subfigure}{0.3\textwidth}
% \centering
%   \includegraphics[width=\textwidth]{plots/MNIST_convergence/MNIST_conv_rate_-300_3576.pdf}
%   \caption{$\L = e^{-300}$}
% \end{subfigure}
% \caption{Convergence rate of EWN, SWN and Unnorm on the MNIST dataset for seed - 3576 at different loss values}
% \label{conv:MNIST:3576}
% \end{figure}

\subsection{Pruning Experiments}
The learning rate used was $\frac{k(t)}{\L}$. The constant $k(t)$ was initialized at $0.01$, and was increased by a factor of $1.1$ every time loss went down and decreased  by a factor of $1.1$ every time loss went up after an epoch.
% \section{Future work}
% There are multiple research directions that can be pursued following this work

% \begin{itemize}
%     \item As we demonstrated that exponential weight normalization is better suited for pruning, combining it with existing pruning methods can help in designing low-memory footprint networks.
%     \item We relied on the assumption that the network is smooth, that doesn't apply to ReLU non-linearity. Can this assumption be dropped?
%     \item Multiple trajectory based assumptions were used in the paper to derive the conclusions. Understanding when these assumptions hold can give a better idea of what kind of architectures the theory works for.
%     \item Understanding how the sparsity in weights is affected by the dataset and the network architecture is also an interesting research direction.
% \end{itemize}

\section{Demonstration of Theorems on various datasets} \label{app:swn}
In this section, we demonstrate Theorem \ref{thm:weightconv} and \ref{thm:weightnorm} on various datasets - \texttt{Lin-Sep}, \texttt{XOR} and MNIST (2-class classification).

\subsection{\texttt{Lin-Sep}}
We demonstrate Theorem \ref{thm:weightconv} and Theorem \ref{thm:weightnorm} for EWN and SWN on a linearly separable dataset (\texttt{Lin-Sep}) in Figure \ref{demons_lin_sep} and \ref{SWN:demons_lin_sep} respectively. As can be seen in Figure \ref{SWN:demons_lin_sep}, for weights 5, 6 and 8, whose norms keep on growing, weights and gradients eventually become oppositely aligned, and their norms are directly proportional to each other.

\subsection{\texttt{XOR}}
In this experiment, we train a 2-layer network with 20 hidden neurons and ReLU-square activation on XOR dataset, till a loss value of $e^{-40}$. We demonstrate Theorems \ref{thm:weightconv} and \ref{thm:weightnorm} for EWN and SWN in Figure \ref{EWN:demons_XOR} and Figure \ref{SWN:demons_XOR} respectively. 

As can be seen in Figure \ref{EWN:demons_XOR}, for weights 0, 1, 13 and 17, whose norms keep on growing, weights and gradients eventually become oppositely aligned, and their norms are inversely proportional to each other.

Similarly, in Figure \ref{SWN:demons_XOR}, for weights 0, 1, 13 and 17, whose norms keep on growing, weights and gradients eventually become oppositely aligned, and their norms are directly proportional to each other.

\subsection{MNIST}
In this experiment, we train a 2-layer network with 128 hidden neurons and ReLU-square activation on MNIST dataset with 2 classes - $\{0,1\}$. Since even after considering just 2 classes, this dataset is huge, therefore training takes longer. Therefore, we only consider exponential weight normalized network in this case. 

We demonstrate Theorems \ref{thm:weightconv} and \ref{thm:weightnorm} for MNIST dataset for EWN in Figure \ref{EWN:demons_MNIST}. As can be seen, for weights 96 and 105, whose norms keep on growing, weights and gradients eventually become oppositely aligned, and their norms are inversely proportional to each other.

% \section{MNIST Pruning Experiments}

% \begin{figure}[t]
% \floatconts{norm_dist_MNIST}{\caption{Norm of the weight vector vs gradient descent steps, for various nodes in the first layer, when trained on MNIST.}}
% {
% \subfigure[EWN]{
%   \includegraphics[width=0.4\textwidth]{plots/pruning/EWN_w_norm.png}
% }
% \subfigure[SWN]{
%   \includegraphics[width=0.4\textwidth]{plots/pruning/SWN_w_norm.png}
% }
% \subfigure[Unnorm]{
%   \includegraphics[width=0.4\textwidth]{plots/pruning/NWN_w_norm.png}
% }
% }
% \end{figure}

\section{Convergence rate plots for pruning experiment} \label{pruning:conv}
In this section, we provide convergence rate plots for the pruning experiments.

% \subsection{CIFAR pruning}
% In the case of CIFAR-10 dataset, training till a loss value of $e^{-300}$ takes around 4 hrs on GeForce GTX 1080Ti GPU, while it takes almost 1 hr to reach the loss value of $e^{-10}$. This shows that although the loss value seems very low, but it can be achieved with moderate computation power.

% The convergence rate for 2 different seeds at various loss levels are shown in Figure \ref{conv:CIFAR:3576} and \ref{conv:CIFAR:4512}. As can be seen, at different loss levels, the convergence rate of EWN, SWN and unnorm are comparable.

% \subsection{MNIST pruning}
%For MNIST dataset, training is slower as there is only one layer to optimize effectively. It takes around 1.5 days to reach a value of $e^{-300}$ with GeForce GTX 1080Ti GPU.

The convergence rate for 2 different seeds at various loss levels are shown in Figure \ref{conv:MNIST:2356} and \ref{conv:MNIST:3576}. As can be seen, initially the convergence rate of all the normalizations are comparable. But at extremely low loss values, Unnorm becomes slightly faster as compared to SWN or EWN. Note that the results regarding asymptotic convergence rate do not apply in this case, as we are training at extremely high learning rates of $O\left(\frac{1}{\L}\right)$.

\section{Verification of Assumptions for various datasets} \label{assum:verif}
In this section, we verify the assumptions on three datasets - \texttt{Lin-Sep}, \texttt{XOR} and MNIST (2-class classification). Note that in Figures \ref{assum_lin_sep}, \ref{SWN:assum_lin_sep}, \ref{EWN:assum_XOR}, \ref{SWN:assum_XOR} and \ref{EWN:assum_MNIST}, plots demonstrating the components of unit gradient vector, each line corresponds to a single parameter of the network, while in the plots demonstrating the evolution of $\eta(t) \| \w_u(t) \| \| \nabla_{\w_u} \L(\w(t)) \|$, each line corresponds to a neuron of the network.

\subsection{\texttt{Lin-sep}}
We verify assumptions (B1)-(B3) for the \texttt{Lin-Sep} experiment for EWN and SWN in Figure \ref{assum_lin_sep} and Figure \ref{SWN:assum_lin_sep} respectively. As can be seen, for both the cases, the components of unit gradient vector become constant as training proceeds. Another thing to note, is that even for an aggressive learning rate schedule of the form $\frac{1}{\L^{0.97}}$, $\eta(t) \| \w_u(t) \| \| \nabla_{\w_u} \L(\w(t)) \|$ still goes down to 0.

\subsection{\texttt{XOR}}
In this experiment, we train a 2-layer network with 20 hidden neurons and ReLU-square activation on XOR dataset, till a loss value of $e^{-40}$. We verify assumptions (B1)-(B3) for the \texttt{XOR} experiment for EWN and SWN in Figure \ref{EWN:assum_XOR} and Figure \ref{SWN:assum_XOR} respectively. As can be seen, for both the cases, the components of unit gradient vector become constant as training proceeds. Another thing to note, is that even for an aggressive learning rate schedule of the form $\frac{1}{\L^{0.93}}$, $\eta(t) \| \w_u(t) \| \| \nabla_{\w_u} \L(\w(t)) \|$ still goes down to 0.

\subsection{MNIST}
In this experiment, we train a 2-layer network with 128 hidden neurons and ReLU-square activation on MNIST dataset with 2 classes - $\{0,1\}$. Since even after considering just 2 classes, this dataset is huge, therefore training takes longer. Therefore, we only consider exponential weight normalized network in this case. 

We verify assumptions (B1)-(B3) for MNIST dataset for EWN in Figure \ref{EWN:assum_MNIST}. As can be seen, the components of unit gradient vector become constant as training proceeds. Another thing to note, is that even for an aggressive learning rate schedule of the form $\frac{1}{\L^{0.97}}$, $\eta(t) \| \w_u(t) \| \| \nabla_{\w_u} \L(\w(t)) \|$ still goes down to 0.

\begin{figure}[t]
\floatconts{assum_lin_sep}{\caption{\textbf{Verification of Assumptions for EWN in \texttt{Lin-Sep} experiment: } (a) Evolution of $\frac{\nabla_{\w} \L(t)}{\| \nabla_{\w} \L(t) \|}$ (b) Evolution of $\eta(t) \| \w_u(t) \| \| \nabla_{\w_u} \L(t) \|$}}  
% The three graphs are plotted at loss values of $e^{-200}, e^{-250}$ and $e^{-300}$ respectively. At each loss value, for the 3 weights, $ \log \| \nabla_{\w_u} \L\| - \log \| \w_u \|$ is approximately same.}}
{
% \subfigure[Dataset]{
%   \includegraphics[width=0.4\textwidth]{plots/lin_sep_dataset/SWN/Assumptions/dataset.pdf}
% }
% \subfigure[$\L = e^{-10}$]{
%   \includegraphics[width=0.3\textwidth]{plots/MNIST_convergence/MNIST_conv_rate_-10_2356.pdf}
% }
% \subfigure{
%   \includegraphics[width=0.4\textwidth]{plots/lin_sep_dataset/SWN/Theorems/l_tilde_grad.pdf}
% }
\subfigure{
  \includegraphics[width=0.4\textwidth]{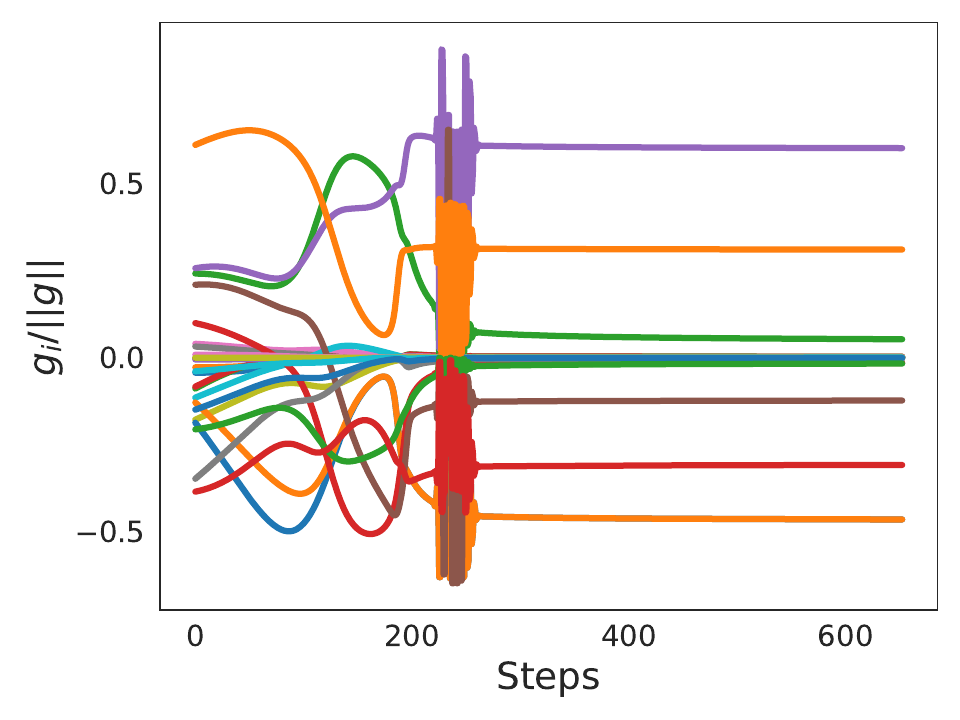}
}
% \subfigure{
%   \includegraphics[width=0.3\textwidth]{plots/lin_sep_dataset/SWN/Theorems/weight_grad_ratios_200.pdf}
% }
% \subfigure{
%   \includegraphics[width=0.3\textwidth]{plots/lin_sep_dataset/SWN/Theorems/weight_grad_ratios_250.pdf}
% }
\subfigure{
  \includegraphics[width=0.4\textwidth]{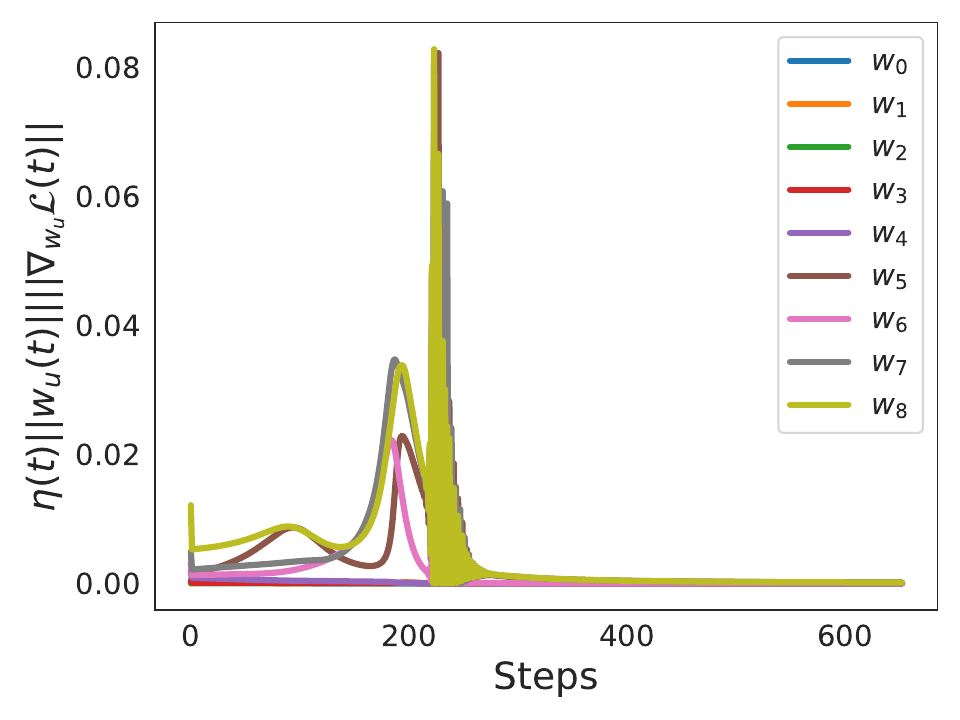}
}
}
\end{figure}

% \begin{figure}[t]
% % \begin{subfigure}{.4\textwidth}
% %   \centering
% %   \includegraphics[width=\textwidth]{plots/lin_sep_dataset/Assumptions/dataset.pdf}
% %   \caption{}
% % \end{subfigure}%
% \begin{subfigure}{.5\textwidth}
%   \centering
%   \includegraphics[width=\textwidth]{plots/lin_sep_dataset/Assumptions/grad_conv.pdf}
%   \caption{}
%   %\label{fig:test2}
% \end{subfigure}
% \begin{subfigure}{.5\textwidth}
%   \centering
%   \includegraphics[width=\textwidth]{plots/lin_sep_dataset/Assumptions/EWN_grad_lr_norm.pdf}
%   \caption{}
%   %\label{fig:test2}
% \end{subfigure}
% \caption{\textbf{Verification of Assumptions for EWN in \texttt{Lin-Sep} experiment: } (a) Evolution of $\frac{\nabla_{\w} \L(t)}{\| \nabla_{\w} \L(t) \|}$ (b) Evolution of $\eta(t) \| \w_u(t) \| \| \nabla_{\w_u} \L(t) \|$}
% \label{assum_lin_sep}
% \end{figure}

\begin{figure}[t]
\floatconts{SWN:assum_lin_sep}{\caption{\textbf{Verification of Assumptions for SWN in \texttt{Lin-Sep} experiment: } (a) Evolution of $\frac{\nabla_{\w} \L(t)}{\| \nabla_{\w} \L(t) \|}$ (b) Evolution of $\eta(t) \| \w_u(t) \| \| \nabla_{\w_u} \L(t) \|$}}  
% The three graphs are plotted at loss values of $e^{-200}, e^{-250}$ and $e^{-300}$ respectively. At each loss value, for the 3 weights, $ \log \| \nabla_{\w_u} \L\| - \log \| \w_u \|$ is approximately same.}}
{
% \subfigure[Dataset]{
%   \includegraphics[width=0.4\textwidth]{plots/lin_sep_dataset/SWN/Assumptions/dataset.pdf}
% }
% \subfigure[$\L = e^{-10}$]{
%   \includegraphics[width=0.3\textwidth]{plots/MNIST_convergence/MNIST_conv_rate_-10_2356.pdf}
% }
% \subfigure{
%   \includegraphics[width=0.4\textwidth]{plots/lin_sep_dataset/SWN/Theorems/l_tilde_grad.pdf}
% }
\subfigure{
  \includegraphics[width=0.4\textwidth]{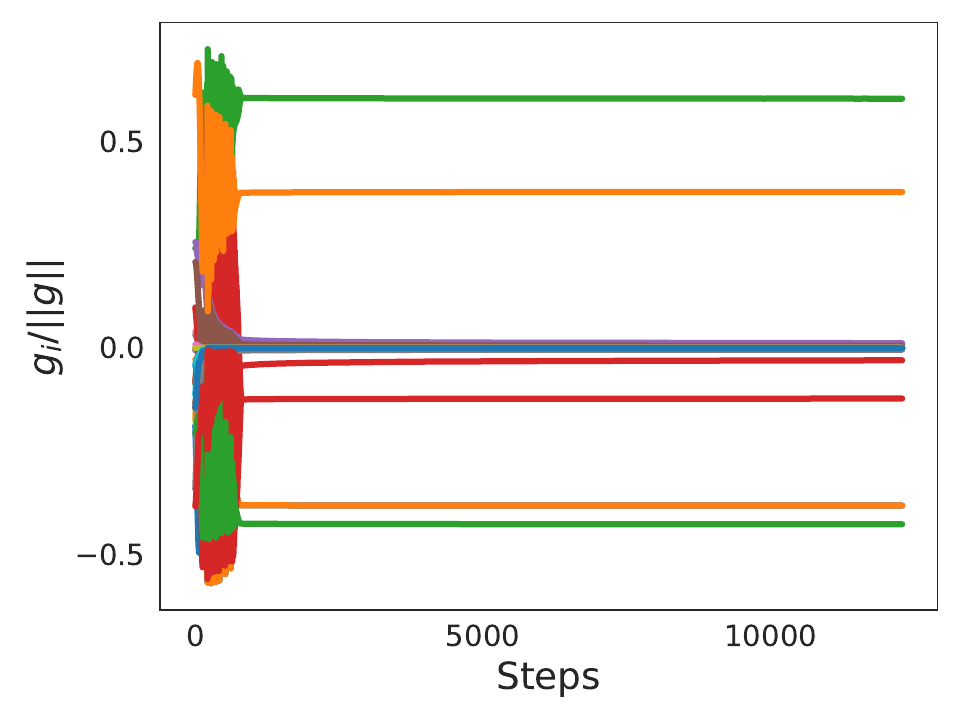}
}
% \subfigure{
%   \includegraphics[width=0.3\textwidth]{plots/lin_sep_dataset/SWN/Theorems/weight_grad_ratios_200.pdf}
% }
% \subfigure{
%   \includegraphics[width=0.3\textwidth]{plots/lin_sep_dataset/SWN/Theorems/weight_grad_ratios_250.pdf}
% }
\subfigure{
  \includegraphics[width=0.4\textwidth]{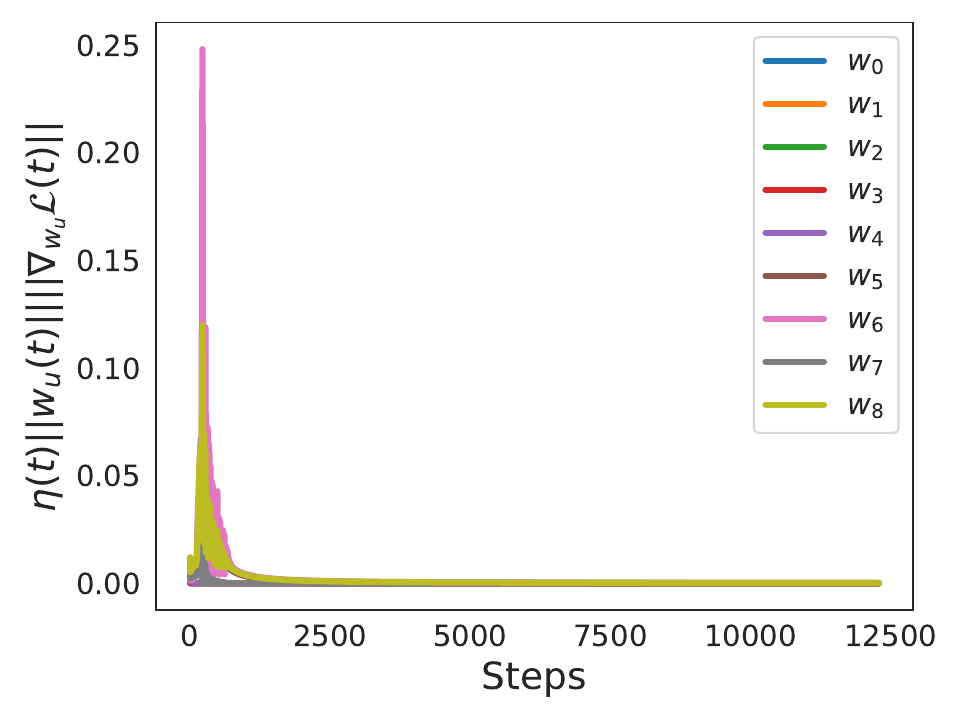}
}
}
\end{figure}

\begin{figure}[t]
\floatconts{EWN:assum_XOR}{\caption{\textbf{Verification of Assumptions for EWN in \texttt{XOR} experiment with ReLU-square activation: } (a) Evolution of $\frac{\nabla_{\w} \L(t)}{\| \nabla_{\w} \L(t) \|}$ (b) Evolution of $\eta(t) \| \w_u(t) \| \| \nabla_{\w_u} \L(t) \|$}}  
% The three graphs are plotted at loss values of $e^{-200}, e^{-250}$ and $e^{-300}$ respectively. At each loss value, for the 3 weights, $ \log \| \nabla_{\w_u} \L\| - \log \| \w_u \|$ is approximately same.}}
{
% \subfigure[Dataset]{
%   \includegraphics[width=0.4\textwidth]{plots/lin_sep_dataset/SWN/Assumptions/dataset.pdf}
% }
% \subfigure[$\L = e^{-10}$]{
%   \includegraphics[width=0.3\textwidth]{plots/MNIST_convergence/MNIST_conv_rate_-10_2356.pdf}
% }
% \subfigure{
%   \includegraphics[width=0.4\textwidth]{plots/lin_sep_dataset/SWN/Theorems/l_tilde_grad.pdf}
% }
\subfigure{
  \includegraphics[width=0.4\textwidth]{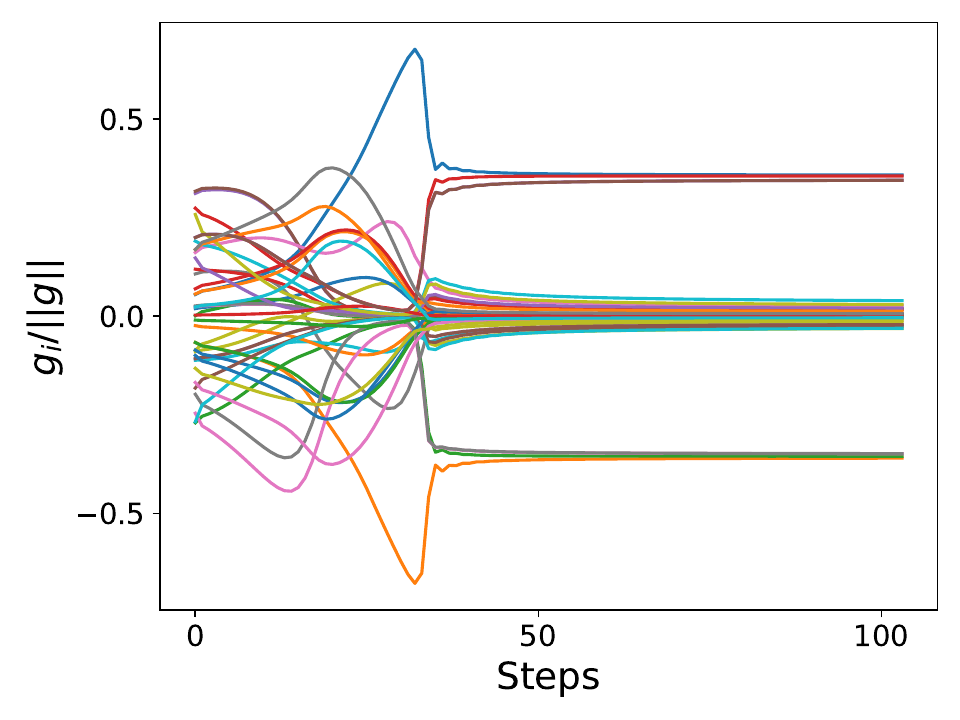}
}
% \subfigure{
%   \includegraphics[width=0.3\textwidth]{plots/lin_sep_dataset/SWN/Theorems/weight_grad_ratios_200.pdf}
% }
% \subfigure{
%   \includegraphics[width=0.3\textwidth]{plots/lin_sep_dataset/SWN/Theorems/weight_grad_ratios_250.pdf}
% }
\subfigure{
  \includegraphics[width=0.4\textwidth]{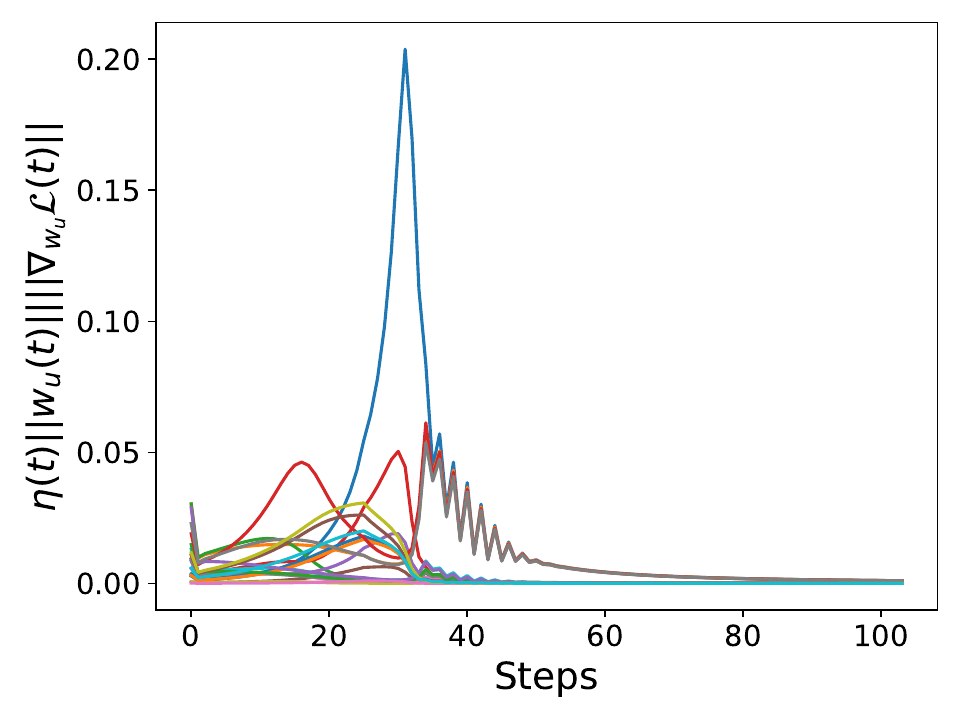}
}
}
\end{figure}

\begin{figure}[t]
\floatconts{SWN:assum_XOR}{\caption{\textbf{Verification of Assumptions for SWN in \texttt{XOR} experiment with ReLU-square activation: } (a) Evolution of $\frac{\nabla_{\w} \L(t)}{\| \nabla_{\w} \L(t) \|}$ (b) Evolution of $\eta(t) \| \w_u(t) \| \| \nabla_{\w_u} \L(t) \|$}}  
% The three graphs are plotted at loss values of $e^{-200}, e^{-250}$ and $e^{-300}$ respectively. At each loss value, for the 3 weights, $ \log \| \nabla_{\w_u} \L\| - \log \| \w_u \|$ is approximately same.}}
{
% \subfigure[Dataset]{
%   \includegraphics[width=0.4\textwidth]{plots/lin_sep_dataset/SWN/Assumptions/dataset.pdf}
% }
% \subfigure[$\L = e^{-10}$]{
%   \includegraphics[width=0.3\textwidth]{plots/MNIST_convergence/MNIST_conv_rate_-10_2356.pdf}
% }
% \subfigure{
%   \includegraphics[width=0.4\textwidth]{plots/lin_sep_dataset/SWN/Theorems/l_tilde_grad.pdf}
% }
\subfigure{
  \includegraphics[width=0.4\textwidth]{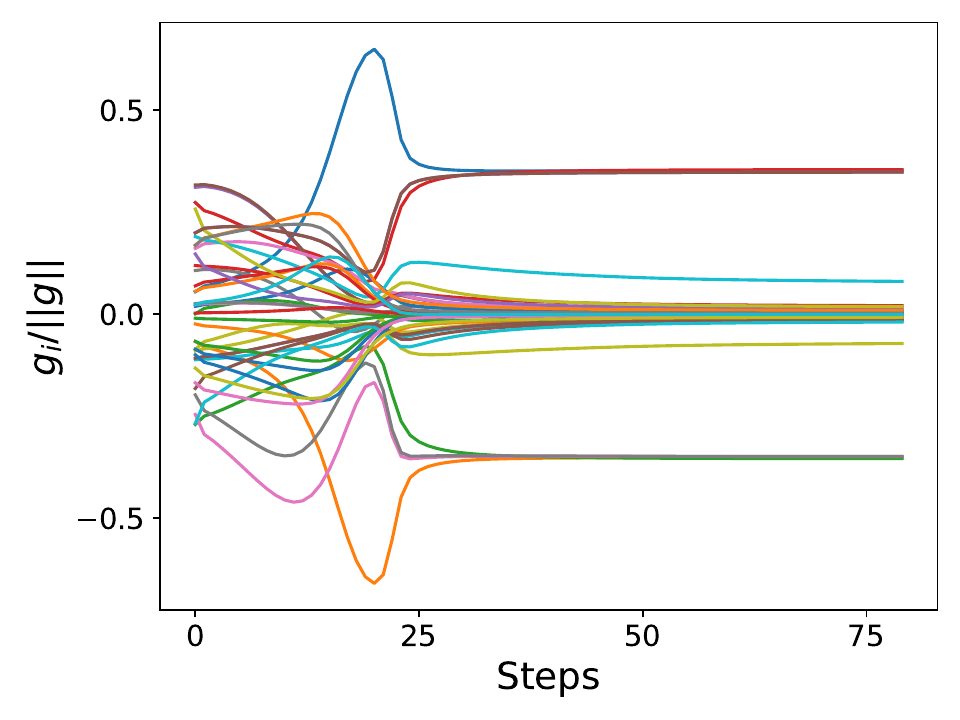}
}
% \subfigure{
%   \includegraphics[width=0.3\textwidth]{plots/lin_sep_dataset/SWN/Theorems/weight_grad_ratios_200.pdf}
% }
% \subfigure{
%   \includegraphics[width=0.3\textwidth]{plots/lin_sep_dataset/SWN/Theorems/weight_grad_ratios_250.pdf}
% }
\subfigure{
  \includegraphics[width=0.4\textwidth]{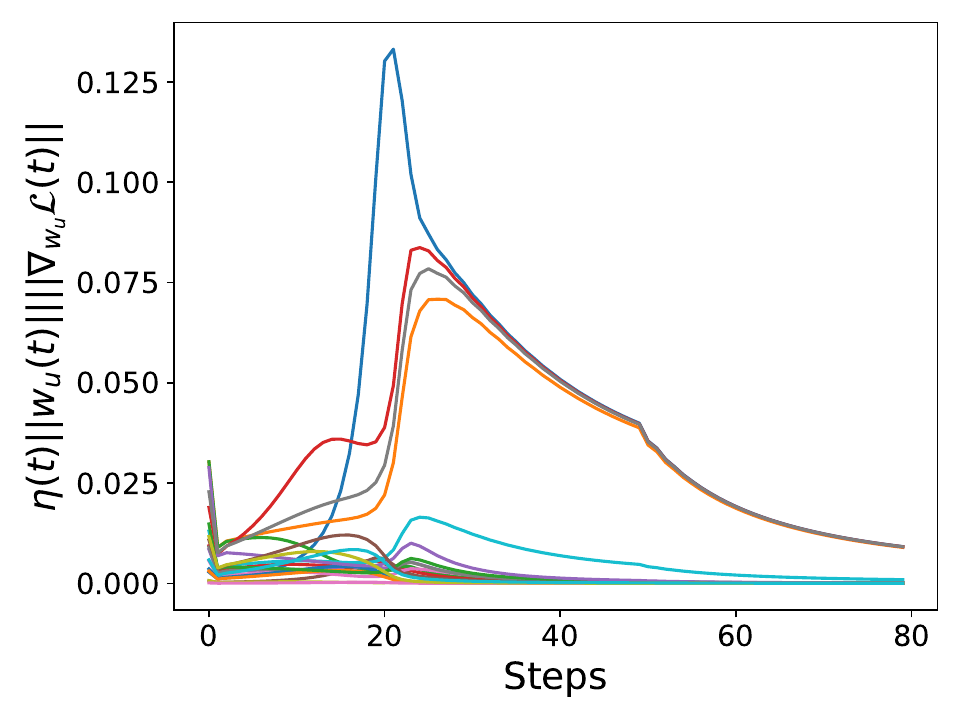}
}
}
\end{figure}

\begin{figure}[t]
\floatconts{EWN:assum_MNIST}{\caption{\textbf{Verification of Assumptions for EWN on MNIST dataset with ReLU-square activation: } (a) Evolution of $\frac{\nabla_{\w} \L(t)}{\| \nabla_{\w} \L(t) \|}$ (b) Evolution of $\eta(t) \| \w_u(t) \| \| \nabla_{\w_u} \L(t) \|$}}  
% The three graphs are plotted at loss values of $e^{-200}, e^{-250}$ and $e^{-300}$ respectively. At each loss value, for the 3 weights, $ \log \| \nabla_{\w_u} \L\| - \log \| \w_u \|$ is approximately same.}}
{
% \subfigure[Dataset]{
%   \includegraphics[width=0.4\textwidth]{plots/lin_sep_dataset/SWN/Assumptions/dataset.pdf}
% }
% \subfigure[$\L = e^{-10}$]{
%   \includegraphics[width=0.3\textwidth]{plots/MNIST_convergence/MNIST_conv_rate_-10_2356.pdf}
% }
% \subfigure{
%   \includegraphics[width=0.4\textwidth]{plots/lin_sep_dataset/SWN/Theorems/l_tilde_grad.pdf}
% }
\subfigure{
  \includegraphics[width=0.4\textwidth]{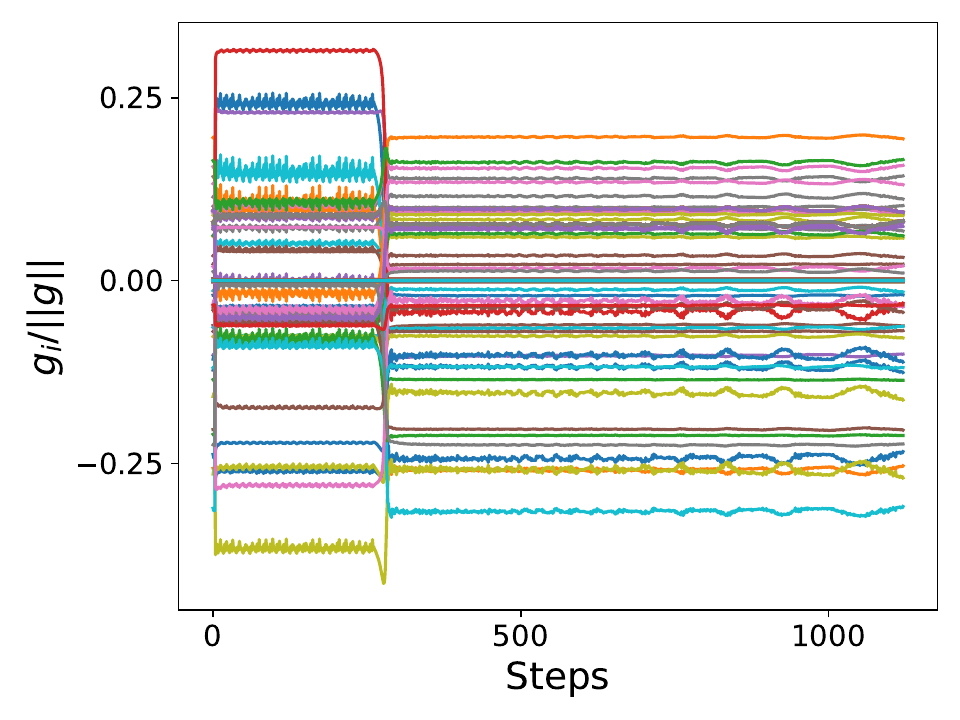}
}
% \subfigure{
%   \includegraphics[width=0.3\textwidth]{plots/lin_sep_dataset/SWN/Theorems/weight_grad_ratios_200.pdf}
% }
% \subfigure{
%   \includegraphics[width=0.3\textwidth]{plots/lin_sep_dataset/SWN/Theorems/weight_grad_ratios_250.pdf}
% }
\subfigure{
  \includegraphics[width=0.4\textwidth]{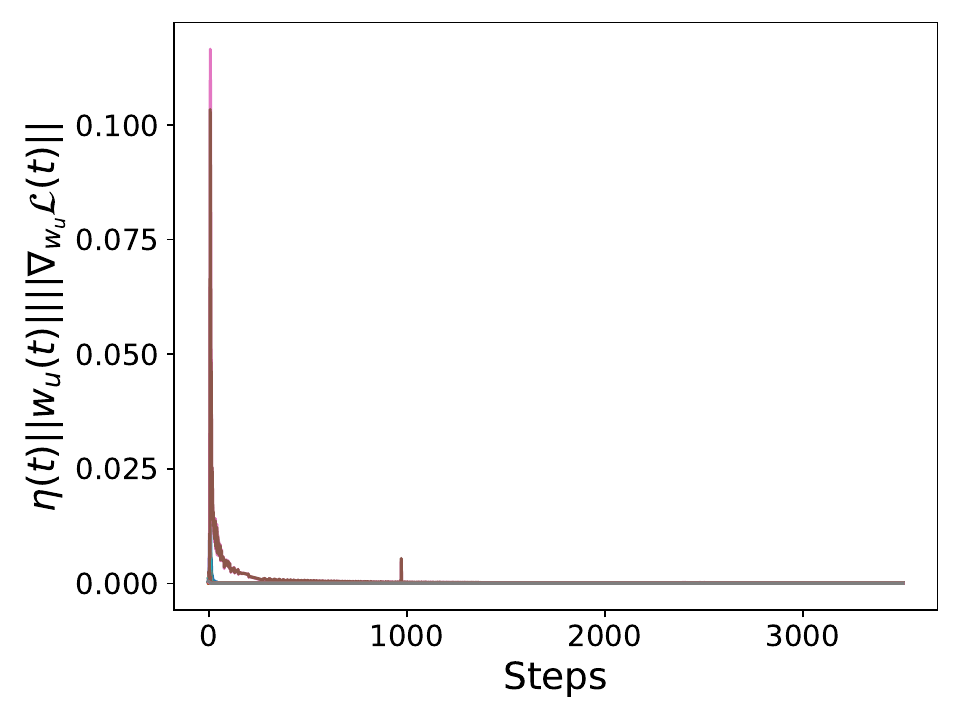}
}
}
\end{figure}

\section{Pruning algorithm} \label{pruning:algo}
We will explain the pruning algorithm used in Figure \ref{pruning}b. The same algorithm is used for SWN, EWN as well as the unnormalized net.

Let $t_1$ and $t_2$ denote the optimization iteration indices when log-loss has a value of $-10$ and $-100$ respectively. Consider three pruning strategies available at the time instant given by $t_2$:
\begin{itemize}
    \item Prune weights on the basis of $\| \w_u(t_2) \|$
    \item Prune weights on the basis of $\| \w_u(t_2) - \w_u(0)\|$
    \item Prune weights on the basis of $\| \w_u(t_2) - \w_u(t_1)\|$ 
\end{itemize}

For a given level of pruning, we try all the 3 strategies, and then pick the one with the best test performance.

Variants of option (a) are the most prevalent pruning algorithms. The non-standard options of (b) and (c) represent the intuition of pruning neurons whose weight has not moved in the recent past. This intuition is in turn motivated by the asymptotic behaviour of the inductive bias, i.e. the neuron weight norms become ‘relatively sparse’ in the ‘limit’.

In fact, if we prune the network based on the final L2 norm of the weights (strategy (a)), we get a similar pruning performance for EWN as the current pruning algorithm at a loss value of $e^{-100}$ and $e^{-300}$. However, the pruning performance of EWN at the loss value of $e^{-10}$ drops. We believe this is because when the loss values are less than $e^{-100}$ the network is already in the asymptotic regime and hence the difference between using the three pruning strategies is not significant. However, when the loss values are around $e^{-10}$, the network is not in the asymptotic regime, and a difference emerges between the three pruning strategies.

\end{document}